\documentclass{article}




\usepackage[final]{neurips_2022}


\usepackage[utf8]{inputenc} 
\usepackage[T1]{fontenc}    
\usepackage{hyperref}       
\usepackage{url}            
\usepackage{booktabs}       
\usepackage{amsfonts}       
\usepackage{nicefrac}       
\usepackage{microtype}      
\usepackage{xcolor}         
\usepackage{authblk}
\usepackage{amsmath}
\usepackage{amsthm}
\usepackage{wrapfig}
\usepackage{hyperref}
\usepackage{algpseudocode}
\usepackage{url}
\usepackage{graphicx}
\usepackage{cancel}
\usepackage[rightcaption]{sidecap}
\usepackage{color,soul}

\newtheorem{theorem}{Theorem} 
\newtheorem{theorem*}{Theorem}
\newtheorem{corollary}{Corollary}
\usepackage{color, colortbl}
\definecolor{Gray}{gray}{0.8}
\definecolor{LightCyan}{rgb}{0.88,1,1}
\definecolor{green(html/cssgreen)}{rgb}{0.0, 0.5, 0.0}

\hypersetup{
    colorlinks=true,
    linkcolor=black,
    citecolor=black,      
    urlcolor=blue
}
\usepackage{caption} \captionsetup[table]{skip=10pt}
\usepackage{array} \setlength\extrarowheight{1pt}

\newcommand{\bx}{\mathbf{x}}

\newcommand{\bP}{\mathbf{P}}
\newcommand{\bM}{\mathbf{M}}
\newcommand{\bD}{\mathbf{D}}
\newcommand{\bW}{\mathbf{W}}
\newcommand{\bR}{\mathbf{R}}
\title{RNNs of RNNs: \\ Recursive Construction of Stable Assemblies of Recurrent Neural Networks}

%

\author[1,*]{\textbf{Leo Kozachkov}}
\author[2,*]{\textbf{Michaela Ennis}}
\author[1,3,4]{\textbf{Jean-Jacques Slotine}}

\affil[1]{Department of Brain and Cognitive Sciences, Massachusetts Institute of Technology}
\affil[2]{Division of Medical Sciences, Harvard University}
\affil[3]{Department of Mechanical Engineering, Massachusetts Institute of Technology}
\affil[4]{Google AI}
\affil[*]{Equal contribution}
\affil[ ]{\texttt{\{leokoz8,mennis,jjs\}@mit.edu}}

\begin{document}

\maketitle

\begin{abstract}
Recurrent neural networks (RNNs) are widely used throughout neuroscience as models of local neural activity. Many properties of single RNNs are well characterized theoretically, but experimental neuroscience has moved in the direction of studying multiple interacting \textit{areas}, and RNN theory needs to be likewise extended. We take a constructive approach towards this problem, leveraging tools from nonlinear control theory and machine learning to characterize when combinations of stable RNNs will themselves be stable. Importantly, we derive conditions which allow for massive feedback connections between interacting RNNs. We parameterize these conditions for easy optimization using gradient-based techniques, and show that stability-constrained `networks of networks' can perform well on challenging sequential-processing benchmark tasks. Altogether, our results provide a principled approach towards understanding distributed, modular function in the brain.
\end{abstract}

\addtocontents{toc}{\protect\setcounter{tocdepth}{0}}

\section{Introduction}
The combination and reuse of primitive ``modules" has enabled a great deal of progress in computer science, engineering, and biology. Modularity is particularly apparent in the structure of the brain, as different parts are specialized for different functions \citep{kandel2000principles}. Accordingly, most experimental studies throughout the history of neuroscience have focused on a single brain area in association with a single behavior \citep{Abbott_Svoboda_2020}. Similarly, RNN models of brain function have mostly been limited to a single RNN modeling a single area. However, neuroscience is entering an age where recording from many different brain areas simultaneously during complex behaviors is possible. As experimental neuroscience has shifted towards multi-area recordings, computational techniques for analyzing, modeling, and interpreting these multi-area recordings have blossomed \citep{michaels2020goal,mashour2020conscious,Abbott_Svoboda_2020, Perich_2021,semedo2019cortical,yang2021towards,Machado_Kauvar_Deisseroth_2022}. Despite this, RNN theory has lagged behind. 

The theoretical question of RNN stability is crucial for understanding information propagation and manipulation \citep{vogt2020lyapunov,engelken2020lyapunov,kozachkov2022robust}. The conditions under which single, autonomous RNNs are chaotic or stable are well-studied, in particular when the RNN weights are randomly chosen and the number of neurons tends to infinity \citep{sompolinsky1988chaos,engelken2020lyapunov}. However, there is very little work addressing the theoretical question of stability in `networks of networks'. Two facts make this question challenging. Firstly, connecting two stable systems does not, in general, lead to a stable overall system. This is true even for linear systems. Secondly, there is a massive amount of feedback between brain areas, so one cannot reasonably assume near-decomposability \citep{Simon_1962,Abbott_Svoboda_2020}. 

Given that the brain seems to dynamically reorganize and adapt interareal connectivity to meet task demands and environmental constraints \citep{miller2001integrative,sych2022dynamic}, this question of how stability is maintained is of the utmost importance. Here we take a bottom-up approach, more specifically asking ``what stability properties of the individual modules lend themselves to rapid reorganization?"

\subsection{Contraction Analysis}\label{subsection:contraction-intro}
We focus on a special type of stability, known as contractive stability \citep{lohmiller1998contraction}. Loosely, a contracting system is a dynamical system that forgets its initial conditions exponentially quickly. Contractive stability is a strong form of dynamical stability which implies many other forms of stability, such as certain types of input-to-state stability \citep{sontag2010contractive}. See Section \ref{Appendix:Supplementary Math:supp_math} for a mathematical primer on contraction analysis. 

Contraction analysis has found wide application in nonlinear control theory \citep{manchester2017control}, synchronization \citep{pham2007stable}, and robotics \citep{chung2009cooperative}, but has only recently begun to find application in neuroscience and machine learning \citep{boffi2020learning,wensing2020beyond,kozachkov2020achieving,revay2020contracting,jafarpour2021robust,kozachkov2022robust, centorrino2022contraction,burghi2022distributed,kozachkov2022generalization}. Contraction analysis is useful for neuroscience because it is directly applicable to systems with external inputs. It also allows for modular stability-preserving \emph{combination} properties to be derived (Figure \ref{fig: Cartoon}). The resulting contracting combinations can involve individual systems with different dynamics, as long as those dynamics are contracting \citep{slotine2001combos,modular}.

Moreover, modular stability and specifically contractive stability have relevance to evolutionary biology~\citep{Simon_1962, slotine2001combos}. In particular, it is thought that the majority of traits that have developed over the last 400+ million years are the result of evolutionary forces acting on regulatory elements that combine core components, rather than mutations in the core components themselves. This mechanism of action makes meaningful variation in population phenotypes much more feasible to achieve, and is appropriately titled ``facilitated variation" \citep{gerhart2007fv}. In addition to the biological evidence for facilitated variation, computational models have demonstrated that this approach produces populations which are better able to generalize to new environments \citep{parter2008fv}, an ability that will be critical to further develop in deep learning systems. However, the tractability of these evolutionary processes hinges on some mechanism for ensuring stability of combinations. Because contraction analysis tools allow complicated contracting systems to be built up recursively from simpler elements, this form of stability would be well suited for biological systems \citep{slotine2012links}. Our work with the Sparse Combo Net in Section \ref{section:experiments-intro} has direct parallels to facilitated variation, in that we train this combination network architecture \textit{only} through training connections between contracting subnetworks.  

\begin{figure}[h]
\centering
\includegraphics[width=0.95\textwidth,keepaspectratio]{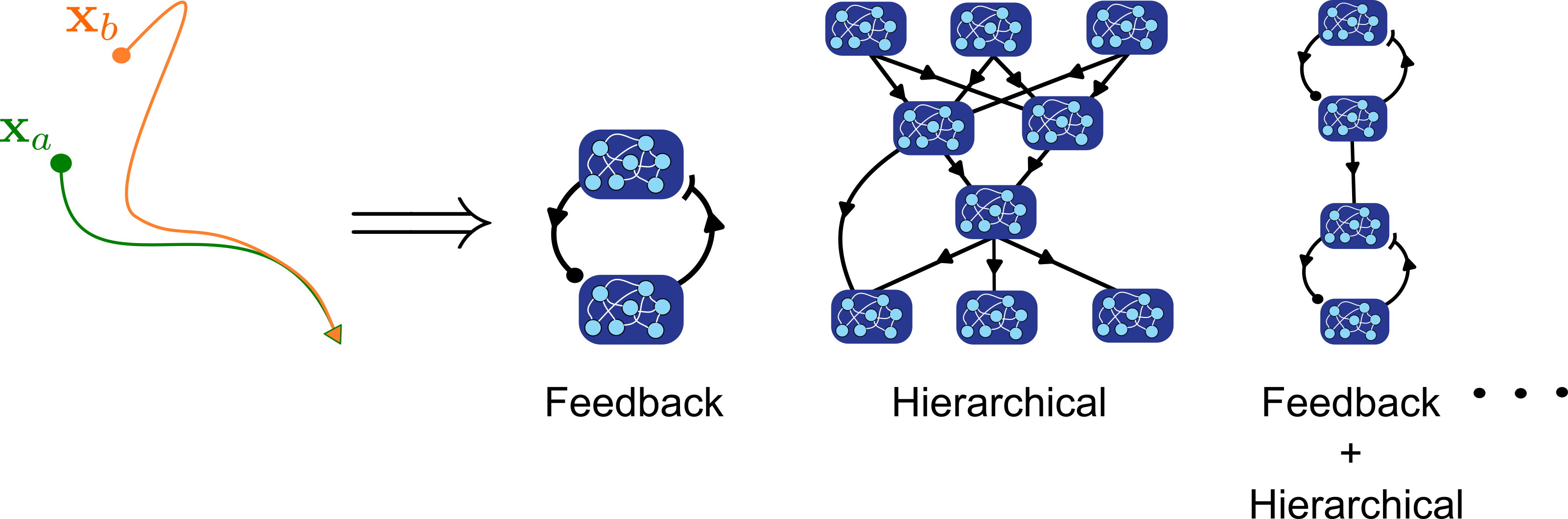}
\caption{Contractive stability implies a modularity principle. Because contraction analysis tools allow complicated contracting systems to be built up recursively from simpler elements, this form of stability is well suited for understanding biological systems. Contracting combinations can be made between systems with very different dynamics, as long as those dynamics are contracting.}
\label{fig: Cartoon}
\end{figure}

Ultimately, our contributions are three-fold: 
\begin{itemize}
    \setlength\itemsep{.1em}
    \item A novel parameterization for feedback combination of contracting RNNs that enables direct optimization using standard deep learning libraries.
    \item  Novel contraction conditions for continuous-time nonlinear RNNs, to use in conjunction with the combination condition. We also identify flaws in stability proofs from prior literature. 
    \item Experiments demonstrating that our `network of networks' sets a new state of the art for stability-constrained RNNs on benchmark sequential processing tasks. 
\end{itemize}

\section{Network of Networks Model}\label{section:model-math}
In this paper we analyze rate-based neural networks. Unlike spiking neural networks, these models are continuous and smooth. We consider the following RNN introduced by \cite{Wilson_Cowan_1972}, which may be viewed as an approximation to a more biophysically-detailed spiking network: 
\begin{equation}\label{eq:RNN}
\begin{split}
\tau \dot{\mathbf{x}} = -\mathbf{x} + \mathbf{W}\phi(\mathbf{x}) + \mathbf{u}(t)
\end{split}
\end{equation}
Here $\tau > 0$ is the time-constant of the network \citep{dayan2005theoretical}, and the vector $\mathbf{x} \in \mathbb{R}^n$ contains the voltages of all $n$ neurons in the network. The voltages are converted into firing-rates through a static nonlinearity $\phi$. We only consider monotonic activation functions with bounded slope: in other words, $0 \leq \phi' \leq g$ (unless otherwise noted). We do not restrain the sign of the nonlinearity. Common example nonlinearities $\phi(x)$ that satisfy these constraints are hyperbolic tangent and ReLU. The matrix $\mathbf{W} \in \mathbb{R}^{n \times n}$ contains the synaptic weights of the RNN. It is this matrix that ultimately determines the stability of \eqref{eq:RNN}, and will be a main target for our analysis. Finally, $\mathbf{u}(t)$ is the potentially time-varying external input into the network, capturing both explicit input into the RNN from the external world, as well as unmodeled dynamics from other brain areas. In the context of training, the time-varying inputs $\mathbf{u}(t)$ come from the task. For example, if the task is sequential image recognition, then the input will be a sequence of pixel intensities. The output of our network similarly depends on the task. In classification settings, one may choose to use a softmax output layer; in regression, one may choose a linear readout. 

Note that \eqref{eq:RNN} is equivalent to another commonly used class of RNNs where the term $\mathbf{W}\mathbf{x} + \mathbf{u}$ appears inside the nonlinearity. See Section \ref{Appendix:Supplementary Math:two_different_rnns} or \citep{miller2012mathematical} for details. Our mathematical results apply equally well to both types of RNNs.

In order to extend \eqref{eq:RNN} into a model of multiple interacting neural populations, we introduce the index $i$, which runs from $1$ to $p$, where $p$ is the total number of RNNs in the collection of RNNs. For now we will assume linear interactions between RNNs, because this is the simplest case. The linearity assumption can also be motivated by the fact that RNNs have been found to be well-approximated by linear systems in many neuroscience contexts \citep{sussillo2013opening,langdon2022latent}. This leads to the following equation for the `network of networks':
\begin{equation}\label{eq:combo_RNN}
\begin{split}
\tau \dot{\mathbf{x}}_i = -\mathbf{x}_i + \mathbf{W}_i\phi(\mathbf{x}_i) + \sum_{j=1}^{p} \mathbf{\mathbf{L}}_{ij} \mathbf{x}_j + \mathbf{u}_i(t) \hspace{.5cm}  \forall i = 1 \cdots p
\end{split}
\end{equation}
where the matrix $\mathbf{L}_{ij}$ captures the interaction between RNN $i$ and RNN $j$. If RNN $i$ has $N_i$ neurons and RNN $j$ has $N_j$ neurons, then $\mathbf{L}_{ij}$ is an $N_i \times N_j$ matrix. 

We can now formalize the question posed in the introduction. Namely, "what types of connections between stable RNNs automatically preserve stability?" becomes "what restrictions on $\mathbf{\mathbf{L}}_{ij}$ must be met in order to ensure overall stability of the network of networks?". We will now derive two combination `primitives', negative feedback and hierarchical, which allow for recursive connection of contracting modules. 

\subsection{Generalized Negative Feedback Between RNNs Preserves Stability}
We set aside for a moment the problem of determining when \eqref{eq:RNN} is contracting. For now, assume that we have a collection of $p$ contracting RNNs interacting through equation \eqref{eq:combo_RNN}. Recall that contraction is defined with respect to a \textit{metric}, a way of measuring distances between trajectories in state space. Thus, the $i$th RNN is contracting with respect to some metric $\mathbf{M}_i$. We assume for simplicity that this metric is constant, which means that $\mathbf{M}_i$ is simply a symmetric, positive definite matrix. In the case where every RNN receives feedback from every other, we can preserve stability by ensuring these connections are negative feedback. In the simplest case where all RNN modules are contracting in the identity metric, the negative feedback may be written as:
\[\mathbf{L}_{ij} = -\mathbf{L}^T_{ji} \]
This is a well known result from the contraction analysis literature \citep{modular}. Our first novel contribution is a generalization and parameterization of this negative feedback which allows for direct optimization using gradient-based techniques. In particular, if each $\mathbf{L}_{ij}$ is parameterized as follows:
\begin{equation}\label{eq:feedback_combo}
\begin{split}
\mathbf{L}_{ij} = \mathbf{B}_{ij} - \mathbf{M}^{-1}_i\mathbf{B}^T_{ji}\mathbf{M}_j \hspace{.5cm} \forall i,j
\end{split}
\end{equation}
for arbitrary matrix $\mathbf{B}_{ij}$, then the overall network of networks retains the assumed contraction properties of the RNN subnetworks. We provide a detailed proof in Section \ref{subsection:feedback_combo_proof}, but the basic idea relies on ensuring  skew-symmetry of $\mathbf{L}_{ij}$ \textit{in the appropriate metric}. This can be achieved via the constraint:
\[\mathbf{M}_i\mathbf{L}_{ij} = -\mathbf{L}^T_{ji}\mathbf{M}_j \]
Plugging \eqref{eq:feedback_combo} into the above expression verifies that it is indeed satisfied. Because contraction analysis relies on analyzing the symmetric part of Jacobian matrices, the skew-symmetry of $\mathbf{L}_{ij}$ `cancels out' when  computing the symmetric part, and leaves the stability of the subnetwork RNNs untouched. In the remainder of the paper, in the experimental sections, we will use feedback combinations constrained by \eqref{eq:feedback_combo}. However, it is possible to significantly generalize this condition (see \ref{subsection:feedback_combo_proof} for details). It is also possible to replace the linear feedback between modules with saturating nonlinearities, by using the "small-gain" results in \citet{slotine2001combos} and \citet{tabareau2006notes}. In the remainder of the paper, we optimize the $\mathbf{B}_{ij}$ matrices directly using backpropagation and the Adam optimizer implemented in PyTorch \citep{kingma2014adam,paszke2019pytorch}. 

\paragraph{Recursive Properties of Contracting Combinations}
The feedback combination \eqref{eq:feedback_combo}, taken together with hierarchical combinations, may be used as combination primitives for recursively constructing complicated networks of networks while automatically maintaining stability.  The recursion comes from the fact that once a modular system is shown to be contracting it may be treated as a single contracting system, which may in turn be combined with other contracting systems, \textit{ad infinitum}. Note that while we require linear feedback connections \eqref{eq:feedback_combo}, hierarchical interareal connections may be nonlinear \citep{lohmiller1998contraction}.

\section{Various Ways to Achieve Local
Contraction}\label{section:single-rnn}
In this section we return to the question of achieving contraction in the subnetwork RNNs. Recall that we wish to find restrictions on $\mathbf{W}_i$ such that the $i$th subnetwork RNN, described by \eqref{eq:RNN}, is contracting. Here we derive five such novel conditions (see Section \ref{Appendix:Proofs} for detailed proofs). As we will discuss, not all contraction conditions are equally useful - for example some conditions are easier to optimize or have higher model capacity than others. In this section we also point out some flaws in existing stability proofs in the RNN literature, and suggest some pathways towards correcting them. \\

\vspace{0.1cm}
\begin{theorem}[Absolute Value Restricted Weights]
\label{theorem: absolutevaluetheorem}
Let $|\mathbf{W}|$ denote the matrix formed by taking the element-wise absolute value of $\mathbf{W}$.  If there exists a positive, diagonal $\mathbf{P}$ such that:
\[\mathbf{P}(g|\mathbf{W}|-\mathbf{I}) +(g|\mathbf{W}|-\mathbf{I})^T\mathbf{P} \prec 0 \]
with $g$ being the maximum slope of $\phi$, then \eqref{eq:RNN} is contracting in metric $\mathbf{P}$. If $W_{ii} \leq 0$, then $|W|_{ii}$ may be set to zero to reduce conservatism.
\end{theorem}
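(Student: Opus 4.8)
The plan is to use the standard differential characterization of contraction: system \eqref{eq:RNN} contracts in a constant metric $\bP \succ 0$ provided the symmetric part of $\bP$ times the Jacobian is uniformly negative definite, i.e. there exists $\epsilon > 0$ with $\bP\mathbf{J} + \mathbf{J}^T\bP \preceq -\epsilon\mathbf{I}$ for all states and times. So the first step is just to compute the Jacobian of the right-hand side of \eqref{eq:RNN}. Writing $\bD(\bx) = \mathrm{diag}(\phi'(x_1),\dots,\phi'(x_n))$, the Jacobian is $\mathbf{J} = \tau^{-1}\big(-\mathbf{I} + \bW\bD(\bx)\big)$; since $\tau > 0$ is a positive scalar it does not affect definiteness, so the inequality I must certify reduces to $\bP\bW\bD + \bD\bW^T\bP - 2\bP \prec 0$, uniformly over all admissible $\bD$.

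The main obstacle, and indeed the whole point of the theorem, is that $\bD$ is state-dependent: each entry $\phi'(x_i)$ ranges over the entire interval $[0,g]$ as $\bx$ varies, so I must establish the inequality simultaneously for \emph{every} diagonal $\bD$ with entries in $[0,g]$ --- an infinite family --- using only the single fixed matrix inequality in the hypothesis. The key device is a sign-absorbing bound on the quadratic form. For any vector $\mathbf{v}$, let $\mathbf{w} = |\mathbf{v}|$ denote its entrywise absolute value; then, expanding componentwise and using $p_i > 0$, $0 \le \phi'(x_j) \le g$, and the triangle inequality to pull every sign inside an absolute value,
\[
\begin{aligned}
\mathbf{v}^T\big(\bP\bW\bD + \bD\bW^T\bP\big)\mathbf{v}
  &= 2\sum_{i,j} p_i\, v_i\, W_{ij}\, \phi'(x_j)\, v_j \\
  &\le 2g\sum_{i,j} p_i\, |v_i|\, |W_{ij}|\, |v_j|
   = \mathbf{w}^T\big(g\bP|\bW| + g|\bW|^T\bP\big)\mathbf{w}.
\end{aligned}
\]
Crucially, the right-hand side no longer depends on $\bD$, which is precisely what collapses the infinite family of conditions onto the single hypothesis.

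Now I invoke the hypothesis. The strict inequality $\bP(g|\bW|-\mathbf{I}) + (g|\bW|-\mathbf{I})^T\bP \prec 0$ on a fixed matrix supplies a uniform margin $\epsilon > 0$ with $g\bP|\bW| + g|\bW|^T\bP - 2\bP \preceq -\epsilon\mathbf{I}$. Because $\bP$ is diagonal, $\mathbf{w}^T(2\bP)\mathbf{w} = \mathbf{v}^T(2\bP)\mathbf{v}$ and $\|\mathbf{w}\| = \|\mathbf{v}\|$, so chaining the displayed bound with the hypothesis yields $\mathbf{v}^T\big(\bP\bW\bD + \bD\bW^T\bP - 2\bP\big)\mathbf{v} \le -\epsilon\|\mathbf{v}\|^2$ for every $\mathbf{v}$ and every admissible $\bD$. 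This is exactly the required uniform negative definiteness of $\bP\mathbf{J} + \mathbf{J}^T\bP$, whence \eqref{eq:RNN} contracts in metric $\bP$ (at rate $\epsilon/(\tau\,\lambda_{\max}(\bP))$).

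Finally, for the refinement: whenever $W_{ii} \le 0$ the corresponding diagonal contribution $W_{ii}\,\phi'(x_i)\,v_i^2$ is already nonpositive, so instead of overestimating it by $g|W_{ii}|v_i^2$ I may simply discard it in the bound above --- which is precisely the assertion that $|W|_{ii}$ can be set to $0$. Doing so only shrinks the left-hand side of the hypothesis and therefore makes the condition easier to satisfy, reducing conservatism. I expect the only genuinely subtle point to be verifying that the margin $\epsilon$ is truly uniform in $\bD$, but this is automatic here since the dominating bound is $\bD$-free.
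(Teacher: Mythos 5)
Your proposal is correct and follows essentially the same route as the paper's proof: the paper uses the differential Lyapunov function $V = \delta\mathbf{x}^T\mathbf{P}\delta\mathbf{x}$ and bounds $\delta\mathbf{x}^T\mathbf{P}\mathbf{W}\mathbf{D}\,\delta\mathbf{x}$ componentwise by $g|\delta\mathbf{x}|^T\mathbf{P}|\mathbf{W}||\delta\mathbf{x}|$, exploiting $\delta\mathbf{x}^T\mathbf{P}\delta\mathbf{x} = |\delta\mathbf{x}|^T\mathbf{P}|\delta\mathbf{x}|$ for diagonal positive $\mathbf{P}$ --- exactly your sign-absorbing bound applied to the quadratic form of $\mathbf{P}\mathbf{J}+\mathbf{J}^T\mathbf{P}$, including the identical treatment of nonpositive diagonal entries $W_{ii}\le 0$. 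Your explicit handling of the uniform margin $\epsilon$ and the $\tau$-scaling is slightly more careful than the paper's write-up, but the argument is the same.
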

It is easy to find matrices that satisfy Theorem \ref{theorem: absolutevaluetheorem}, and given a matrix the condition is as easy to check as linear stability is. Moreover, the condition guarantees we can obtain a metric that the system is known to contract in (see Section \ref{section:init-info} for details). It is less straightforward to enforce this condition during training, however we found that subnetworks constrained by Theorem \ref{theorem: absolutevaluetheorem} can achieve high performance in practice by simply fixing $\mathbf{W}$ and only optimizing the connections \textit{between} subnetworks (Section \ref{section:main-results}). As there are fewer parameters to optimize, this training technique is faster. 

\vspace{0.2cm}
\begin{theorem}[Symmetric Weights]
\label{theorem: symmetricweightstheorem}
If $ \ \mathbf{W} = \mathbf{W}^T$ and $  \ g\mathbf{W} \prec \mathbf{I}$, and $ \ \phi' > 0$, then (\ref{eq:RNN}) is contracting.
\end{theorem}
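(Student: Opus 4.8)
The plan is to exhibit a \emph{constant} metric $\mathbf{M}\succ 0$ in which the Jacobian of \eqref{eq:RNN} has uniformly negative definite symmetric part. First I would write the Jacobian of the vector field (up to the factor $1/\tau$) as $\mathbf{J} = \tfrac{1}{\tau}(-\mathbf{I}+\mathbf{W}\mathbf{D})$, where $\mathbf{D}=\mathbf{D}(\mathbf{x})=\mathrm{diag}(\phi'(x_1),\dots,\phi'(x_n))$. The two standing hypotheses pin down $\mathbf{D}$: the global slope bound gives $\mathbf{D}\preceq g\mathbf{I}$, and the theorem's assumption $\phi'>0$ gives $\mathbf{D}\succ 0$ pointwise. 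Contraction in a constant metric then amounts to $\mathbf{M}\mathbf{J}+\mathbf{J}^T\mathbf{M}\preceq -2c\,\mathbf{M}$ holding uniformly over every admissible $\mathbf{D}$, i.e.\ over the box of diagonal matrices $\{\mathbf{D}: 0\preceq\mathbf{D}\preceq g\mathbf{I}\}$.

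The symmetry of $\mathbf{W}$ is what makes the spectral side clean, and I would record this first. Since $\mathbf{D}\succ 0$, congruence by $\mathbf{D}^{1/2}$ turns the (generally non-normal) matrix $\mathbf{W}\mathbf{D}$ into the symmetric matrix $\mathbf{D}^{1/2}\mathbf{W}\mathbf{D}^{1/2}$. Writing $g\mathbf{W}\prec\mathbf{I}$ as $\mathbf{W}\preceq\tfrac{1-\epsilon}{g}\mathbf{I}$ for some $\epsilon>0$ and applying the same congruence gives $\mathbf{D}^{1/2}\mathbf{W}\mathbf{D}^{1/2}\preceq\tfrac{1-\epsilon}{g}\mathbf{D}\preceq(1-\epsilon)\mathbf{I}$, where the last step uses $\mathbf{D}\preceq g\mathbf{I}$. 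Hence every frozen Jacobian $-\mathbf{I}+\mathbf{W}\mathbf{D}$ is similar to a symmetric matrix with spectrum $\le -\epsilon$, so it is Hurwitz with a uniform margin. This shows $g\mathbf{W}\prec\mathbf{I}$ is exactly the right hypothesis.

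The subtlety --- and the step I expect to be the main obstacle --- is that pointwise Hurwitzness of the Jacobian does not by itself imply contraction. The congruence above is really the \emph{state-dependent} metric $\mathbf{M}(\mathbf{x})=\mathbf{D}(\mathbf{x})$, which is inadmissible on two counts: it is not uniformly positive definite (eigenvalues of $\mathbf{D}$ approach $0$ as units saturate), and differentiating it along trajectories produces a term in $\dot{\mathbf{D}}$ (hence in $\phi''$ and $\dot{\mathbf{x}}$) of uncontrolled sign. So I would instead insist on a constant $\mathbf{M}$. Because the defining inequality is affine in the entries of $\mathbf{D}$, negative definiteness over the whole box is equivalent to negative definiteness at the finitely many vertices $\mathbf{D}\in\{0,g\}^n$, and I would produce a single $\mathbf{M}\succ 0$ meeting all of them via the S-procedure applied to the incremental sector condition $\psi_i(g\,e_i-\psi_i)\ge 0$, where $e_i$ and $\psi_i$ are the components of $\mathbf{x}_1-\mathbf{x}_2$ and $\phi(\mathbf{x}_1)-\phi(\mathbf{x}_2)$ (equivalently, a multivariable circle-criterion LMI for the Lur'e system $\tau\dot{\mathbf{e}}=-\mathbf{e}+\mathbf{W}(\phi(\mathbf{x}_1)-\phi(\mathbf{x}_2))$). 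The crux is that feasibility of this LMI --- the existence of the common metric --- should follow precisely from $\mathbf{W}=\mathbf{W}^T$ together with $g\mathbf{W}\prec\mathbf{I}$; without symmetry the vertex matrices need not share a Lyapunov function, which is exactly the general obstruction to combining stable systems noted in the introduction. Establishing this feasibility cleanly, ideally with an explicit $\mathbf{M}$ built from $\mathbf{W}$, is where the real work lies; contraction in that $\mathbf{M}$, and hence the theorem, follows immediately once it is in hand.
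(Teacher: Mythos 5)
Your setup is correct, and your diagnosis of the difficulty is exactly right: the congruence by $\mathbf{D}^{1/2}$ only shows each frozen Jacobian is Hurwitz, contraction requires a single metric valid over the whole box $0 \preceq \mathbf{D} \preceq g\mathbf{I}$, and your reduction of that box to its vertices is valid because the Lyapunov inequality is affine in $\mathbf{D}$. But the proposal stops precisely where the content of the theorem lies. You never exhibit $\mathbf{M}$, nor prove feasibility of the S-procedure/circle-criterion LMI; you write that feasibility ``should follow'' from $\mathbf{W}=\mathbf{W}^T$ and $g\mathbf{W}\prec\mathbf{I}$ and that establishing it ``is where the real work lies.'' That missing step \emph{is} the theorem: everything you do establish (Hurwitzness of the frozen Jacobians, vertex reduction) is equally available in cases where the conclusion is false. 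Indeed, Theorem \ref{theorem: Wdiagstabcounterexampletheorem} of the paper exhibits a $\mathbf{W}$ with $g\mathbf{W}_{sym}-\mathbf{I}\prec 0$ for which \emph{no} constant metric exists, so any feasibility argument must exploit symmetry quantitatively, and nothing in your proposal does so beyond the frozen-Jacobian spectral bound that you yourself correctly flag as insufficient. A further caveat: the S-procedure with $n$ incremental sector multipliers is only a sufficient relaxation, so even granting the reduction, its feasibility would still need a constructive argument.

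For comparison, here is what fills the gap in the paper's proof: choose $\beta = 2\lambda > 0$ with $g\mathbf{W} \preceq (1-\lambda)\mathbf{I}$, set $\gamma^2 = \frac{g}{2(2-\beta)}$, let $\mathbf{S} \succeq 0$ be the symmetric square root of $\frac{1}{4\gamma^2}\mathbf{I} - \mathbf{W}$ (this square root exists exactly because $\mathbf{W}$ is symmetric and $g\mathbf{W}\prec\mathbf{I}$), and define $\mathbf{R} = \frac{1}{\gamma}\mathbf{S} + \frac{1}{2\gamma^2}\mathbf{I}$, so that $\mathbf{W} = \mathbf{R} - \gamma^2\mathbf{R}\mathbf{R}$. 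The constant metric is then $\mathbf{M} = \gamma^2\mathbf{R}\mathbf{R} \succ 0$. Verification proceeds by writing the differential Lyapunov inequality as a quadratic form over a $2\times 2$ block matrix and applying a Schur complement: the inequality holds provided $\mathbf{M}\mathbf{D}\mathbf{M} \succ 0$ (this is where the hypothesis $\phi' > 0$ enters, a point your plan also implicitly needs) and $(2-\beta)\mathbf{M} \succeq \frac{1}{2}\mathbf{R}\mathbf{D}\mathbf{R}$, which follows from $\mathbf{R}\mathbf{D}\mathbf{R} \preceq g\mathbf{R}\mathbf{R}$ and the choice of $\gamma$. If you wanted to complete your route instead, you would need to produce such an $\mathbf{M}$ (and multipliers) solving your vertex LMIs uniformly---at which point you would, in effect, have rediscovered this construction.
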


It has been known since the early 1990s that if \eqref{eq:RNN} is autonomous (i.e the input $\mathbf{u}$ is constant) and has symmetric weights with eigenvalues less than $1/g$, then there exists a unique fixed point that the network converges to from any initial condition \citep{matsuoka1992stability}. Theorem \ref{theorem: symmetricweightstheorem} generalizes this statement to say that if \eqref{eq:RNN} has symmetric weights with eigenvalues less than $1/g$, it is contracting. This includes previous results as a special case, because an \textit{autonomous} contracting system has a unique fixed point which the network converges to from any initial condition. 

\vspace{0.2cm}
\begin{theorem}[Product of Diagonal and Orthogonal Weights]
\label{theorem: PQPtheorem}
If there exists positive diagonal matrices $\mathbf{P}_1$ and $\mathbf{P}_2$, as well as $\mathbf{Q} = \mathbf{Q} ^T \succ 0$ such that
\[ \mathbf{W} = -\mathbf{P}_1 \mathbf{Q} \mathbf{P}_2 \]
then (\ref{eq:RNN}) is contracting in metric $\mathbf{M} = (\mathbf{P}_1 \mathbf{Q} \mathbf{P}_1)^{-1}$. 
\end{theorem}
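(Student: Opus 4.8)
The plan is to apply the constant-metric contraction criterion recalled in the supplementary primer: the system $\tau\dot{\mathbf{x}} = -\mathbf{x} + \mathbf{W}\phi(\mathbf{x}) + \mathbf{u}(t)$ is contracting in a constant metric $\mathbf{M} \succ 0$ provided the symmetric part of $\mathbf{M}\mathbf{J}$ is uniformly (in $\mathbf{x}$) negative definite, where $\mathbf{J} = \partial f/\partial\mathbf{x}$ is the Jacobian of the vector field. First I would verify that the proposed metric is admissible. Since $\mathbf{Q} \succ 0$ and $\mathbf{P}_1$ is a positive diagonal (hence invertible) matrix, $\mathbf{P}_1\mathbf{Q}\mathbf{P}_1$ is a congruence transformation of $\mathbf{Q}$ and is therefore symmetric positive definite, so $\mathbf{M} = (\mathbf{P}_1\mathbf{Q}\mathbf{P}_1)^{-1} \succ 0$ is a legitimate constant metric.

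Next I would write the Jacobian explicitly. Differentiating the right-hand side gives $\tau\mathbf{J} = -\mathbf{I} + \mathbf{W}\mathbf{D}(\mathbf{x})$, where $\mathbf{D}(\mathbf{x}) = \mathrm{diag}(\phi'(x_1),\dots,\phi'(x_n))$, and by the monotonicity assumption $\phi' \geq 0$ we have $\mathbf{D}(\mathbf{x}) \succeq 0$ for every $\mathbf{x}$. The crucial step is to evaluate $\mathbf{M}\mathbf{W}$ using the hypothesized factorization. Substituting $\mathbf{W} = -\mathbf{P}_1\mathbf{Q}\mathbf{P}_2$ and $\mathbf{M} = \mathbf{P}_1^{-1}\mathbf{Q}^{-1}\mathbf{P}_1^{-1}$, the factors $\mathbf{P}_1^{-1}\mathbf{P}_1$ and $\mathbf{Q}^{-1}\mathbf{Q}$ collapse and leave
\[ \mathbf{M}\mathbf{W} = -\mathbf{P}_1^{-1}\mathbf{P}_2 =: -\mathbf{R}, \]
a \emph{negative diagonal} matrix (a product of positive diagonals). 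Hence $\tau\mathbf{M}\mathbf{J} = -\mathbf{M} - \mathbf{R}\mathbf{D}(\mathbf{x})$.

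To finish, I would take the symmetric part. Using $\mathbf{M}^T = \mathbf{M}$ and $(\mathbf{R}\mathbf{D})^T = \mathbf{D}\mathbf{R}$, and noting that the diagonal matrices $\mathbf{R}$ and $\mathbf{D}(\mathbf{x})$ commute with nonnegative product, one gets
\[ \tau\left(\mathbf{M}\mathbf{J} + \mathbf{J}^T\mathbf{M}\right) = -2\mathbf{M} - \mathbf{R}\mathbf{D} - \mathbf{D}\mathbf{R} = -2\mathbf{M} - 2\mathbf{R}\mathbf{D}(\mathbf{x}) \preceq -2\mathbf{M} \prec 0 \]
uniformly in $\mathbf{x}$, since $\mathbf{R}\mathbf{D}(\mathbf{x}) \succeq 0$. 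This is exactly the constant-metric contraction condition, with contraction rate $\lambda = 1/\tau$, establishing contraction in $\mathbf{M}$.

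I expect the only real obstacle to be spotting the algebraic cancellation that renders $\mathbf{M}\mathbf{W}$ diagonal; this is precisely what the metric $\mathbf{M} = (\mathbf{P}_1\mathbf{Q}\mathbf{P}_1)^{-1}$ is engineered to achieve, and once it is in hand the sign analysis is immediate because the nonlinearity contributes only a nonnegative diagonal term that further improves the bound. A secondary point worth flagging is that this argument never invokes the slope bound $g$: only monotonicity $\phi' \geq 0$ is required, so the conclusion holds for arbitrarily steep monotone nonlinearities, in contrast to Theorems~\ref{theorem: absolutevaluetheorem} and~\ref{theorem: symmetricweightstheorem}.
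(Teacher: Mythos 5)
Your proof is correct and follows essentially the same route as the paper's: both hinge on the cancellation $\mathbf{M}\mathbf{W} = -\mathbf{P}_1^{-1}\mathbf{P}_2$, which makes the nonlinear term a nonpositive diagonal contribution, and your matrix inequality $\mathbf{M}\mathbf{J} + \mathbf{J}^T\mathbf{M} \preceq -2\mathbf{M}/\tau$ is just the quadratic-form statement $\dot{V} \leq -2V$ used in the paper's differential Lyapunov argument. Your closing observation that only $\phi' \geq 0$ (not the slope bound $g$) is needed is accurate and consistent with the paper's proof.
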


In contrast to the first two contraction conditions, Theorem \ref{theorem: PQPtheorem} is very easy to optimize. To meet the constraint that the $\mathbf{P}$ matrices are positive, one can parameterize their diagonal elements as $P_{ii} = d^2_{i} + \epsilon$, for some small positive constant $\epsilon$, and optimize $d_i$ directly. To meet the positive definiteness constraint on $\mathbf{Q}$, one may parameterize it as $\mathbf{Q} = \mathbf{E}^T\mathbf{E} + \epsilon\mathbf{I}$ and optimize $\mathbf{E}$ directly.

\vspace{0.2cm}
\begin{theorem}[Triangular Weights]
\label{theorem: triangularweightstheorem}
If $g\mathbf{W}-\mathbf{I}$ is triangular and Hurwitz, then (\ref{eq:RNN}) is contracting in a diagonal metric.  
\end{theorem}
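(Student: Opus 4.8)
The plan is to produce an explicit constant diagonal metric $\mathbf{P} \succ 0$ under which the symmetric part of the generalized Jacobian is \emph{uniformly} negative definite, which is the standard sufficient condition for contraction recalled in Section \ref{Appendix:Supplementary Math:supp_math}. Writing \eqref{eq:RNN} as $\dot{\mathbf{x}} = \tfrac{1}{\tau}\left(-\mathbf{x} + \mathbf{W}\phi(\mathbf{x}) + \mathbf{u}(t)\right)$, the Jacobian is $\mathbf{J}(\mathbf{x}) = \tfrac{1}{\tau}\mathbf{A}(\mathbf{x})$ with $\mathbf{A}(\mathbf{x}) = -\mathbf{I} + \mathbf{W}\mathbf{D}(\mathbf{x})$, where $\mathbf{D}(\mathbf{x}) = \mathrm{diag}(\phi'(x_1),\dots,\phi'(x_n))$ has entries in $[0,g]$. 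Since $\tau > 0$ only rescales, I can work with $\mathbf{A}$ directly and aim for $\mathbf{P}\mathbf{A} + \mathbf{A}^T\mathbf{P} \prec 0$ for all admissible $\mathbf{D}$. Without loss of generality I would assume $g\mathbf{W} - \mathbf{I}$, and hence $\mathbf{W}$, is lower triangular; the upper-triangular case follows by transposition.

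First I would extract the structural consequences of the hypothesis. Because $\mathbf{W}$ is lower triangular and $\mathbf{D}$ is diagonal, $\mathbf{W}\mathbf{D}$, and therefore $\mathbf{A}(\mathbf{x})$, is lower triangular for every $\mathbf{x}$, with diagonal entries $a_{ii} = -1 + W_{ii}\phi'(x_i)$. A triangular matrix has its diagonal as its spectrum, so the Hurwitz assumption on $g\mathbf{W}-\mathbf{I}$ is exactly $gW_{ii} - 1 < 0$, i.e.\ $W_{ii} < 1/g$ for all $i$. From this I would derive a uniform negative bound on the diagonal: if $W_{ii} \ge 0$ then $W_{ii}\phi'(x_i) \le gW_{ii} < 1$, while if $W_{ii} < 0$ then $W_{ii}\phi'(x_i) \le 0$, so in both cases $a_{ii} \le -\delta$ for a single $\delta > 0$ independent of $\mathbf{x}$. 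The below-diagonal entries $a_{ij} = W_{ij}\phi'(x_j)$ ($i>j$) are uniformly bounded by $g\,|W_{ij}|$.

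The core step is the choice of metric via a graded diagonal congruence. I would set $\mathbf{P} = \Theta^2$ with $\Theta = \mathrm{diag}(\epsilon,\epsilon^2,\dots,\epsilon^n)$ and observe that $\mathbf{P}\mathbf{A} + \mathbf{A}^T\mathbf{P} \prec 0$ is congruent, via $\Theta^{-1}$, to $\hat{\mathbf{A}} + \hat{\mathbf{A}}^T \prec 0$, where $\hat{\mathbf{A}} = \Theta\mathbf{A}\Theta^{-1}$ has entries $\hat{a}_{ij} = \epsilon^{\,i-j} a_{ij}$. The diagonal is unchanged, $\hat{a}_{ii} = a_{ii} \le -\delta$, whereas every below-diagonal entry now carries a factor $\epsilon^{\,i-j} \to 0$ as $\epsilon \to 0$. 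Using the uniform bounds from the previous step, I would pick a single small $\epsilon$ (depending only on $\delta$, $n$, and $\max_{i>j}|W_{ij}|$) making $\hat{\mathbf{A}} + \hat{\mathbf{A}}^T$ strictly diagonally dominant with negative diagonal, hence negative definite by Gershgorin, simultaneously for all admissible $\mathbf{D}(\mathbf{x})$. Undoing the congruence and dividing by $\tau$ then certifies contraction in the diagonal metric $\mathbf{P}$.

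The main obstacle, and the point requiring the most care, is \emph{uniformity}: the single metric $\mathbf{P}$ must certify contraction at every state $\mathbf{x}$, i.e.\ over the entire family of admissible diagonal matrices $\mathbf{D}$ with entries in $[0,g]$, rather than being chosen pointwise. This is precisely why I extract the $\mathbf{x}$-independent estimates $a_{ii} \le -\delta$ and $|a_{ij}| \le g|W_{ij}|$ \emph{before} selecting $\epsilon$; once those are in hand, the Gershgorin bound no longer depends on $\mathbf{x}$, so one fixed $\epsilon$ suffices and the argument closes.
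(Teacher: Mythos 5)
Your proposal is correct and follows essentially the same route as the paper's proof: the paper also uses the graded diagonal scaling $\mathbf{\Gamma} = \mathrm{diag}(\epsilon,\epsilon^2,\dots,\epsilon^n)$ (equivalently, the diagonal metric $\mathbf{\Gamma}^2$), conjugates the Jacobian so that the below-diagonal entries acquire factors $\epsilon^{i-j}\to 0$, and reduces the condition to $\max_i gW_{ii}-1<0$, i.e.\ Hurwitzness of the triangular matrix $g\mathbf{W}-\mathbf{I}$. Your write-up merely makes the paper's limiting argument quantitative, via the uniform bounds $a_{ii}\le-\delta$, $|a_{ij}|\le g|W_{ij}|$ and Gershgorin, which is a welcome tightening but not a different method.
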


Theorem \ref{theorem: triangularweightstheorem} follows from the fact that a hierarchy of contracting systems is also contracting.

\vspace{0.2cm}
\begin{theorem}[Singular Value Restricted Weights]
\label{theorem: singularvaluetheorem}
If there exists a positive diagonal matrix $\mathbf{P}$ such that:
\[g^2\mathbf{W}^T\mathbf{P}\mathbf{W} - \mathbf{P} \prec 0 \]
\noindent then (\ref{eq:RNN}) is contracting in metric $\mathbf{P}$.  
\end{theorem}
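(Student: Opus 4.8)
The plan is to verify the standard constant-metric contraction criterion: for a system $\tau\dot{\mathbf{x}} = \mathbf{f}(\mathbf{x},t)$, contraction in a constant symmetric positive-definite metric $\mathbf{P}$ holds whenever the generalized Jacobian combination $\mathbf{P}\mathbf{J} + \mathbf{J}^T\mathbf{P}$ is uniformly negative definite, where $\mathbf{J} = \partial\mathbf{f}/\partial\mathbf{x}$ (see the primer in Section~\ref{Appendix:Supplementary Math:supp_math}). First I would compute the Jacobian of \eqref{eq:RNN}, obtaining (up to the positive factor $1/\tau$, which does not affect definiteness) $\mathbf{J} = -\mathbf{I} + \mathbf{W}\mathbf{D}$, where $\mathbf{D} = \mathrm{diag}(\phi'(x_1),\dots,\phi'(x_n))$ is the diagonal matrix of activation slopes. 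By the standing assumption $0 \le \phi' \le g$ this matrix satisfies $\mathbf{0} \preceq \mathbf{D} \preceq g\mathbf{I}$ at every state, and $\mathbf{D}$ is the only state-dependent object (the input $\mathbf{u}(t)$ is state-independent and drops out of $\mathbf{J}$). The goal therefore reduces to showing that
\[ -2\mathbf{P} + \mathbf{P}\mathbf{W}\mathbf{D} + \mathbf{D}\mathbf{W}^T\mathbf{P} \prec 0 \]
holds uniformly over all admissible diagonal $\mathbf{D}$ with $\mathbf{0} \preceq \mathbf{D} \preceq g\mathbf{I}$.

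Next I would control the indefinite cross term by a completion of squares (equivalently, Young's inequality in the metric $\mathbf{P}$). Since $(\mathbf{P}^{1/2} - \mathbf{P}^{1/2}\mathbf{W}\mathbf{D})^T(\mathbf{P}^{1/2} - \mathbf{P}^{1/2}\mathbf{W}\mathbf{D}) \succeq 0$, expanding and simplifying gives
\[ \mathbf{P}\mathbf{W}\mathbf{D} + \mathbf{D}\mathbf{W}^T\mathbf{P} \preceq \mathbf{P} + \mathbf{D}\mathbf{W}^T\mathbf{P}\mathbf{W}\mathbf{D}, \]
so it suffices to prove the uniform bound $\mathbf{D}\mathbf{W}^T\mathbf{P}\mathbf{W}\mathbf{D} \prec \mathbf{P}$.

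The final and crucial step is to deduce this bound from the hypothesis $g^2\mathbf{W}^T\mathbf{P}\mathbf{W} - \mathbf{P} \prec 0$, and here the diagonality of $\mathbf{P}$ does the real work. Because the inequality is strict and the matrices are fixed, there is a slack $\delta \in (0,1]$ with $g^2\mathbf{W}^T\mathbf{P}\mathbf{W} \preceq (1-\delta)\mathbf{P}$. Congruence by the positive semidefinite diagonal $\mathbf{D}$ preserves the ordering, giving $g^2\mathbf{D}\mathbf{W}^T\mathbf{P}\mathbf{W}\mathbf{D} \preceq (1-\delta)\mathbf{D}\mathbf{P}\mathbf{D}$; then, since $\mathbf{P}$ and $\mathbf{D}$ are both diagonal they commute, so $\mathbf{D}\mathbf{P}\mathbf{D} = \mathbf{P}\mathbf{D}^2 \preceq g^2\mathbf{P}$ using $\mathbf{D}^2 \preceq g^2\mathbf{I}$. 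Dividing by $g^2$ yields $\mathbf{D}\mathbf{W}^T\mathbf{P}\mathbf{W}\mathbf{D} \preceq (1-\delta)\mathbf{P}$, and combining with the previous step gives $-2\mathbf{P} + \mathbf{P}\mathbf{W}\mathbf{D} + \mathbf{D}\mathbf{W}^T\mathbf{P} \preceq -\delta\mathbf{P} \prec 0$ uniformly, which establishes contraction in the metric $\mathbf{P}$.

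I expect the main obstacle to be precisely this last step: making the bound hold \emph{uniformly} over the state-dependent slope matrix $\mathbf{D}$ rather than at a single point. The commuting-diagonal identity $\mathbf{D}\mathbf{P}\mathbf{D} = \mathbf{P}\mathbf{D}^2$ is what lets the factor $g^2$ from the hypothesis reappear after the congruence transformation, and this is exactly why the theorem restricts $\mathbf{P}$ to be diagonal; for a general symmetric positive-definite $\mathbf{P}$ one could not pull the bound $\mathbf{D}^2 \preceq g^2\mathbf{I}$ through the congruence. A secondary, minor point to record is that the positive prefactor $1/\tau$ and the input $\mathbf{u}(t)$ play no role, consistent with contraction being robust to exogenous drive.
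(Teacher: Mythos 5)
Your proof is correct, but it takes a genuinely different route from the paper's. The paper works with the generalized Jacobian $\mathbf{F} = \mathbf{P}^{1/2}\mathbf{J}\mathbf{P}^{-1/2} = -\mathbf{I} + \mathbf{P}^{1/2}\mathbf{W}\mathbf{P}^{-1/2}\mathbf{D}$ (using diagonality of $\mathbf{P}$ to commute $\mathbf{D}$ past $\mathbf{P}^{-1/2}$) and bounds its matrix measure: subadditivity of $\mu_2$, the bound $\mu_2(\cdot)\leq\|\cdot\|_2$, and submultiplicativity with $\|\mathbf{D}\|_2\leq g$ give $\mu_2(\mathbf{F})\leq -1 + g\|\mathbf{P}^{1/2}\mathbf{W}\mathbf{P}^{-1/2}\|_2$, and the hypothesis $g^2\mathbf{W}^T\mathbf{P}\mathbf{W}\prec\mathbf{P}$ is precisely the statement $g\|\mathbf{P}^{1/2}\mathbf{W}\mathbf{P}^{-1/2}\|_2 < 1$. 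You instead verify the equivalent Lyapunov LMI $\mathbf{P}\mathbf{J}+\mathbf{J}^T\mathbf{P}\prec 0$ directly, absorbing the indefinite cross term by completion of squares and then pushing the hypothesis through a congruence by $\mathbf{D}$, via $\mathbf{D}\mathbf{P}\mathbf{D}=\mathbf{P}\mathbf{D}^2\preceq g^2\mathbf{P}$. The two arguments exploit the diagonality of $\mathbf{P}$ at dual points (the paper commutes $\mathbf{D}$ with $\mathbf{P}^{\pm 1/2}$, you commute $\mathbf{D}$ with $\mathbf{P}$ itself), and your closing remark that this step is exactly what fails for non-diagonal metrics is consistent with the paper's Theorem \ref{theorem: Wdiagstabcounterexampletheorem}. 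What the paper's route buys is brevity: the matrix-measure calculus hides the algebra inside standard properties. What your route buys is an explicit uniform contraction margin, $\mathbf{P}\mathbf{J}+\mathbf{J}^T\mathbf{P}\preceq -\delta\mathbf{P}$ with $\delta>0$ independent of the state, which makes the uniformity over the state-dependent slope matrix $\mathbf{D}$ fully explicit and plugs directly into the contraction criterion of Section \ref{Appendix:Supplementary Math:supp_math} with $\dot{\mathbf{M}}=0$, without invoking matrix-measure theory at all.
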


In the case of discrete-time RNNs, this contraction condition has been proved by many different authors in many different settings. When $\mathbf{P} = \mathbf{I}$, it is known as the echo-state condition for discrete-time RNNs \citep{jaeger2001echo}. This was then generalized to diagonal $\mathbf{P}$ by \cite{buehner2006tighter}. More recently, the original echo-state condition was rediscovered by \cite{miller2018stable} in the machine learning literature. Following this rediscovery, the condition was generalized to $\mathbf{P} \neq \mathbf{I}$ by \cite{revay2020contracting}. Here we show that it applies to continuous-time RNNs as well. 

\subsection{What do the Jacobian Eigenvalues Tell Us?}\label{section:stability-in-ml}
Several recent papers in ML, e.g  \citep{haber2017stable,chang2019antisymmetricrnn}, claim that a sufficient condition for stability of the nonlinear system:
\[\dot{\mathbf{x}} = \mathbf{f}(\mathbf{x},t)\]
is that the associated Jacobian matrix $\mathbf{J}(\mathbf{x},t) = \frac{\partial \mathbf{f}}{\partial \mathbf{x}}$ has eigenvalues whose real parts are strictly negative, i.e:
\[\max_i \text{Re}(\lambda_i(\mathbf{J}(\mathbf{x},t)) \leq -\alpha\]
with $\alpha>0$. However, this claim is generally false - see Section 4.4.2 in \citep{slotine1991applied}. 

In the \textit{specific} case of the RNN \eqref{eq:RNN}, it appears that the eigenvalues of the symmetric part of $\mathbf{W}$ do provide information on global stability in a number of applications. For example, in \citep{matsuoka1992stability} it was shown that if $\mathbf{W}_s = \frac{1}{2}(\mathbf{W} + \mathbf{W}^T)$ has all its eigenvalues less than unity, and $\mathbf{u}$ is constant, then \eqref{eq:RNN} has a unique, globally asymptotically stable fixed point. This condition also implies that the real parts of the eigenvalues of the Jacobian are uniformly negative. Moreover, in \citep{chang2019antisymmetricrnn} it was shown that setting the symmetric part of $\mathbf{W}_s = \frac{1}{2}(\mathbf{W} + \mathbf{W}^T)$ almost equal to zero (yet slightly negative) led to rotational, yet stable dynamics in practice. This leads us to the following theorem, which shows that if the slopes of the activation functions change sufficiently slowly as a function of time, then the condition in \citep{matsuoka1992stability} in fact implies global contraction of (\ref{eq:RNN}).

\vspace{0.2cm}
\begin{theorem}\label{theorem: Wdiagstabtheorem}
Let $\mathbf{D}$ be a positive, diagonal matrix with $D_{ii} = \frac{d\phi_i}{dx_i}$, and let $\mathbf{P}$ be an arbitrary, positive diagonal matrix. If:

\[ (g\mathbf{W}-\mathbf{I})\mathbf{P} + \mathbf{P}(g\mathbf{W}^T-\mathbf{I}) \preceq -c\mathbf{P} \hspace{.5cm} \text{and} \hspace{.5cm} \dot{\mathbf{D}} - cg^{-1}\mathbf{D} \preceq -\beta\mathbf{D}\]
for $c,\beta > 0$, then (\ref{eq:RNN}) is contracting in metric $\mathbf{D}$ with rate $\beta$. 
\end{theorem}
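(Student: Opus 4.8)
The plan is to prove contraction directly from the differential (variational) dynamics, working with the squared differential length $V = \delta\mathbf{x}^T\mathbf{M}\,\delta\mathbf{x}$ attached to the metric $\mathbf{M}$ and showing $\dot V \preceq -\beta V$, i.e. $\dot{\mathbf{M}} + \mathbf{M}\mathbf{J} + \mathbf{J}^T\mathbf{M} \preceq -\beta\mathbf{M}$. First I would set $\tau=1$ (a harmless positive time rescaling) and differentiate \eqref{eq:RNN} to obtain $\dot{\delta\mathbf{x}} = \mathbf{J}\,\delta\mathbf{x}$ with Jacobian $\mathbf{J} = -\mathbf{I} + \mathbf{W}\mathbf{D}$, where $\mathbf{D} = \mathrm{diag}(\phi'(x_i))$ is precisely the diagonal matrix in the statement. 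The hypotheses $0 < \phi' \le g$ give the two-sided bound $\mathbf{0} \prec \mathbf{D} \preceq g\mathbf{I}$, and hence the scalar-to-matrix inequality $\mathbf{D}^2 \preceq g\mathbf{D}$, which is the main technical lever used below.

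Next I would strip off the general weighting $\mathbf{P}$ by the constant diagonal change of coordinates $\mathbf{z} = \mathbf{P}^{-1/2}\mathbf{x}$; since contraction is coordinate independent this costs nothing. In the new coordinates the weight becomes $\widetilde{\mathbf{W}} = \mathbf{P}^{-1/2}\mathbf{W}\mathbf{P}^{1/2}$ and the Jacobian becomes $-\mathbf{I} + \widetilde{\mathbf{W}}\mathbf{D}$ (the slope matrix $\mathbf{D}$ is unchanged because the transformation is diagonal and $\phi$ acts componentwise, so $\mathbf{D}$ and $\mathbf{P}^{1/2}$ commute). Conjugating the first hypothesis by $\mathbf{P}^{-1/2}$ collapses it into the cleaner isotropic form $g(\widetilde{\mathbf{W}} + \widetilde{\mathbf{W}}^T) \preceq (2-c)\mathbf{I}$, while the second hypothesis involves neither $\mathbf{W}$ nor $\mathbf{P}$ and is untouched. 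This reduces everything to the case $\mathbf{P}=\mathbf{I}$ with the slope metric $\mathbf{M}=\mathbf{D}$ (equivalently $\mathbf{P}^{-1}\mathbf{D}$ back in the original coordinates).

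With $\mathbf{M}=\mathbf{D}$ the certificate is $\dot{\mathbf{D}} + \mathbf{D}\mathbf{J} + \mathbf{J}^T\mathbf{D} \preceq -\beta\mathbf{D}$, and a direct computation gives $\mathbf{D}\mathbf{J} + \mathbf{J}^T\mathbf{D} = -2\mathbf{D} + \mathbf{D}(\widetilde{\mathbf{W}} + \widetilde{\mathbf{W}}^T)\mathbf{D}$. I would then bound the cross term by sandwiching the reduced first hypothesis, $\mathbf{D}(\widetilde{\mathbf{W}}+\widetilde{\mathbf{W}}^T)\mathbf{D} \preceq \tfrac{2-c}{g}\mathbf{D}^2$, and collapse $\mathbf{D}^2 \preceq g\mathbf{D}$ to extract a frozen contraction margin, $\mathbf{D}\mathbf{J}+\mathbf{J}^T\mathbf{D} \preceq -c\,g^{-1}\mathbf{D}$, the factor $g^{-1}$ being the residue of having replaced the state-dependent $\mathbf{D}$ by its worst-case bound $g\mathbf{I}$. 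Adding the metric-drift term and invoking the second hypothesis $\dot{\mathbf{D}} - c\,g^{-1}\mathbf{D} \preceq -\beta\mathbf{D}$ then yields $\dot{\mathbf{D}} + \mathbf{D}\mathbf{J} + \mathbf{J}^T\mathbf{D} \preceq -\beta\mathbf{D}$, i.e. contraction in metric $\mathbf{D}$ at rate $\beta$.

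The hard part is the state dependence of the metric. Because $\mathbf{D}$ varies along the trajectory, the certificate carries the extra term $\dot{\mathbf{D}}$, and the frozen bound is only accessible after substituting $\mathbf{D}\preceq g\mathbf{I}$ \emph{inside} the quadratic form $\mathbf{D}(\,\cdot\,)\mathbf{D}$ — a substitution that is not monotone for the Lyapunov map $\mathbf{S}\mapsto \widetilde{\mathbf{W}}\mathbf{S}+\mathbf{S}\widetilde{\mathbf{W}}^T$, which is exactly why the estimate must be applied in the congruence (sandwiched) form and the leftover slope growth parked in a separate hypothesis. Lining up the $g$-factors so that the frozen margin simultaneously dominates the $-2\mathbf{D}$ term and the drift $\dot{\mathbf{D}}$ — reproducing precisely the condition $\dot{\mathbf{D}} - c\,g^{-1}\mathbf{D}\preceq-\beta\mathbf{D}$ — is the delicate bookkeeping (cleanest for the standard case $g=1$ of $\tanh$ and ReLU), and is the step I would verify most carefully.
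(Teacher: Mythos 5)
Your proposal follows essentially the same route as the paper's own proof: a differential Lyapunov certificate in a slope-dependent diagonal metric, the identity $\mathbf{D}\mathbf{J}+\mathbf{J}^T\mathbf{D} = -2\mathbf{D} + \mathbf{D}(\widetilde{\mathbf{W}}+\widetilde{\mathbf{W}}^T)\mathbf{D}$, a congruence bound on the cross term from the first hypothesis, the slope bound $\mathbf{D}^2 \preceq g\mathbf{D}$, and finally the second hypothesis to absorb the metric drift $\dot{\mathbf{D}}$. (Stripping off $\mathbf{P}$ by the diagonal change of coordinates is cosmetic; note only that what you then certify is contraction in the metric $\mathbf{P}^{-1}\mathbf{D}$ in the original coordinates, just as the paper's Lyapunov function $\delta\mathbf{x}^T\mathbf{P}\mathbf{D}\,\delta\mathbf{x}$ certifies the metric $\mathbf{P}\mathbf{D}$; neither is literally the stated metric $\mathbf{D}$, but all three are congruent via constant diagonal matrices, so this is harmless.) The genuine issue is your ``collapse'' step, i.e.\ the claimed margin $\mathbf{D}\mathbf{J}+\mathbf{J}^T\mathbf{D} \preceq -cg^{-1}\mathbf{D}$. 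Entrywise, with $d = D_{ii} \in (0,g]$, that claim reduces after the congruence bound to
\begin{equation*}
-2d + \frac{2-c}{g}\,d^2 \;\le\; -\frac{c}{g}\,d
\quad\Longleftrightarrow\quad
(2-c)\,d + c \;\le\; 2g ,
\end{equation*}
which must hold for \emph{every} admissible slope $d$. Taking suprema over $d\in(0,g]$ shows it holds if and only if $g \ge 1$ and $c \le 2g$: for $c \le 2$ the binding case is $d=g$, where the inequality reads $c(1-g)\le 0$; for $c>2$ the binding case is $d \to 0^{+}$, where it reads $c \le 2g$ (and note that multiplying $\mathbf{D}^2 \preceq g\mathbf{D}$ by the then-negative scalar $(2-c)/g$ flips the ordering, so the collapse cannot even be invoked as stated). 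You flagged exactly this bookkeeping as the step to check, and it is indeed where your argument is only conditionally valid: complete for $g\ge 1$ and $c \le 2g$ (in particular the standard case $g=1$, $c\le 2$), but not for arbitrary $c,\,g>0$ as the theorem is stated.

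For comparison, the paper runs the estimates in the opposite order: it bounds the leak term via $\mathbf{D}^{-1}\succeq g^{-1}\mathbf{I}$, arriving at $\dot V \le \delta\mathbf{x}^T\mathbf{P}\bigl(\dot{\mathbf{D}} - cg^{-1}\mathbf{D}^2\bigr)\delta\mathbf{x}$, and then ``substitutes'' the second hypothesis --- which controls $\dot{\mathbf{D}} - cg^{-1}\mathbf{D}$, not $\dot{\mathbf{D}} - cg^{-1}\mathbf{D}^2$. That substitution requires $\mathbf{D}^2 \succeq \mathbf{D}$, i.e.\ every slope at least $1$, which the hypotheses never guarantee (for ReLU or tanh the slopes lie in $[0,1]$); so the paper's final step has an \emph{unconditional} gap of the same nature, and your ordering (collapse $\mathbf{D}^2$ inside the cross term so the residual margin is proportional to $\mathbf{D}$ and directly comparable with the drift hypothesis) is the sounder one wherever it applies. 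The restriction is also not an artifact of your estimates: both proofs try to establish the pointwise implication ``hypotheses at $(\mathbf{D},\dot{\mathbf{D}})$ imply the certificate at $(\mathbf{D},\dot{\mathbf{D}})$,'' and that implication is genuinely false when $g<1$ or $c>2g$ (e.g.\ scalar $\mathbf{W}=-K$, $g=1$, $c = 2+2K$, slope $d=1/2$, with $\dot{\mathbf{D}}$ saturating the second hypothesis: the certificate then demands $K \le K/2$). So no rearrangement of this algebra can close the statement in full generality; if you add the standing assumption $g\ge1$, $c\le 2g$ (or simply present the $g=1$, $c\le2$ case), your proof is correct and in fact repairs the paper's.
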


We stress however, that it is an open question whether or not diagonal stability of $\mathbf{W}$ implies that \eqref{eq:RNN} is contracting. It has been conjectured that diagonal stability of $g\mathbf{W}-\mathbf{I}$ is a sufficient condition for global contraction of \eqref{eq:RNN} \citep{revay2020lipschitz}, however this has been difficult to prove. To better characterize this conjecture, we present Theorem \ref{theorem: Wdiagstabcounterexampletheorem}, which shows by way of counterexample that diagonal stability of $g\mathbf{W}-\mathbf{I}$ does not imply global contraction in a \textit{constant} metric for (\ref{eq:RNN}).  
\vspace{0.2cm}
\begin{theorem}
\label{theorem: Wdiagstabcounterexampletheorem}
Satisfaction of the condition \hspace{1mm} $g\mathbf{W}_{sym} - \mathbf{I} \prec 0$ \hspace{1mm} is \textbf{not} sufficient to show global contraction of the general nonlinear RNN (\ref{eq:RNN}) in any \textbf{constant} metric. High levels of antisymmetry in $\mathbf{W}$ can make it impossible to find such a metric, which we demonstrate via a $2 \times 2$ counterexample of the following form, with $c \geq 2$ when $g=1$: $\mathbf{W} = \begin{bmatrix} 
    0 & -c \\
    c & 0 
  \end{bmatrix}$
\end{theorem}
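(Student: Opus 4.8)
The plan is to argue by contradiction, exploiting the fact that contraction in a \emph{constant} metric is the requirement that a single quadratic form certify \emph{every} Jacobian the system can realize. The strategy is to show that the two ``extreme'' Jacobians of this $2\times 2$ system cannot share a common metric once $c \geq 2$, so the conclusion of the would-be implication fails even though its hypothesis holds.

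First I would record that the hypothesis $g\mathbf{W}_{sym}-\mathbf{I}\prec 0$ is automatic here: since the given $\mathbf{W}$ is antisymmetric, $\mathbf{W}_{sym}=0$ and the condition collapses to $-\mathbf{I}\prec 0$. So the premise is satisfied and it remains only to defeat the conclusion. The Jacobian of \eqref{eq:RNN} is $\mathbf{J}=\tau^{-1}(-\mathbf{I}+\mathbf{W}\mathbf{D})$ with $\mathbf{D}=\operatorname{diag}(\phi'(x_1),\phi'(x_2))$ and $0\le D_{ii}\le g=1$. Contraction in a constant metric $\mathbf{M}\succ 0$ demands $\mathbf{M}\mathbf{J}+\mathbf{J}^{\top}\mathbf{M}\prec 0$ at every state, that is, $2\mathbf{M}-\bigl(\mathbf{M}\mathbf{W}\mathbf{D}+\mathbf{D}\mathbf{W}^{\top}\mathbf{M}\bigr)\succ 0$ for \emph{every} admissible diagonal $\mathbf{D}$. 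Writing $\mathbf{M}=\left[\begin{smallmatrix} a & b\\ b & d\end{smallmatrix}\right]\succ 0$ (so $a,d>0$ and $ad-b^2>0$), I would then evaluate this inequality at the two corner choices $\mathbf{D}=\operatorname{diag}(1,0)$ and $\mathbf{D}=\operatorname{diag}(0,1)$.

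A short determinant computation shows that positive-definiteness at these two corners forces, respectively, $4(ad-b^2)>c^2a^2$ and $4(ad-b^2)>c^2d^2$. Using $ad-b^2\le ad$ and dividing by the positive quantities $a,d$, these reduce to $d>\tfrac{c^2}{4}a$ and $a>\tfrac{c^2}{4}d$. For $c\ge 2$ we have $\tfrac{c^2}{4}\ge 1$, so the two inequalities read $d>a$ and $a>d$ simultaneously --- an immediate contradiction. Hence no constant $\mathbf{M}\succ 0$ exists, establishing non-contraction in any constant metric. The elegant point is that the antisymmetry of $\mathbf{W}$ makes the two corners pull the metric in opposite directions, so no single $\mathbf{M}$ can balance them.

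The step I expect to be the main obstacle is the attainability of those two corner Jacobians, which is what makes the \emph{strict} inequalities (and hence the boundary case $c=2$) go through. I would handle it by choosing an activation whose slope genuinely attains both $0$ and $g=1$ --- e.g.\ a ReLU-type nonlinearity, for which $\phi'=1$ on the positive axis and $\phi'=0$ on the negative axis --- so that $\operatorname{diag}(1,0)$ and $\operatorname{diag}(0,1)$ are realized at honest finite states with the appropriate sign pattern. With a smooth saturating $\phi$ such as $\tanh$, whose slope only \emph{approaches} $0$, the identical computation gives non-contraction for every $c>2$ by a continuity (limiting) argument, with $c=2$ merely marginal (pointwise but not uniformly contracting). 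The remaining determinant algebra is routine.
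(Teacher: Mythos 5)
Your proof is correct, and strategically it mirrors the paper's: the same antisymmetric counterexample, the observation that the hypothesis is automatic since $\mathbf{W}_{sym}=\mathbf{0}$, and the same key move of testing an arbitrary candidate constant metric against the two corner slope matrices $\mathbf{D}=\operatorname{diag}(1,0)$ and $\mathbf{D}=\operatorname{diag}(0,1)$. Where you genuinely diverge is in how the incompatibility is extracted, and your route is both more elementary and more general. The paper fixes $c=2$, uses only necessary conditions for definiteness (signs of diagonal entries and $|x_{ii}+x_{jj}|>2|x_{ij}|$), and runs a three-way case analysis on the sign of the metric's off-diagonal entry $m$, reaching the contradiction $a>b$ and $b>a$ in each case; it then repeats a longer analysis of the same kind with perturbed corners $\mathbf{D}_{1*},\mathbf{D}_{2*}$ and a larger matrix $\mathbf{W}_*$ (off-diagonal magnitude $4$) to cover strictly increasing activations. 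You instead apply the full determinant (Sylvester) criterion at each corner to get $4(ad-b^2)>c^2d^2$ and $4(ad-b^2)>c^2a^2$ (your ``respectively'' attaches these to the wrong corners, but that is immaterial since both must hold), and the single relaxation $ad-b^2\le ad$ then yields $d>\tfrac{c^2}{4}a$ and $a>\tfrac{c^2}{4}d$, an immediate contradiction for every $c\ge 2$ with no casework at all; multiplying the two inequalities even exhibits the sharp threshold $c^4<16$, so your argument covers the whole family in the theorem statement in one computation, where the paper verifies only $c=2$ and asserts the rest. What the paper's longer treatment carries out that your sketch only gestures at is the quantitative version for slopes bounded away from zero (the explicit $\epsilon=0.5$ computation with $\mathbf{W}_*$); your closing remarks---ReLU attains the corners exactly, and for $\tanh$-type activations a limiting argument gives failure for $c>2$ with $c=2$ marginal---cover the same ground qualitatively and are the right way to handle attainability.
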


\section{Stability-Constrained Network of Networks Perform Well on Benchmarks}\label{section:experiments-intro}
A natural concern is that stability of an RNN may come at the cost of its expressivity, which is particularly relevant for integrating information over long timescales. To investigate whether this might be an issue for our model, we trained a stability-constrained network-of-networks on three benchmark sequential image classification tasks: sequential MNIST, permuted seqMNIST, and sequential CIFAR10. These tasks are often used to measure information processing ability over long sequences \citep{le2015simple}. Images are presented pixel-by-pixel, and the network makes a prediction at the end of the sequence. In permuted seqMNIST, pixels are input in a fixed but random order. 

All of our experiments were done on networks governed by \eqref{eq:combo_RNN}. The nonlinear subnetwork RNNs were connected to each other via linear all-to-all negative feedback, given by \eqref{eq:feedback_combo}. For all subnetworks we use the ReLU activation function. To enforce contraction of each individual subnetwork, we focused on two stability constraints from our theoretical results: Theorems \ref{theorem: absolutevaluetheorem} and \ref{theorem: singularvaluetheorem}. In the case of Theorem \ref{theorem: absolutevaluetheorem}, we did not train the individual subnetworks' weight matrices, but only trained the connections \textit{between} subnetworks (Figure \ref{fig:network-cartoon}B). For Theorem \ref{theorem: singularvaluetheorem}, we trained all parameters of the model (Figure \ref{fig:network-cartoon}C). 

We refer to networks with subnetworks constrained by Theorem \ref{theorem: absolutevaluetheorem} as `Sparse Combo Nets' and to networks with subnetworks constrained by Theorem \ref{theorem: singularvaluetheorem} as `SVD Combo Nets'. Throughout the experimental results we use the notation `$p \times n$ network' - such a network consists of $p$ distinct subnetwork RNNs, with each such subnetwork RNN containing $n$ units. 

\begin{figure}[h]
\centering
\includegraphics[width=\textwidth,keepaspectratio]{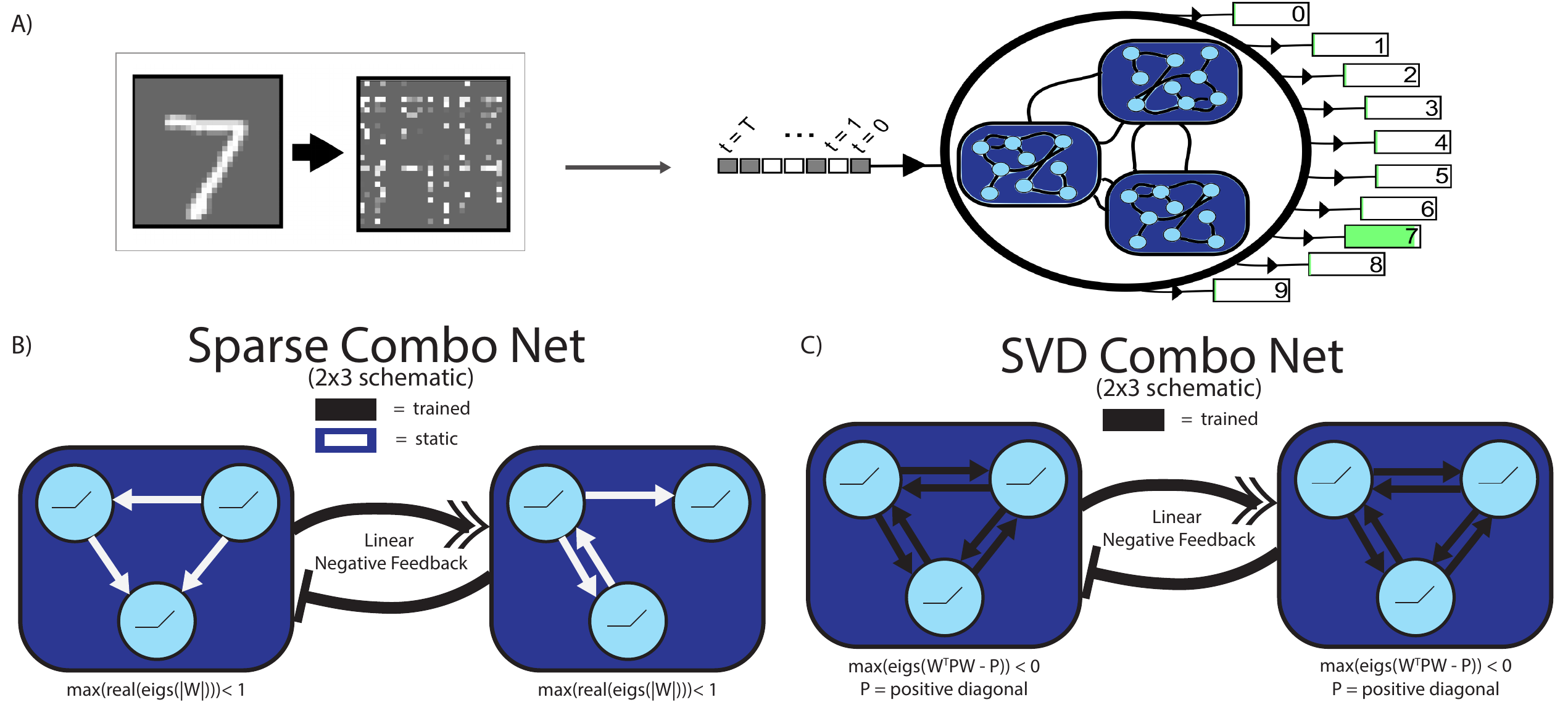}
\caption{Summary of task structure and network architectures. Images from MNIST (or CIFAR10) were flattened into an array of pixels and fed sequentially into the modular `network of networks', with classification based on the output at the last time-step. For MNIST, each image was also permuted in a fixed manner (A). The subnetwork `modules' of our architecture were constrained to meet either Theorem \ref{theorem: absolutevaluetheorem} via sparse initialization (B) or Theorem \ref{theorem: singularvaluetheorem} via direct parameterization (C). Linear negative feedback connections were trained between the subnetworks according to \eqref{eq:feedback_combo}.}
\label{fig:network-cartoon}
\end{figure}

\subsection{Network Initialization and Training}\label{section:init-info}
For the Sparse Combo Net we were not able to find a parameterization to continuously update the internal RNN weights during training in a way that preserved contraction. However, it is easy to randomly generate matrices with a particular likelihood of meeting the Theorem \ref{theorem: absolutevaluetheorem} condition by selecting an appropriate sparsity level and limit on entry magnitude. Sparsity in particular is of interest due to its relevance in neurobiology and machine learning, so it is convenient that the condition makes it easy to verify stability of many different sparse RNNs. As $g=1$ for ReLU activation, we check potential subnetwork matrices $\mathbf{W}$ by simply verifying linear stability of $|\mathbf{W}| - \mathbf{I}$. 

Because every RNN meeting the condition has a corresponding well-defined stable LTI system contracting in the same metric, it is also easy to find a metric to use in our training algorithm: solving for $\mathbf{M}$ in $-\mathbf{I} = \mathbf{M}\mathbf{A} + \mathbf{A}^{T}\mathbf{M}$ will produce a valid metric for any stable LTI system $\mathbf{A}$ \citep{slotine1991applied}. We utilize the fact that Hurwitz Metzler matrices are diagonally stable to improve efficiency of computing $\mathbf{M}$ (as well as in our proof of Theorem \ref{theorem: absolutevaluetheorem}). 

  We therefore randomly generated fixed subnetworks satisfying Theorem \ref{theorem: absolutevaluetheorem} and trained only the linear connections between them (Figure \ref{figure:example-training}), as well as the linear input and output layers. More information on network initialization, hyperparameter tuning, and training algorithm is provided in Section \ref{Appendix:Experiments}.

\begin{figure}[h]
\centering
\includegraphics[width=\textwidth,keepaspectratio]{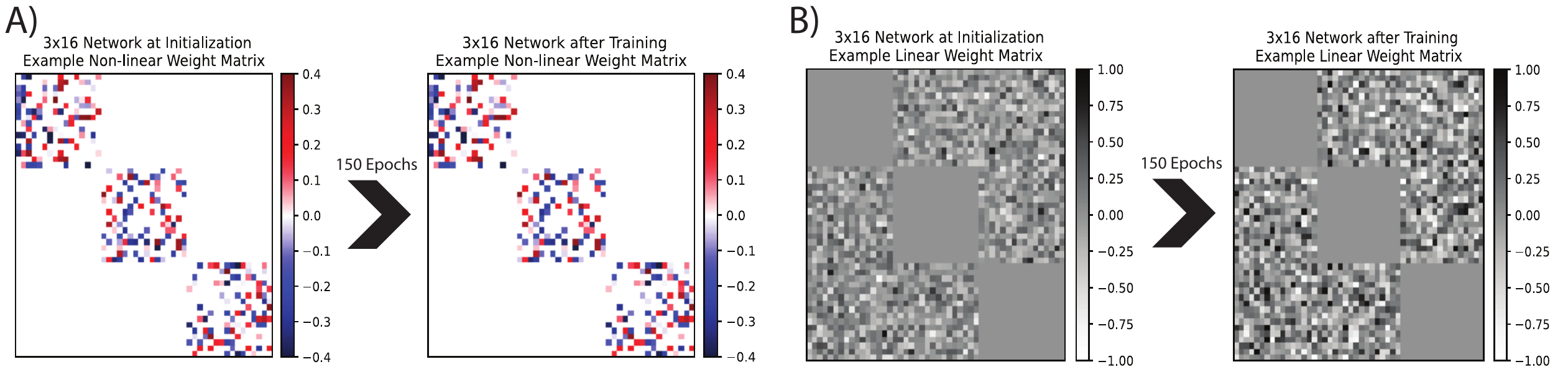}
\caption{Example $3$x$16$ Sparse Combo Net. Nonlinear intra-subnetwork weights are initialized using a set sparsity, and do not change in training (A). Linear inter-subnetwork connections are constrained to be antisymmetric with respect to the overall network metric, and are updated in training (B).}
\label{figure:example-training}
\end{figure}

For the SVD Combo Net on the other hand, we ensured contraction of each subnetwork RNN by direct parameterization (described in Section \ref{Appendix:SVDNet}), thus allowing all weights to be trained. 

\subsection{Results}\label{section:main-results}
The Sparse Combo Net architecture achieved the highest overall performance on both permuted seqMNIST and seqCIFAR10, with 96.94\% and 65.72\% best test accuracies respectively - thereby setting a new SOTA for stable RNNs (Table \ref{table:sota}). Furthermore, we were able to reproduce SOTA scores over several repetitions, including 10 trials of seqCIFAR10. Along with repeatability of results, we also show that the contraction constraint on the connections between subnetworks ($\mathbf{L}$ in \eqref{eq:feedback_combo}) is important for performance, particularly in the Sparse Combo Net (Section \ref{section:repeats}). 

Additionally, we profile how various architecture settings impact performance of our networks. In both networks, we found that increasing the total number of neurons improved task performance, but with diminishing returns (Section \ref{section: size}). We also found that the sparsity of the hidden-to-hidden weights in Sparse Combo Net had a large impact on the final network performance (Section \ref{section: sparse}).

\begin{table}[h!]
\small
\centering
\begin{tabular}{ | m{1.75cm} || m{0.7cm} | m{0.8cm} || m{1.3cm} | m{1.3cm} | m{1.3cm} || m{0.85cm} | m{0.85cm} | m{0.85cm} | }
\hline
 Name & Stable RNN? & Params & sMNIST \newline Repeats \newline Mean (n) \newline [Min] & psMNIST \newline Repeats \newline Mean (n) \newline [Min] & sCIFAR10 \newline Repeats \newline Mean (n) \newline [Min] & Seq \newline MNIST \newline Best & PerSeq \newline MNIST \newline Best & Seq \newline CIFAR \newline Best \\
\hline\hline
LSTM \citep{chang2019antisymmetricrnn} & & 68K & \centering --- & \centering --- & \centering --- & 97.3\% & 92.7\% & 59.7\% \\
\hline
Transformer \newline \citep{trinh2018cifar} & & 0.5M & \centering --- & \centering --- & \centering --- & 98.9\% & 97.9\% & 62.2\% \\
\hline\hline
Antisymmetric \newline \citep{chang2019antisymmetricrnn} & ? & 36K & \centering --- & \centering --- & \centering --- & 98\% & 95.8\% & 58.7\% \\  
\hline
\rowcolor{Gray}
Sparse Combo Net & \checkmark & 130K & \centering --- & \textbf{96.85\%} (4) \newline [\textbf{96.65\%}] & 64.72\% (10) \newline [63.73\%] & 99.04\% & \textbf{96.94\%} & \textbf{65.72\%} \\  
\hline
Lipschitz \newline \citep{erichson2021lipschitz} & \checkmark & 134K & 99.2\% (10) \newline [99.0\%] & 95.9\% (10) \newline [95.6\%] & \centering --- & \textbf{99.4\%} & 96.3\% & 64.2\% \\  
\hline\hline
CKConv \newline \citep{romero2021ckconv} & & 1M & \centering --- & \centering --- & \centering --- & 99.32\% & 98.54\% & 63.74\% \\
\hline
S4 \citep{gu2022s4} & & 7.9M & \centering --- & \centering --- & \centering --- & \textbf{99.63\%} & \textbf{98.7\%} & \textbf{91.13\%} \\
\hline
Trellis \citep{bai2019trellis} & & 8M & \centering --- & \centering --- & \centering --- & 99.2\% & 98.13\% & 73.42\% \\
\hline
\end{tabular}
\caption{Published benchmarks for sequential MNIST, permuted MNIST, and sequential CIFAR10 best test accuracy. Architectures are grouped into three categories: baselines, best performing RNNs with claimed stability guarantee*, and networks achieving overall SOTA. Within each grouping, networks are ordered by number of trainable parameters (for CIFAR10 if it differed across tasks). Our network is highlighted. Where possible, we include information on repeatability.\\
*For more on stability guarantees in machine learning, see Section \ref{section:stability-in-ml}}
\label{table:sota}
\end{table}
\subsubsection{Experiments with Network Size} \label{section: size}
Understanding the effect of size on network performance is important to practical application of these architectures. For both Sparse Combo Net and SVD Combo Net, increasing the number of subnetworks while holding other settings constant (including fixing the size of each subnetwork at 32 units) was able to increase network test accuracy on permuted seqMNIST to a point (Figure \ref{figure:test-sizes}). 

The greatest performance jump happened when increasing from one module (37.1\% Sparse Combo Net, 61.8\% SVD Combo Net) to two modules (89.1\% Sparse Combo Net, 92.9\% SVD Combo Net). After that the performance increased steadily with number of modules until saturating at $\sim 97\%$ for Sparse Combo Net and $\sim 95\%$ for SVD Combo Net. 

As the internal subnetwork weights are not trained in Sparse Combo Net, it is unsurprising that its performance was substantially worse at the smallest sizes. However Sparse Combo Net surpassed SVD Combo Net by the $12 \times 32$ network size, which contains a modest 384 total units. Due to the better performance of the Sparse Combo Net, we focused additional analyses there. Note also that the SVD Combo Net never reached 55\% test accuracy for CIFAR10 in our early experiments. 

We then evaluated task performance as the \emph{modularity} of a Sparse Combo Net (fixed to have 352 total units) was varied. We observed an inverse U shape (Figure \ref{figure:test-sizes-sup}B), with poor performance of a $1 \times 352$ net and an $88 \times 4$ net, and best performance from a $44 \times 8$ net. However, this experiment compared similar sparsity levels, while in practice we can achieve better performance with larger subnetworks by leveraging sparsity in a way not possible for smaller ones.

\begin{figure}[h]
\centering
\includegraphics[width=0.85\textwidth,keepaspectratio]{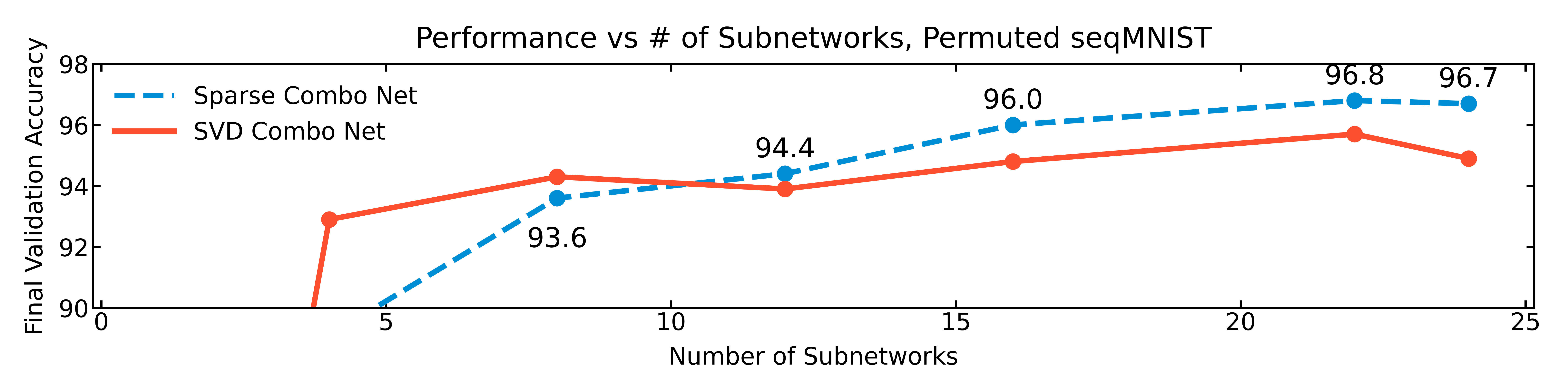}
\caption{Permuted seqMNIST performance plotted against the number of subnetworks. Each subnetwork has 32 neurons. Results are shown for both Sparse Combo Net and SVD Combo Net.}
\label{figure:test-sizes}
\end{figure}

\subsubsection{Experiments with Sparsity}\label{section: sparse}
Because of the link between sparsity and stability as well as the biological relevance of sparsity, we explored in detail how subnetwork sparsity affects the performance of Sparse Combo Net. We ran a number of experiments on the permuted seqMNIST task, varying sparsity level while holding network size and other hyperparameters constant. Here we use "$n\%$ sparsity level" to refer to a network with subnetworks that have just $n\%$ of their weights non-zero. 

We observed a large ($>5$ percentage point) performance boost when switching from a $26.5\%$ sparsity level to a $10\%$ sparsity level in the $11 \times 32$ Sparse Combo Net (Figure \ref{figure:test-sparsity}), and subsequently decided to test significantly sparser subnetworks in a $16 \times 32$ Sparse Combo Net. We trained networks with sparsity levels of $5\%$, $3.3\%$, and $1\%$, as well as $10\%$ for baseline comparison (Figure \ref{figure:test-sparsity-sup}A). A $3.3\%$ sparsity level produced the best results, leading to our SOTA performance for stable networks on both permuted seqMNIST and seqCIFAR10. With a component RNN size of just 32 units, this sparsity level is small, containing only one or two directional connections per neuron on average (Figure \ref{figure:example-best-net}).

As sparsity had such a positive effect on task performance, we did additional analyses to better understand why. We found that decreasing the magnitude of non-zero elements while holding sparsity level constant decreased task performance (Figure \ref{figure:test-sparsity-sup}B), suggesting that the effect is driven in part by the fact that sparsity enables higher magnitude non-zero elements while still maintaining stability. 

\begin{figure}[h]
\centering
\includegraphics[width=0.95\textwidth,keepaspectratio]{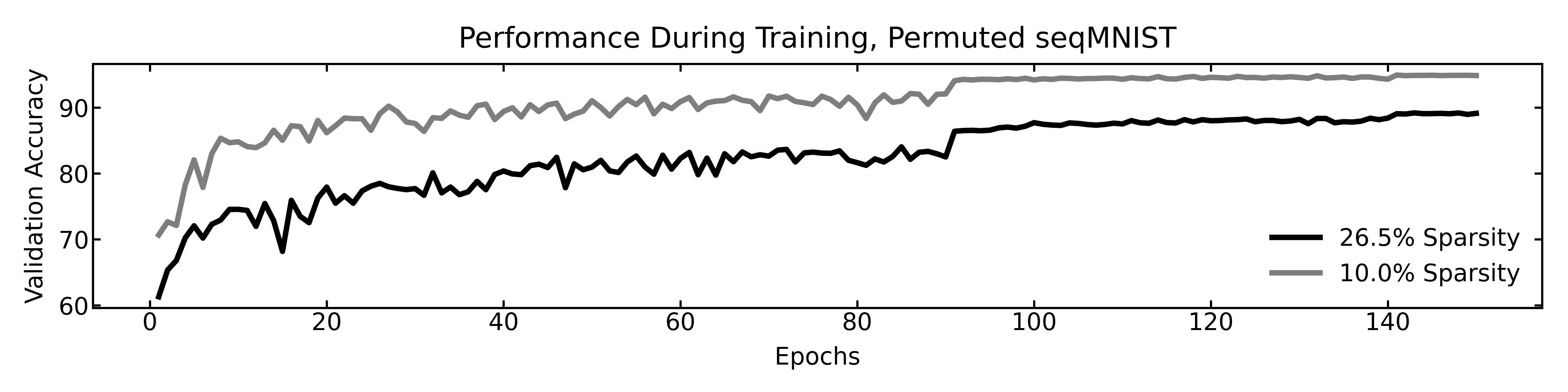}
\caption{Permuted seqMNIST performance over the course of training for two $11 \times 32$ Sparse Combo Nets with different sparsity levels.}
\label{figure:test-sparsity}
\end{figure}

The use of sparsity in subnetworks to improve performance suggests another interesting direction that could enable better scalability of total network size - enforcing sparsity in the linear feedback weight matrix ($\mathbf{L}$). We performed pilot testing of this idea, which showed promise in mitigating the saturation effect seen in Figure \ref{figure:test-sizes}. Those results are detailed in Section \ref{Appendix:scalability-discuss} (Table \ref{table:scalability}). 

\subsubsection{Repeatability and Controls}\label{section:repeats}
Sparse Combo Net does not have the connections within its subnetworks trained, so network performance could be particularly susceptible to random initialization. Thus we ran repeatability studies on permuted sequential MNIST and sequential CIFAR10 using our best network settings ($16 \times 32$ with subnetwork sparsity level of $3.3\%$) and an extended training period. Mean performance over 4 trials of permuted seqMNIST was 96.85\% with 0.019 variance, while mean performance over 10 trials of seqCIFAR10 was 64.72\% with 0.406 variance. Note we also ran a number of additional experiments on size and sparsity settings, described in Section \ref{Appendix:extended-results}.

Across the permuted seqMNIST trials, best test accuracy always fell between $96.65\%$ and $96.94\%$, a range much smaller than the differences seen with changing sparsity settings and network size. Three of the four trials showed best test accuracy $\geq 96.88\%$, despite some variability in early training performance (Figure \ref{figure:test-reproduce}). Similarly, eight of the ten seqCIFAR10 trials had test accuracy $>64.3\%$, with all results falling between 63.73\% and 65.72\% (Figure \ref{figure:cifar-reps}). This robustly establishes a new SOTA for stable RNNs, comfortably beating the previously reported (single run) 64.2\% test accuracy achieved by Lipschitz RNN \citep{erichson2021lipschitz}. 

As a control study, we also tested how sensitive the Sparse Combo Net was to the stabilization condition on the interconnection matrix ($\mathbf{L}$ in \eqref{eq:feedback_combo}). To do so, we initialized the individual RNN modules in a $24 \times 32$ network as before, but set $\mathbf{L}=\mathbf{B}$ and did not constrain $\mathbf{B}$ at all during training, thus no longer ensuring contraction of the overall system. This resulted in 47.0\% test accuracy on the permuted seqMNIST task, a stark decrease from the original 96.7\% test accuracy - thereby demonstrating the utility of the contraction condition.

\section{Discussion}
Biologists have long noted that modularity provides organisms with stability and robustness \citep{Kitano_2004}. The other direction -- that stability and robustness provide modularity -- is well known to engineers \citep{khalil2002nonlinear,slotine1991applied,modular}, but has been less appreciated in biology. We use this principle to build and train provably stable assemblies of recurrent neural networks. Like real brains, the components of our "RNN of RNNs" can communicate with one another through a mix of hierarchical and feedback connections. In particular, we theoretically characterized conditions under which an RNN of RNNs will be stable, given that each individual RNN is stable. We also provided several novel stability conditions for single RNNs that are compatible with these stability-preserving interareal connections. Our results contribute towards understanding how the brain maintains stable and accurate function in the presence of massive interareal feedback, as well as external inputs.

The question of neural stability is one of the oldest questions in computational neuroscience. Indeed, cyberneticists were concerned with this question before the term `computational neuroscience' existed \citep{wiener2019cybernetics,ashby2013design}. Stability is a central component in several influential neuroscience theories \citep{hopfield1982neural,seung1996brain,murphy2009balanced}, perhaps the most well-known being that memories are stored as stable point attractors \citep{hopfield1982neural}. Our work shows that stability continues to be a key concept for computational neuroscience as the field transitions from focusing on single brain areas to many interacting brain areas.

While primarily motivated by neuroscience, our approach is also relevant for machine learning. Deep learning models can be as inscrutable as they are powerful. This opacity limits conceptual progress and may be dangerous in safety-critical applications like autonomous driving or human-centered robotics. Given that stability is a fundamental property of dynamical systems -- and is intimately linked to concepts of control, generalization, efficiency, and robustness -- the ability to guarantee stability of a recurrent model will be important for ensuring deep networks behave as we expect them to \citep{richards2018lyapunov,choromanski2020ode,revay2021recurrent,rodriguez2022lyanet}. 

In the case of RNNs, one difficulty is that providing a certificate of \textit{stability} is often impossible or computationally impractical. However, the stability conditions we derive here allow for recursive construction of complicated RNNs while automatically preserving stability. By parameterizing our conditions for easy optimization using gradient-based techniques, we successfully trained our architecture on challenging sequential processing benchmarks. The high test accuracy our networks achieved with a small number of trainable parameters demonstrates that stability does not necessarily come at the cost of expressivity. Thus, our results likewise contribute towards understanding stability certification of RNNs.   

In future work, we will explore how our contraction-constrained RNNs of RNNs perform on a variety of neuroscience tasks, in particular tasks with multimodal structure \citep{yang2019task}. Our approach is particularly compatible with "global workspace" models, in which different networks communicate via a shared latent space \citep{tabareau2006notes,newell1972human,baars1993cognitive,dehaene1998neuronal,vanrullen2021deep,goyal2021coordination}. One desiderata for these future models is that they learn representations which are formally similar to those observed in the brain \citep{yamins2014performance, schrimpf2020brain, williams2021generalized}, in complement with the structural similarities already shared. Moreover, a "network of networks" approach will be especially relevant to challenging multimodal machine learning problems, such as the simultaneous processing of audio and video. Therefore the advancement of neuroscience theory and machine learning remain hand-in-hand for our next lines of questioning. Indeed, combinations of trained networks have already seen groundbreaking success in DeepMind's AlphaGo \citep{silver2016mastering}. 

As well as the many potential experimental applications, there are numerous theoretical future directions suggested by our work. Networks with more biologically-plausible weight update rules, such as models discussed in \citep{kozachkov2020achieving}, would be a fruitful neuroscience context in which to explore our conditions. One promising avenue of study there is to examine input-dependent stability of the learning process. In the context of machine learning, our stability conditions could be applied to the end-to-end training of multidimensional recurrent neural networks \citep{graves2007multi}, which have clear structural parallels to our RNNs of RNNs but lack known stability guarantees. 

In sum, recursively building network combinations in an effective and stable fashion while also allowing for continual refinement of the individual networks, as nature does for biological networks, will require new analysis tools. Here we have taken a concrete step towards the development of such tools, not only through our theoretical results, but also through their application to create stable combination network architectures that perform well in practice on benchmark tasks.

\begin{ack}
 This work benefited from stimulating discussions with Michael Happ, Quang-Cuong Pham, and members of the Fiete lab at MIT.
\end{ack}


\bibliographystyle{plainnat}
\bibliography{cur_refs}

\newpage


\section*{Checklist}
\begin{enumerate}

\item For all authors...
\begin{enumerate}
  \item Do the main claims made in the abstract and introduction accurately reflect the paper's contributions and scope?
    \answerYes{}
  \item Did you describe the limitations of your work?
    \answerYes{See Supplementary Section \ref{Appendix:Limitations}}
  \item Did you discuss any potential negative societal impacts of your work?
    \answerNo{There are no unique potential negative societal impacts of this work}
  \item Have you read the ethics review guidelines and ensured that your paper conforms to them?
    \answerYes{}
\end{enumerate}

\item If you are including theoretical results...
\begin{enumerate}
  \item Did you state the full set of assumptions of all theoretical results?
    \answerYes{See mathematical details within the paper, Section~\ref{section:model-math} and Section~\ref{section:single-rnn}}
   \item Did you include complete proofs of all theoretical results?
    \answerYes{See Supplementary Section~\ref{Appendix:Proofs}}
\end{enumerate}

\item If you ran experiments...
\begin{enumerate}
  \item Did you include the code, data, and instructions needed to reproduce the main experimental results (either in the supplemental material or as a URL)?
    \answerYes{Our code is linked in Supplementary Section~\ref{Appendix:code-data-details}, and can be run directly on Google Colab. We used only extremely common benchmark image datasets in our experiments, which can be utilized directly via the provided Colab link.}
  \item Did you specify all the training details (e.g., data splits, hyperparameters, how they were chosen)?
    \answerYes{All initialization and training details are reported in Supplementary Section~\ref{Appendix:training-details}. The results for all hyperparameter tuning trials are included in Supplementary Section~\ref{Appendix:all-tables}}
        \item Did you report error bars (e.g., with respect to the random seed after running experiments multiple times)?
    \answerYes{Where applicable in the main text, we reported mean, min, max, and variance of test accuracy across our repeated trials (see Section~\ref{section:repeats} in particular). We also provide extended information on repeatability tests in Supplementary Section~\ref{Appendix:extra-repeats}, and tables that report the results of each individual trial we ran in Supplementary Section~\ref{Appendix:all-tables}}
        \item Did you include the total amount of compute and the type of resources used (e.g., type of GPUs, internal cluster, or cloud provider)?
    \answerYes{See information on the code in Supplementary Section~\ref{Appendix:code-data-details}}
\end{enumerate}

\item If you are using existing assets (e.g., code, data, models) or curating/releasing new assets...
\begin{enumerate}
  \item If your work uses existing assets, did you cite the creators?
    \answerYes{We cite the article that introduced the sequential image classification tasks when introducing them in Section~\ref{section:experiments-intro}}
  \item Did you mention the license of the assets?
    \answerYes{We include the licenses for MNIST and CIFAR10 datasets as part of Supplementary Section~\ref{Appendix:code-data-details}}
  \item Did you include any new assets either in the supplemental material or as a URL?
    \answerYes{Our code is linked in Supplementary Section~\ref{Appendix:code-data-details}}
  \item Did you discuss whether and how consent was obtained from people whose data you're using/curating?
    \answerNA{}
  \item Did you discuss whether the data you are using/curating contains personally identifiable information or offensive content?
    \answerNA{}
\end{enumerate}

\item If you used crowdsourcing or conducted research with human subjects...
\begin{enumerate}
  \item Did you include the full text of instructions given to participants and screenshots, if applicable?
    \answerNA{}
  \item Did you describe any potential participant risks, with links to Institutional Review Board (IRB) approvals, if applicable?
    \answerNA{}
  \item Did you include the estimated hourly wage paid to participants and the total amount spent on participant compensation?
    \answerNA{}
\end{enumerate}

\end{enumerate}


\newpage

\renewcommand\thefigure{S\arabic{figure}}    
\setcounter{figure}{0}  
\renewcommand\thetable{S\arabic{table}}    
\setcounter{table}{0}  
\renewcommand\thesection{A\arabic{section}} 
\setcounter{section}{0}
\addtocontents{toc}{\protect\setcounter{tocdepth}{3}}
\renewcommand\contentsname{Appendix}

\tableofcontents

\newpage

\appendix

\section{Extended Background}

\subsection{Two Different RNNs}
\label{Appendix:Supplementary Math:two_different_rnns}
Note that in neuroscience, the variable $\mathbf{x}$ in equation (\ref{eq:RNN}) is typically thought of as a vector of neural membrane potentials. It was shown in \citep{miller2012mathematical} that the RNN (\ref{eq:RNN}) is equivalent via an affine transformation to another commonly used RNN model,
\begin{equation}\label{eq:yeq}
\tau \dot{\mathbf{y}} = -\mathbf{y} + \phi(\mathbf{W}\mathbf{y} + \mathbf{b}(t))
\end{equation}
where the variable $\mathbf{y}$ is interpreted as a vector of firing rates, rather than membrane potentials. The two models are related by the transformation $\mathbf{x} = \mathbf{W}\mathbf{y} + \mathbf{b}$, which yields
\begin{equation}\nonumber
\tau\dot{\mathbf{x}} = \mathbf{W}(-\mathbf{y} + \phi(\mathbf{W}\mathbf{y} + \mathbf{b})) + \tau \dot{\mathbf{b}}
= -\mathbf{x} + \mathbf{W}\phi(\mathbf{x}) + \mathbf{v}
\end{equation}
where $\mathbf{v}\equiv \mathbf{b} + \tau\dot{\mathbf{b}}$. Thus $\mathbf{b}$ is a low-pass filtered version of $\mathbf{v}$ (or conversely, 
$\mathbf{v}$ may be viewed as a first order prediction of $\mathbf{b}$) and the contraction properties of the system are unaffected by the affine transformation. 
Note that the above equivalence holds even in the case where $\mathbf{W}$ is not invertible. In this case, the two models are proven to be equivalent, provided that $\mathbf{b}(0)$ and $\mathbf{y}(0)$ satisfy certain conditions--which are always possible to satisfy \citep{miller2012mathematical}. Therefore, any contraction condition derived for the $x$ (or $y$) system automatically implies contraction of the other system. We exploit this freedom freely throughout the paper. \\ 

\subsection{Contraction Math}
\label{Appendix:Supplementary Math:supp_math}

It can be shown that the non-autonomous system

\[\dot{\mathbf{x}} = \mathbf{f}(\mathbf{x},t) \]
is contracting if there exists a metric $\mathbf{M}(\mathbf{x},t)  = \mathbf{\Theta}(\mathbf{x},t)^T\mathbf{\Theta}(\mathbf{x},t) \succ 0$ such that uniformly

\[\dot{\mathbf{M}} + \mathbf{M}\mathbf{J} + \mathbf{J}^T\mathbf{M} \preceq -\beta \mathbf{M} \]
where $\mathbf{J} = \frac{\partial \mathbf{f}}{\partial \mathbf{x}}$ and $\beta > 0$. For more details see the main reference \citep{lohmiller1998contraction}. Similarly, a non-autonomous discrete-time system

\[\mathbf{x}_{t+1} = \mathbf{f}(\mathbf{x}_t,t) \]
is contracting if

\[\mathbf{J}^T \mathbf{M}_{t+1} \mathbf{J} - \mathbf{M}_t \preceq -\beta \mathbf{M}_t  \]

\subsubsection{Feedback and Hierarchical Combinations}
\label{Appendix:Supplementary Math:FB}
Consider two systems, independently contracting in constant metrics $\mathbf{M}_1$ and $\mathbf{M}_2$, which are combined in feedback:

\begin{equation*}\tag*{(Feedback Combination)}
\begin{split}
\dot{\mathbf{x}} = \mathbf{f}(\mathbf{x},t) + \mathbf{B}\mathbf{y} \\
\dot{\mathbf{y}} = \mathbf{g}(\mathbf{y},t) + \mathbf{G}\mathbf{x}
\end{split}
\end{equation*}

If the following relationship between $\mathbf{B},\mathbf{G},\mathbf{M}_1$, and $\mathbf{M}_2$ is satisfied:

\begin{equation*}
\mathbf{B} = -\mathbf{M}^{-1}_1\mathbf{G}^T\mathbf{M}_2 
\end{equation*}
then the combined system is contracting as well. This may be seen as a special case of the feedback combination derived in \citep{tabareau2006notes}. The situation is even simpler for hierarchical combinations. Consider again two systems, independently contracting in some metrics, which are combined in hierarchy:

\begin{equation}\tag*{(Hierarchical Combination)}
\begin{split}
\dot{\mathbf{x}} = \mathbf{f}(\mathbf{x},t) \\
\dot{\mathbf{y}} = \mathbf{g}(\mathbf{y},t) + \mathbf{h}(\mathbf{x},t)
\end{split}
\end{equation}

where $\mathbf{h}(\mathbf{x},t)$ is a function with \textit{bounded} Jacobian. Then this combined system is contracting in a diagonal metric, as shown in \citep{lohmiller1998contraction}. By recursion, this extends to hierarchies of arbitrary depth. 

\section{Extended Discussion}
Given the paucity of existing theory on modular networks, our novel stability conditions and proof of concept combination architectures are a significant step in an important new direction. The ``network of networks" approach is evident in the biological brain, and has seen early practical success in applications such as AlphaGo. There is much evidence this line of questioning will be critical in the future, and our work is the first on stable \emph{modular} networks.

Furthermore, we develop an architecture based on such combinations of ``vanilla" RNNs that is both stable and achieves high performance on benchmark sequence classification tasks using few trainable parameters (small particularly for sequential CIFAR10). When considering just the facts about the network, it really has no business performing anywhere near as well as it does. Note also that without the stability condition in place, the network performance indeed drops substantially.

In order to facilitate the extension of this important line of thinking, we provide additional context on the limitations of our current approach as well as promising ideas for future directions in this section.

\subsection{Limitations}\label{Appendix:Limitations}
One drawback of our approach is that we parameterize each weight matrix in a special way to guarantee stability. In all cases, this requires us to parameterize matrices as the product of several other matrices. Thus, we gain stability at the cost of increasing the number of parameters, which can slow down training. 

Another current drawback is that we only consider constant metrics. In theory, contraction metrics can be state-dependent as well as time-varying. Thus it is possible that we have overly restricted the space of models we consider. Similarly, negative feedback is not the only way to preserve contraction when combining two contracting systems. There are known small-gain theorems in the contraction analysis literature which accomplish the same task \citep{modular}. However, parameterizing these conditions is less straightforward than parameterizing the negative-feedback condition. 

A third limitation of the present work is that it does not give a recipe on how to incorporate anatomical knowledge into the building of `RNNs of RNNs'. Our current approach is `bottom up', in the sense that we describe complicated networks which can be built from simpler ones while ensuring stability at every level of construction. However, for building biological models of the brain it is important to incorporate known anatomical detail (i.e V4 projects to PFC, PFC projects back to V4, etc). How to do this in a way that preserves stability is an open and interesting question. 

Lastly, we only tested our networks on sequential image classification benchmarks. Future work will include other benchmarks such as character or word-level language modeling. Additionally, while we conducted preliminary experiments exploring the role of scale (i.e number of subnetwork RNNs), we did not pursue this at the sizes reached by many modern deep learning applications. Thus it is currently unclear if the performance of these stability-constrained models will scale well enough with the number of subnetworks (or the number of neurons per subnetwork). Testing this correctly will require extensive experimentation with the various initialization and training settings. 

\subsection{Future Directions}\label{Appendix:FutureDirs}
There are numerous future directions enabled by this work. For example, Theorem \ref{theorem: Wdiagstabtheorem} suggests that a less restrictive contraction condition on W in terms of the eigenvalues of the symmetric part is possible and desirable. Meanwhile, Theorem \ref{theorem: Wdiagstabcounterexampletheorem} provides important groundwork in finding such a condition, as it shows the need for a time-varying metric. Investigation of input-dependent metrics could be a fruitful next line of research, and would have far-reaching implications in disciplines such as curriculum learning.

Furthermore, the beneficial impact of sparsity on training these stable models suggests a potential avenue for additional experimental work – in particular adding a regularizing sparsity term during training. This could allow Sparse Combo Net to have its internal subnetwork weights trained without losing the stability guarantee, and conversely it could allow SVD Combo Net to reap some of the performance benefits of Sparse Combo Net without giving up the flexibility allowed by training said internal weights. 

As described in the limitations above, a major experimental next step will be to test our architectures at greater scale and on more difficult tasks. Since `network of network’ approaches are becoming increasingly popular, our methodology is relevant to a variety of task types, including reinforcement learning applications. Given the biological inspiration, multi-modal learning tasks may also be of particular relevance.

\section{Proofs for Main Results}
\label{Appendix:Proofs}

\subsection{Proof of Feedback Combination Property}\label{subsection:feedback_combo_proof}
Here we apply existing contraction analysis results to derive equation \eqref{eq:feedback_combo}. Because \eqref{eq:feedback_combo} is a parameterization of a known contraction conditions \citep{modular}, we provide the following statement in the form of a corollary. \\

\begin{corollary}[Network of Networks]
Consider a collection of $p$ subnetwork RNNs governed by \eqref{eq:RNN}. Assume that these RNNs each have hidden-to-hidden weight matrices $\{ \mathbf{W}_1,\dots,\mathbf{W}_p \}$ and are independently contracting in metrics $\{ \mathbf{M}_1,\dots,\mathbf{M}_p \}$. Define the block matrix $\Tilde{\mathbf{W}} \equiv \text{BlockDiag}(\mathbf{W}_1,\dots,\mathbf{W}_p )$ and  $\Tilde{\mathbf{M}} \equiv \text{BlockDiag}(k_1\mathbf{M}_1,\dots,k_p\mathbf{M}_p )$ where $k_i > 0$. Also define the overall state vector $\Tilde{\mathbf{x}}^T \equiv (\mathbf{x}^T_1 \cdots \mathbf{x}^T_p)$, and finally $\Tilde{\mathbf{u}}^T \equiv (\mathbf{u}^T_1 \cdots \mathbf{u}^T_p)$. Finally, the matrix $\Tilde{\mathbf{L}}$ is a block matrix constructed from the matrices $\mathbf{L}_{ij}$ by placing them at the $i,j$ block position of $\Tilde{\mathbf{L}}$. Then if there exists a positive semi-definite matrix $\mathbf{Q}$ such that:
\[\Tilde{\mathbf{M}}\Tilde{\mathbf{L}} + \Tilde{\mathbf{L}}^T\Tilde{\mathbf{M}} = - \mathbf{Q} \]
then the following `network of networks' is globally contracting in metric $\Tilde{\mathbf{M}}$:
\begin{equation}\label{eq:comboRNN}
\begin{split}
\tau \dot{\Tilde{\mathbf{x}}} = -\Tilde{\mathbf{x}} + \Tilde{\mathbf{W}}\phi(\Tilde{\mathbf{x}}) + \Tilde{\mathbf{u}} + \Tilde{\mathbf{L}}\Tilde{\mathbf{x}} 
\end{split}
\end{equation}

\end{corollary}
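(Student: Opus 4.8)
\subsection*{Proof proposal}

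The plan is to verify the constant-metric contraction condition from Section~\ref{Appendix:Supplementary Math:supp_math} directly for the combined system \eqref{eq:comboRNN}. First I would compute the generalized Jacobian. Writing the right-hand side as $\mathbf{f}(\Tilde{\mathbf{x}},t) = -\Tilde{\mathbf{x}} + \Tilde{\mathbf{W}}\phi(\Tilde{\mathbf{x}}) + \Tilde{\mathbf{u}} + \Tilde{\mathbf{L}}\Tilde{\mathbf{x}}$ and letting $\mathbf{D} \equiv \text{diag}(\phi'(\Tilde{\mathbf{x}}))$ be the (block-diagonal, state-dependent) matrix of activation slopes, the Jacobian of $\dot{\Tilde{\mathbf{x}}} = \tau^{-1}\mathbf{f}$ is $\Tilde{\mathbf{J}} = \tau^{-1}(-\mathbf{I} + \Tilde{\mathbf{W}}\mathbf{D} + \Tilde{\mathbf{L}})$. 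Since $\Tilde{\mathbf{M}}$ is constant, its time derivative vanishes, so contraction in $\Tilde{\mathbf{M}}$ reduces to exhibiting a $\beta > 0$ with $\Tilde{\mathbf{M}}\Tilde{\mathbf{J}} + \Tilde{\mathbf{J}}^T\Tilde{\mathbf{M}} \preceq -\beta\Tilde{\mathbf{M}}$ holding uniformly in $\Tilde{\mathbf{x}}$ and $t$.

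The central idea is to split $\Tilde{\mathbf{J}}$ into the block-diagonal ``uncoupled'' part $\Tilde{\mathbf{J}}_0 \equiv \tau^{-1}(-\mathbf{I} + \Tilde{\mathbf{W}}\mathbf{D})$ and the coupling term $\tau^{-1}\Tilde{\mathbf{L}}$, so that $\Tilde{\mathbf{M}}\Tilde{\mathbf{J}} + \Tilde{\mathbf{J}}^T\Tilde{\mathbf{M}} = \bigl(\Tilde{\mathbf{M}}\Tilde{\mathbf{J}}_0 + \Tilde{\mathbf{J}}_0^T\Tilde{\mathbf{M}}\bigr) + \tau^{-1}\bigl(\Tilde{\mathbf{M}}\Tilde{\mathbf{L}} + \Tilde{\mathbf{L}}^T\Tilde{\mathbf{M}}\bigr)$. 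Because $\Tilde{\mathbf{M}}$, $\Tilde{\mathbf{W}}$, and $\mathbf{D}$ are all block-diagonal, the first bracket is itself block-diagonal with $i$th block $k_i(\mathbf{M}_i\mathbf{J}_i + \mathbf{J}_i^T\mathbf{M}_i)$, where $\mathbf{J}_i = \tau^{-1}(-\mathbf{I} + \mathbf{W}_i\mathbf{D}_i)$ is exactly the Jacobian of the isolated $i$th RNN \eqref{eq:RNN}. The hypothesis that subnetwork $i$ is contracting in $\mathbf{M}_i$ supplies a rate $\beta_i > 0$ with $\mathbf{M}_i\mathbf{J}_i + \mathbf{J}_i^T\mathbf{M}_i \preceq -\beta_i\mathbf{M}_i$; setting $\beta \equiv \min_i \beta_i > 0$ and reassembling the blocks yields $\Tilde{\mathbf{M}}\Tilde{\mathbf{J}}_0 + \Tilde{\mathbf{J}}_0^T\Tilde{\mathbf{M}} \preceq -\beta\Tilde{\mathbf{M}}$.

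It then remains to dispose of the coupling term, which is where the stated hypothesis enters: by assumption $\Tilde{\mathbf{M}}\Tilde{\mathbf{L}} + \Tilde{\mathbf{L}}^T\Tilde{\mathbf{M}} = -\mathbf{Q}$ with $\mathbf{Q} \succeq 0$, so $\tau^{-1}(\Tilde{\mathbf{M}}\Tilde{\mathbf{L}} + \Tilde{\mathbf{L}}^T\Tilde{\mathbf{M}}) = -\tau^{-1}\mathbf{Q} \preceq 0$ since $\tau > 0$. Adding the two bounds gives $\Tilde{\mathbf{M}}\Tilde{\mathbf{J}} + \Tilde{\mathbf{J}}^T\Tilde{\mathbf{M}} \preceq -\beta\Tilde{\mathbf{M}} - \tau^{-1}\mathbf{Q} \preceq -\beta\Tilde{\mathbf{M}}$, which is precisely the contraction condition in the constant metric $\Tilde{\mathbf{M}}$ (positive definite because each $\mathbf{M}_i \succ 0$ and $k_i > 0$). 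Global contraction with rate $\beta$ follows.

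I do not expect a serious obstacle: once the right decomposition is chosen, the argument is a direct application of the feedback/combination machinery of Section~\ref{Appendix:Supplementary Math:FB}. The only points requiring care are (i) confirming that block-diagonality of $\Tilde{\mathbf{M}}$ is exactly what makes the uncoupled part split into the per-subnetwork inequalities, since a non-block-diagonal metric would mix the subsystems and destroy the reduction, and (ii) making the per-subnetwork certificates uniform in $\Tilde{\mathbf{x}}$ and $t$ (each may hold with a distinct rate $\beta_i$ for a possibly state-dependent $\mathbf{D}_i$) by passing to the minimum rate. The free scalars $k_i$ play no role in the block-wise bound; their purpose is to enlarge the set of admissible couplings $\Tilde{\mathbf{L}}$ for which a valid $\mathbf{Q} \succeq 0$ exists.
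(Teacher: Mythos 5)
Your proposal is correct and follows essentially the same route as the paper's proof: both split the symmetric part of the combined Jacobian into the block-diagonal uncoupled term (handled by the per-subnetwork contraction certificates, with the overall rate taken as the slowest subnetwork rate) plus the interconnection term $\Tilde{\mathbf{M}}\Tilde{\mathbf{L}} + \Tilde{\mathbf{L}}^T\Tilde{\mathbf{M}} = -\mathbf{Q} \preceq 0$, which is discarded as negative semi-definite. The paper phrases this via the differential Lyapunov function $V = \frac{1}{2}\delta\mathbf{x}^T\Tilde{\mathbf{M}}\delta\mathbf{x}$ while you state the equivalent matrix inequality directly (and are somewhat more explicit about the $\tau$ factors and uniformity of the rate), but the argument is the same.
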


\begin{proof}

Consider the differential Lyapunov function:

\[V = \frac{1}{2}\delta\mathbf{x}^T\Tilde{\mathbf{M}}\delta\mathbf{x} \]

The time-derivative of this function is:

\[\dot{V} = \delta\mathbf{x}^T\Tilde{\mathbf{M}}\delta\dot{\mathbf{x}} = \delta\mathbf{x}^T (\underbrace{\Tilde{\mathbf{M}}\Tilde{\mathbf{J}} + \Tilde{\mathbf{J}}^T\Tilde{\mathbf{M}}}_{\text{Jacobian of RNNs before interconnection}} + \underbrace{\Tilde{\mathbf{M}}\Tilde{\mathbf{L}} + \Tilde{\mathbf{L}}^T\Tilde{\mathbf{M}}}_{\text{Interconnection Jacobian}} )\delta\mathbf{x}\]

Since we assume that the RNNs are contracting in isolation, the first term in this sum is less that the slowest contracting rate of the individual RNNs, which we call $\lambda > 0$, scaled by the corresponding $k$. Plugging in the definition of $\mathbf{Q}$, we see the second term in the sum is negative semi-definite, by construction. The time-derivative of $V$ is therefore upper-bounded by:
\[\dot{V} \leq -2\lambda V\]
which implies that the network of networks is contracting with rate $\lambda$. 
\end{proof}

\begin{corollary}
If $\Tilde{\mathbf{L}}$ can be written as:
\[ \Tilde{\mathbf{L}} \equiv \mathbf{B} - \Tilde{\mathbf{M}}^{-1}\mathbf{B}^T\Tilde{\mathbf{M}} - \Tilde{\mathbf{M}}^{-1}\mathbf{C} \]
where $\mathbf{B}$ is an arbitrary square matrix, and $\mathbf{C}$ is a matrix whose symmetric part is positive semidefinite, then the above stability criterion is satisfied
\[ \Tilde{\mathbf{M}}\Tilde{\mathbf{L}} + \Tilde{\mathbf{L}}^T\Tilde{\mathbf{M}} = \Tilde{\mathbf{M}}(\mathbf{B} - \Tilde{\mathbf{M}}^{-1}\mathbf{B}^T\Tilde{\mathbf{M}}- \Tilde{\mathbf{M}}^{-1}\mathbf{C}) + (\mathbf{B} - \Tilde{\mathbf{M}}^{-1}\mathbf{B}^T\Tilde{\mathbf{M}}- \Tilde{\mathbf{M}}^{-1}\mathbf{C})^T\Tilde{\mathbf{M}} = - [\mathbf{C}  + \mathbf{C} ^T] = -\mathbf{Q} \]
In this case we may identify $\mathbf{Q} = \mathbf{C} + \mathbf{C}^T$. For the experiments in the main text, we use $\mathbf{C} = \mathbf{0}$, although one could also learn $\mathbf{C}$ via a suitable parametrization.

\end{corollary}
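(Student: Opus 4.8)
The plan is to verify the hypothesis of the preceding corollary directly: I want to show that the proposed parameterization forces $\Tilde{\mathbf{M}}\Tilde{\mathbf{L}} + \Tilde{\mathbf{L}}^T\Tilde{\mathbf{M}} = -\mathbf{Q}$ for an explicit positive semidefinite $\mathbf{Q}$, after which contraction of the network of networks follows immediately by invoking that corollary. Since $\Tilde{\mathbf{M}} = \text{BlockDiag}(k_1\mathbf{M}_1,\dots,k_p\mathbf{M}_p)$ is a block-diagonal assembly of constant symmetric positive-definite metrics, the one structural fact I would record at the outset is that $\Tilde{\mathbf{M}}$ is symmetric and invertible, so $\Tilde{\mathbf{M}}^T = \Tilde{\mathbf{M}}$ and $(\Tilde{\mathbf{M}}^{-1})^T = \Tilde{\mathbf{M}}^{-1}$; this is the only property of $\Tilde{\mathbf{M}}$ the computation actually uses.

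First I would left-multiply the given expression by $\Tilde{\mathbf{M}}$ and simplify using $\Tilde{\mathbf{M}}\Tilde{\mathbf{M}}^{-1} = \mathbf{I}$, obtaining $\Tilde{\mathbf{M}}\Tilde{\mathbf{L}} = \Tilde{\mathbf{M}}\mathbf{B} - \mathbf{B}^T\Tilde{\mathbf{M}} - \mathbf{C}$. Next I would form $\Tilde{\mathbf{L}}^T$, transposing each product term and invoking the symmetry of $\Tilde{\mathbf{M}}$ (so that $(\Tilde{\mathbf{M}}^{-1}\mathbf{B}^T\Tilde{\mathbf{M}})^T = \Tilde{\mathbf{M}}\mathbf{B}\Tilde{\mathbf{M}}^{-1}$), then right-multiply by $\Tilde{\mathbf{M}}$ to get $\Tilde{\mathbf{L}}^T\Tilde{\mathbf{M}} = \mathbf{B}^T\Tilde{\mathbf{M}} - \Tilde{\mathbf{M}}\mathbf{B} - \mathbf{C}^T$. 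Adding the two expressions, the $\mathbf{B}$-dependent cross terms $\Tilde{\mathbf{M}}\mathbf{B}$ and $\mathbf{B}^T\Tilde{\mathbf{M}}$ cancel in pairs, leaving $\Tilde{\mathbf{M}}\Tilde{\mathbf{L}} + \Tilde{\mathbf{L}}^T\Tilde{\mathbf{M}} = -(\mathbf{C} + \mathbf{C}^T)$. I would then set $\mathbf{Q} \equiv \mathbf{C} + \mathbf{C}^T$ and note that $\mathbf{Q} \succeq 0$ precisely because the symmetric part of $\mathbf{C}$ is assumed positive semidefinite, which closes the verification.

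Conceptually the cancellation is not accidental: writing $\mathbf{S} \equiv \mathbf{B} - \Tilde{\mathbf{M}}^{-1}\mathbf{B}^T\Tilde{\mathbf{M}}$, one checks $\Tilde{\mathbf{M}}\mathbf{S} = -\mathbf{S}^T\Tilde{\mathbf{M}}$, i.e. $\mathbf{S}$ is skew-symmetric in the metric $\Tilde{\mathbf{M}}$ — exactly the generalized skew-symmetry condition $\mathbf{M}_i\mathbf{L}_{ij} = -\mathbf{L}_{ji}^T\mathbf{M}_j$ emphasized in the main text — so such a term contributes nothing to $\Tilde{\mathbf{M}}\Tilde{\mathbf{L}} + \Tilde{\mathbf{L}}^T\Tilde{\mathbf{M}}$, and only the $-\Tilde{\mathbf{M}}^{-1}\mathbf{C}$ term survives to supply the dissipative block $-(\mathbf{C}+\mathbf{C}^T)$. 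There is essentially no hard step here: the argument is a short self-contained matrix identity, so the only place demanding care is the transpose bookkeeping and the explicit use of the symmetry of $\Tilde{\mathbf{M}}$. The corollary is really a reparameterization that makes the skew-in-metric feedback manifest while exposing an optional positive-semidefinite slack term $\mathbf{C}$ that one may either set to $\mathbf{0}$ or learn.
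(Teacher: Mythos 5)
Your proposal is correct and is essentially the same argument as the paper's: a direct expansion of $\Tilde{\mathbf{M}}\Tilde{\mathbf{L}} + \Tilde{\mathbf{L}}^T\Tilde{\mathbf{M}}$ using the symmetry of $\Tilde{\mathbf{M}}$, with the $\mathbf{B}$-terms cancelling and $\mathbf{Q} = \mathbf{C} + \mathbf{C}^T \succeq 0$ supplied by the assumption on the symmetric part of $\mathbf{C}$. Your added remark identifying $\mathbf{B} - \Tilde{\mathbf{M}}^{-1}\mathbf{B}^T\Tilde{\mathbf{M}}$ as skew-symmetric in the metric matches the intuition the paper gives in the main text.
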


\subsection{Proof of Theorem \ref{theorem: absolutevaluetheorem}}
Our first theorem is motivated by the observation that if the y-system (described in Section \ref{Appendix:Supplementary Math:two_different_rnns}) is to be interpreted as a vector of firing rates, it must stay positive for all time. For a linear, time-invariant system with positive states, diagonal stability is equivalent to stability. Therefore a natural question is if diagonal stability of a linearized y-system implies anything about stability of the nonlinear system. More formally, given an excitatory neural network (i.e $ \forall ij, W_{ij} \geq 0$), if the linear system
\[\dot{\mathbf{x}} = -\mathbf{x} + g\mathbf{W}\mathbf{x} \]
is stable, then there exists a positive diagonal matrix P such that:

\[\mathbf{P}(g\mathbf{W}-\mathbf{I}) +(g\mathbf{W}-\mathbf{I})^T\mathbf{P} \prec 0 \]
The following theorem shows that the nonlinear system (\ref{eq:RNN}) is indeed contracting in metric $\mathbf{P}$, and extends this result to a more general $\mathbf{W}$ by considering only the magnitudes of the weights.

\begin{theorem*}
Let $|\mathbf{W}|$ denote the matrix formed by taking the element-wise absolute value of $\mathbf{W}$.  If there exists a positive, diagonal $\mathbf{P}$ such that:
\[\mathbf{P}(g|\mathbf{W}|-\mathbf{I}) +(g|\mathbf{W}|-\mathbf{I})^T\mathbf{P} \prec 0 \]
then \eqref{eq:RNN} is contracting in metric $\mathbf{P}$. Moreover, if $W_{ii} \leq 0$, then $|W|_{ii}$ may be set to zero to reduce conservatism.
\end{theorem*}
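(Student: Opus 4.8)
The plan is to work entirely within the differential (contraction) framework of Section~\ref{Appendix:Supplementary Math:supp_math}. First I would compute the Jacobian of \eqref{eq:RNN}, namely $\mathbf{J} = \tau^{-1}(-\mathbf{I} + \mathbf{W}\mathbf{D})$, where $\mathbf{D} = \mathrm{diag}(\phi'(x_1),\dots,\phi'(x_n))$ is a state-dependent diagonal matrix whose entries satisfy $0 \le D_{ii} \le g$ by the assumptions on $\phi$. Because the input $\mathbf{u}(t)$ enters \eqref{eq:RNN} additively, it does not appear in $\mathbf{J}$, so contraction in the constant metric $\mathbf{P}$ reduces to showing that the symmetric part of $\mathbf{P}\mathbf{J}$ is uniformly negative definite. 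Taking the differential Lyapunov function $V = \tfrac{1}{2}\,\delta\mathbf{x}^T \mathbf{P}\,\delta\mathbf{x}$ and differentiating along the variational dynamics $\delta\dot{\mathbf{x}} = \mathbf{J}\,\delta\mathbf{x}$, this amounts to establishing $\dot V < 0$ for every admissible $\mathbf{D}$ and every $\delta\mathbf{x}\neq 0$.

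The crux --- and the step I expect to be the main obstacle --- is removing the dependence on the unknown, state-varying matrix $\mathbf{D}$ so that only the constant hypothesis involving $g|\mathbf{W}|$ remains. I would handle this by passing to absolute values entrywise. Writing $z_i = |\delta x_i|$ and $\mathbf{z} = (z_1,\dots,z_n)^T$, and using that $\mathbf{P}$ is positive diagonal with each $D_{jj}\in[0,g]$, one obtains
\[
\delta\mathbf{x}^T \mathbf{P}\mathbf{W}\mathbf{D}\,\delta\mathbf{x}
= \sum_{i,j} P_{ii}\,W_{ij}\,D_{jj}\,\delta x_i\,\delta x_j
\;\le\; g \sum_{i,j} P_{ii}\,|W_{ij}|\,z_i z_j
= g\,\mathbf{z}^T \mathbf{P}|\mathbf{W}|\,\mathbf{z}.
\]
Here $D_{jj}\ge 0$ is exactly what lets me pass to $W_{ij}\,\delta x_i\,\delta x_j \le |W_{ij}|\,z_i z_j$ without reversing the inequality, while $D_{jj}\le g$ supplies the factor $g$; this is precisely where monotonicity and bounded slope of $\phi$ enter.

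To conclude, I would substitute this bound into $\dot V = \tau^{-1}\big(-\delta\mathbf{x}^T\mathbf{P}\,\delta\mathbf{x} + \delta\mathbf{x}^T\mathbf{P}\mathbf{W}\mathbf{D}\,\delta\mathbf{x}\big)$ and use $\delta\mathbf{x}^T\mathbf{P}\,\delta\mathbf{x} = \mathbf{z}^T\mathbf{P}\,\mathbf{z}$, so that
\[
\dot V \;\le\; \tfrac{1}{2\tau}\,\mathbf{z}^T\!\left[\mathbf{P}(g|\mathbf{W}|-\mathbf{I}) + (g|\mathbf{W}|-\mathbf{I})^T\mathbf{P}\right]\!\mathbf{z},
\]
the symmetric bracket being picked out automatically by the quadratic form. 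The hypothesis states precisely that this bracket is negative definite, so there is $\beta>0$ with bracket $\preceq -\beta\mathbf{I}$; since $\|\mathbf{z}\| = \|\delta\mathbf{x}\|$ this yields $\dot V \le -\tfrac{\beta}{2\tau}\|\delta\mathbf{x}\|^2 < 0$, which is exactly contraction in the metric $\mathbf{P}$.

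Finally, for the ``moreover'' refinement I would observe that the diagonal contribution to the quadratic form is $\sum_i P_{ii}\,W_{ii}\,D_{ii}\,(\delta x_i)^2$. When $W_{ii}\le 0$ this is already $\le 0$, since $D_{ii}\ge 0$ and $P_{ii}>0$, so these terms only aid contraction; hence in the bounding step I may replace $g|W_{ii}|$ by $0$, i.e. set $|W|_{ii} = 0$, which weakens the matrix inequality that must be verified and thereby reduces conservatism.
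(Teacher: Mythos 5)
Your proposal is correct and follows essentially the same route as the paper's own proof: a differential Lyapunov function $V$ in the diagonal metric $\mathbf{P}$, the entrywise absolute-value bound $\delta\mathbf{x}^T\mathbf{P}\mathbf{W}\mathbf{D}\,\delta\mathbf{x} \le g\,|\delta\mathbf{x}|^T\mathbf{P}|\mathbf{W}|\,|\delta\mathbf{x}|$ exploiting $0 \le D_{jj} \le g$ and positivity of $\mathbf{P}$, the identity $\delta\mathbf{x}^T\mathbf{P}\,\delta\mathbf{x} = |\delta\mathbf{x}|^T\mathbf{P}\,|\delta\mathbf{x}|$, and the observation that diagonal terms with $W_{ii}\le 0$ contribute non-positively and may be dropped. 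The only cosmetic difference is that you carry the factor $\tau^{-1}$ and the rate constant $\beta$ explicitly, which the paper leaves implicit.
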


This condition is particularly straightforward in the common special case where the network does not have any self weights, with the leak term driving stability. While it can be applied to a more general $\mathbf{W}$, the condition will of course not be met if the network was relying on highly negative values on the diagonal of $\mathbf{W}$ for linear stability. As demonstrated by counterexample in the proof of Theorem \ref{theorem: absolutevaluetheorem}, it can be impossible to use the same metric $\mathbf{P}$ for the nonlinear RNN in such cases.

Theorem \ref{theorem: absolutevaluetheorem} allows many weight matrices with low magnitudes or a generally sparse structure to be verified as contracting in the nonlinear system \eqref{eq:RNN}, by simply checking a linear stability condition (as linear stability is equivalent to diagonal stability for Metzler matrices too \citep{narendra2010metzler}).

Beyond verifying contraction, Theorem \ref{theorem: absolutevaluetheorem} actually provides a metric, with little need for additional computation. Not only is it of inherent interest that the same metric can be shared across systems in this case, it is also of use in machine learning applications, where stability certificates are becoming increasingly necessary. Critically, it is feasible to enforce the condition during training via L2 regularization on $\mathbf{W}$. More generally, there are a variety of systems of interest that meet this condition but do not meet the well-known maximum singular value condition, including those with a hierarchical structure.

\begin{proof}
Consider the differential, quadratic Lyapunov function:

\[V = \delta\mathbf{x}^T\mathbf{P} \delta\mathbf{x}\]
where $\mathbf{P} \succ 0 $ is diagonal. The time derivative of $V$ is:

\[\dot{V} = 2\delta\mathbf{x}^T\mathbf{P} \dot{\delta\mathbf{x}} = 2\delta\mathbf{x}^T\mathbf{P} \mathbf{J}\delta\mathbf{x} = -2\delta\mathbf{x}^T\mathbf{P} \delta\mathbf{x} + 2\delta\mathbf{x}^T\mathbf{P}\mathbf{W} \mathbf{D}\delta\mathbf{x} \]

where $\mathbf{D}$ is a diagonal matrix such that $\mathbf{D}_{ii} = \frac{d\phi_i}{dx} \geq 0$.  We can upper bound the quadratic form on the right as follows:

\[\begin{split} \delta\mathbf{x}^T\mathbf{P}\mathbf{W} \mathbf{D}\delta\mathbf{x} = \sum_{ij} P_i W_{ij} D_j \delta x_i \delta x_j \leq  \\ \sum_{i}P_i W_{ii} D_i |\delta x_i|^2 + \sum_{ij, i \neq j} P_i |W_{ij}| D_j |\delta x_i| |\delta x_j| \leq g|\delta \mathbf{x}|^T\mathbf{P}|\mathbf{W}||\delta \mathbf{x}| \end{split}\]

If $W_{ii} \leq 0 $, the term $P_i W_{ii} D_i |\delta x_i|^2$ contributes non-positively to the overall sum, and can therefore be set to zero without disrupting the inequality. Now using the fact that $\mathbf{P}$ is positive and diagonal, and therefore $\delta\mathbf{x}^T\mathbf{P} \delta\mathbf{x} = |\delta\mathbf{x}|^T\mathbf{P} |\delta\mathbf{x}|$,  we can upper bound $\dot{V}$ as:

\[\dot{V} \leq |\delta\mathbf{x}|^T (-2\mathbf{P} + \mathbf{P}|\mathbf{W}| + |\mathbf{W}|\mathbf{P})|\delta\mathbf{x}| = |\delta\mathbf{x}|^T[ (\mathbf{P}(|\mathbf{W}|-\mathbf{I}) + (|\mathbf{W}|^T -\mathbf{I})\mathbf{P})]|\delta\mathbf{x}|\]

where $|W|_{ij} = |W_{ij}|$, and $|W|_{ii} = 0$ if $W_{ii} \leq 0$ and $|W|_{ii} = |W_{ii}|$ if $W_{ii} > 0$. This completes the proof.

Note that $\mathbf{W}-\mathbf{I}$ is Metzler, and therefore will be Hurwitz stable if and only if $\mathbf{P}$ exists \citep{narendra2010metzler}. \\

It is also worth noting that highly negative diagonal values in $\mathbf{W}$ will prevent the same metric $\mathbf{P}$ from being used for the nonlinear system. Therefore the method used in this proof cannot feasibly be adapted to further relax the treatment of the diagonal part of $\mathbf{W}$. 

The intuitive reason behind this is that in the symmetric part of the Jacobian, $\frac{\mathbf{P}\mathbf{W}\mathbf{D} + \mathbf{D}\mathbf{W}^{T}\mathbf{P}}{2} - \mathbf{P}$, the diagonal self weights will also be scaled down by small $\mathbf{D}$, while the leak portion $-\mathbf{P}$ remains untouched by $\mathbf{D}$.

Now we actually demonstrate a counterexample, presenting a $2 \times 2$ symmetric Metzler matrix $\mathbf{W}$ that is contracting in the identity in the linear system, but cannot be contracting \textit{in the identity} in the nonlinear system \eqref{eq:RNN}: 

\begin{align*}
\mathbf{W} = \begin{bmatrix} 
    -9 & 2.5 \\
    2.5 & 0 
  \end{bmatrix}
\end{align*}
  
\noindent To see that it is not possible for the more general nonlinear system with these weights to be contracting in the identity, take 
  $\mathbf{D} = \begin{bmatrix} 
    0 & 0 \\
    0 & 1 
  \end{bmatrix}$. Now
  
\begin{align*}
    (\mathbf{W}\mathbf{D})_{sym} - \mathbf{I} = \begin{bmatrix} 
    -1 & 1.25 \\
    1.25 & -1 
  \end{bmatrix}
\end{align*}

\noindent which has a positive eigenvalue of $\frac{1}{4}$.
  
  \end{proof}

\subsection{Proof of Theorem \ref{theorem: symmetricweightstheorem}}
While regularization may push networks towards satisfying Theorem \ref{theorem: absolutevaluetheorem}, strictly enforcing the condition during optimization is not straightforward. This motivates the rest of our theorems, which derive contraction results for specially structured weight matrices. Unlike Theorem \ref{theorem: absolutevaluetheorem}, these results have direct parameterizations which can easily be plugged into modern optimization libraries. \\

\begin{theorem*}
If $\mathbf{W} = \mathbf{W}^T$ and $ \ g\mathbf{W} \prec \mathbf{I}$, and and $ \phi' > 0$ then (\ref{eq:RNN}) is contracting.

\end{theorem*}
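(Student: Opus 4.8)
The plan is to produce a single \emph{constant} contraction metric by reading the variational dynamics as an absolute-stability problem and applying the multivariable circle criterion. First I would linearize along an arbitrary trajectory: the virtual displacement obeys $\tau\,\delta\dot{\mathbf{x}} = (-\mathbf{I} + \mathbf{W}\mathbf{D}(t))\,\delta\mathbf{x}$, where $\mathbf{D}(t) = \mathrm{diag}(\phi'(x_i(t)))$ satisfies $0 \prec \mathbf{D}(t) \preceq g\mathbf{I}$ because $0 < \phi' \le g$. Contraction in a constant metric $\mathbf{M}\succ 0$ is exactly uniform exponential stability of this linear time-varying system, and it suffices to find one $\mathbf{M}$ with $\mathbf{M}(-\mathbf{I}+\mathbf{W}\mathbf{D}) + (-\mathbf{I}+\mathbf{D}\mathbf{W})\mathbf{M}\preceq -\epsilon\mathbf{M}$ holding \emph{uniformly} over every admissible diagonal $\mathbf{D}\in[0,g]$, since the realized $\mathbf{D}(t)$ is one such matrix. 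A preliminary observation motivates the whole approach: the identity metric is \emph{not} enough. A $2\times 2$ symmetric $\mathbf{W}$ with a strongly negative diagonal entry (e.g.\ a diagonal entry near $-10$, off-diagonal near $1$) can satisfy $g\mathbf{W}\prec\mathbf{I}$ yet have $\mathrm{sym}(\mathbf{W}\mathbf{D})\not\prec\mathbf{I}$ for some $\mathbf{D}\in[0,g]$, so a genuinely nontrivial, and in general non-diagonal, metric must be constructed.

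Next I would recast the variational system as a Lur'e interconnection: the linear part $G(s) = (\tau s + 1)^{-1}\mathbf{W}$ in feedback with the diagonal, time-varying, sector-$[0,g]$ ``gain'' $\mathbf{D}(t)$ (the feedback enters with a positive sign, $\mathbf{u}=\mathbf{D}\,\delta\mathbf{x}$). The multivariable circle criterion guarantees robust exponential stability of this loop for \emph{all} admissible $\mathbf{D}(t)$ provided $\mathbf{I}-gG(s)$ is strictly positive real, i.e.\ $\tfrac{1}{g}\mathbf{I} - \mathrm{Re}\,G(j\omega)\succ 0$ for every $\omega$. Here is where the symmetry hypothesis does the essential work: because $\mathbf{W}=\mathbf{W}^T$, one has $\mathrm{Re}\,G(j\omega) = (1+\omega^2\tau^2)^{-1}\mathbf{W}$, a \emph{positive scalar multiple of} $\mathbf{W}$. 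Hence $\mathrm{Re}\,G(j\omega)\preceq \mathbf{W}$ in the Loewner order, the frequency inequality is tightest at $\omega=0$, and there it reads precisely $g\mathbf{W}\prec\mathbf{I}$ --- the theorem's hypothesis --- while it is strictly slack at all other frequencies (and reduces to $\tfrac1g\mathbf{I}\succ0$ as $\omega\to\infty$). Crucially, no lower bound on $\mathbf{W}$ is needed, which is exactly why highly indefinite symmetric $\mathbf{W}$ remain contracting despite the failure of the identity metric.

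Finally I would invoke the Kalman--Yakubovich--Popov lemma to convert this satisfied, uniformly-gapped frequency inequality into the existence of a constant $\mathbf{M}\succ0$ (the contraction metric) realizing the matrix inequality $\mathbf{M}(-\mathbf{I}+\mathbf{W}\mathbf{D}) + (-\mathbf{I}+\mathbf{D}\mathbf{W})\mathbf{M}\preceq -\epsilon\mathbf{M}$ uniformly in $\mathbf{D}\in[0,g]$; evaluating at the realized $\mathbf{D}(t)$ yields contraction of \eqref{eq:RNN} in metric $\mathbf{M}$ with rate of order $\epsilon/\tau$. I expect the main obstacle to be precisely the step that symmetry resolves: one must rule out \emph{any} destabilizing time-varying diagonal gain, equivalently show the whole matrix family $\{-\mathbf{I}+\mathbf{W}\mathbf{D} : 0\preceq\mathbf{D}\preceq g\mathbf{I}\}$ admits a common quadratic Lyapunov function. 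Because the set of $\mathbf{D}$ is a box and the condition is affine in $\mathbf{D}$ with $\lambda_{\min}$ concave, it would in principle reduce to the vertices $\mathbf{D}\in\{0,g\}^n$; the frequency-domain route above is what makes this tractable in closed form rather than as an unwieldy vertex enumeration, and it is the only place where $\mathbf{W}=\mathbf{W}^T$ is genuinely used.
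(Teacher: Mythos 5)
Your proposal is correct in substance but takes a genuinely different route from the paper. The paper's proof is algebraic and fully constructive: it posits $\mathbf{W} = \mathbf{R} - \mathbf{P}$ with $\mathbf{R}=\mathbf{R}^T$, $\mathbf{P}\succ 0$, writes the differential Lyapunov condition in metric $\mathbf{P}$ as a block quadratic form, reduces it by a Schur complement to $(2-\beta)\mathbf{P} \succeq \tfrac{g}{2}\mathbf{R}\mathbf{R}$, chooses $\mathbf{P}=\gamma^2\mathbf{R}\mathbf{R}$ with $\gamma^2 = \tfrac{g}{2(2-\beta)}$, and then shows via the symmetric square root $\mathbf{S} = \sqrt{\tfrac{1}{4\gamma^2}\mathbf{I}-\mathbf{W}}$ that every symmetric $\mathbf{W}$ with $g\mathbf{W}\prec\mathbf{I}$ admits such a decomposition, yielding the explicit metric $\mathbf{M}=\gamma^2\mathbf{R}\mathbf{R}$. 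You instead treat the variational system as a Lur'e problem and invoke the multivariable circle criterion plus KYP: the same hypothesis $g\mathbf{W}\prec\mathbf{I}$ appears as the $\omega=0$ instance of the SPR condition on $\mathbf{I}-gG(s)$, and KYP supplies a constant (non-explicit) metric valid uniformly over the sector. What your approach buys is conceptual clarity about \emph{why} symmetry matters: for nonsymmetric $\mathbf{W}$ the skew part contributes to the Hermitian part of $G(j\omega)$ at $\omega\neq 0$, which connects directly to the paper's Theorem \ref{theorem: Wdiagstabcounterexampletheorem} counterexample showing that antisymmetry defeats any constant metric. What the paper's approach buys is an elementary, self-contained construction with the metric in closed form.

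Two slips to repair, neither fatal. First, the claim $\mathrm{Re}\,G(j\omega)\preceq\mathbf{W}$ in the Loewner order is false whenever $\mathbf{W}$ has negative eigenvalues, since $c\mathbf{W}\preceq\mathbf{W}$ for $c\in(0,1]$ requires $\mathbf{W}\succeq 0$. The conclusion you want survives via a one-line rescaling instead: $\tfrac{1}{g}\mathbf{I}-c\,\mathbf{W} = c\bigl(\tfrac{1}{cg}\mathbf{I}-\mathbf{W}\bigr)\succeq c\bigl(\tfrac{1}{g}\mathbf{I}-\mathbf{W}\bigr)\succ 0$, with the uniform gap $\lambda_{\min}\bigl(\tfrac1g\mathbf{I}-c\mathbf{W}\bigr) = \tfrac1g - c\,\lambda_{\max}(\mathbf{W}) \geq \min\bigl\{\tfrac1g,\ \tfrac1g-\lambda_{\max}(\mathbf{W})\bigr\}>0$. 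Second, your motivating $2\times 2$ example (diagonal near $-10$, off-diagonal near $1$) does not actually defeat the identity metric: with off-diagonal $b$ and $\mathbf{D}=\mathrm{diag}(0,1)$, the symmetric part of $\mathbf{W}\mathbf{D}$ has eigenvalues $\pm b/2$, so one needs $b>2$ (together with $b^2 < 1+|a|$ to keep $g\mathbf{W}\prec\mathbf{I}$); the paper's own example in the appendix, $a=-9$, $b=2.5$, is of exactly this form. The qualitative point you draw from it is right, but the stated numbers are not a counterexample.
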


When $\mathbf{W}$ is symmetric, (\ref{eq:RNN}) may be seen as a continuous-time Hopfield network. Continuous-time Hopfield networks with symmetric weights were recently shown to be closely related to Transformer architectures \citep{krotov2020large,ramsauer2020hopfield}. Specifically, the dot-product attention rule may be seen as a discretization of the continuous-time Hopfield network with softmax activation function \citep{krotov2020large}.  
Our results here provide a simple sufficient (and nearly necessary, see above remark) condition for global exponential stability of a given \textit{trajectory} for the Hopfield network. In the case where the input into the network is constant, this trajectory is a fixed point. Moreover, each trajectory associated with a unique input is guaranteed to be unique. Finally, we note that our results are flexible with respect to activation functions so long as they satisfy the slope-restriction condition. This flexibility may be useful when, for example, considering recent work showing that standard activation functions may be advantageously replaced by attention mechanisms \citep{dai2020atac}.

\begin{proof}
We begin by writing $\mathbf{W} = \mathbf{R} - \mathbf{P}$ for some unknown $\mathbf{R} = \mathbf{R}^T$ and $\mathbf{P} = \mathbf{P}^T \succ 0$. The approach of this proof is to show by construction that the condition $g\mathbf{W} \prec \mathbf{I}$ implies the existence of an $\mathbf{R}$ and $\mathbf{P}$ such that the system is contracting in metric $\mathbf{P}$. We consider the $y$ version of the RNN, which as discussed above is equivalent to the $x$ version via an affine transformation.

The differential Lyapunov condition associated to the RNN is:

\begin{equation}\label{eq:first_inequality}
\delta \bx ^T [-2\bM + \bM \bD \bW + \bW \bD \bM   + \beta \bM ]\delta \bx \leq 0   
\end{equation}
Where $\bM,\bW \in \mathbb{R}^{n \times n}$. Let us now make the substitution $\bM = \bP$ and $\bW = \bR-\bP$:

\begin{equation}
\delta \bx ^T [-2\bP + \bP\bD(\bR-\bP) + (\bR-\bP)\bD\bP   + \beta \bP ]\delta \bx \leq 0   
\end{equation}
Collecting terms, we get:

\begin{equation}\label{eq:diff_lyap}
\delta \bx ^T [-2\bP + \bP\bD\bR + \bR\bD\bP-2\bP\bD\bP   + \beta \bP ]\delta \bx \leq 0   
\end{equation}
We can rewrite \eqref{eq:diff_lyap} as a quadratic form over a block matrix, as follows:

\begin{equation}\label{eq:diff_lyap_matrix}
\begin{bmatrix} \delta \bx^T & \delta \bx^T \end{bmatrix} \begin{bmatrix} (\beta-2)\bP & \bR\bD\bP  \\ \bP\bD\bR &  -2\bP\bD\bP \end{bmatrix}\begin{bmatrix} \delta \bx \\ \delta \bx \end{bmatrix} \leq 0
\end{equation}
Now the question becomes, when is \eqref{eq:diff_lyap_matrix} satisfied?  One way to ensure that \eqref{eq:diff_lyap_matrix} is satisfied is to ensure that the associated block matrix is always (i.e for all $\bD$) negative semi-definite. In that case the inequality will hold over \textit{all} possible vectors, not just $\begin{bmatrix} \delta \bx & \delta \bx \end{bmatrix}^T$. In other words, the question is now what constraints on the sub-matrices $\bP,\bD$ and $\bR$ ensure that:

\begin{equation}\label{eq:general_block_matrix}
\forall \mathbf{y}  \in \mathbb{R}^{2n}, \hspace{0.5cm} 
\mathbf{y}^T \begin{bmatrix} (2-\beta)\bP & -\bR\bD\bP \\ -\bP\bD\bR &  2\bP\bD\bP \end{bmatrix} \mathbf{y} \geq 0 
\end{equation}
Note that we have multiplied both sides of the inequality by a minus sign. But this is nothing but the definition of a positive semi-definite matrix. Using the Schur complement \citep{gallier2020schur}(Proposition 2.1), we know that the block matrix is positive semi-definite iff $\bP\bD\bP \succ 0 $ and:

\begin{align*}
\begin{split}
(2-\beta)\mathbf{P}  - \mathbf{R}\mathbf{D}\mathbf{P}(2\mathbf{P}\mathbf{D}\mathbf{P})^{-1}\mathbf{P}\mathbf{D}\mathbf{R}  = \\
(2-\beta)\mathbf{P}  - \frac{1}{2}(\mathbf{R}\mathbf{D}\mathbf{R}) \succeq  (2-\beta)\mathbf{P}  - \frac{g}{2}(\mathbf{R}\mathbf{R}) \succeq 0 
\end{split}
\end{align*}
We continue by setting $\mathbf{P} = \gamma^2\mathbf{R}\mathbf{R}$ with $\gamma^2 = \frac{g}{2(2-\beta)}$, so that the above inequality is satisfied. At this point, we have shown that if $\mathbf{W}$ can be written as:

\[\mathbf{W} = \mathbf{R}- \gamma^2\mathbf{R}\mathbf{R}\]
then (\ref{eq:RNN}) is contracting in metric $\mathbf{M} = \gamma^2\mathbf{R}\mathbf{R}$. What remains to be shown is that if the condition:

\[g\mathbf{W} -\mathbf{I} \prec 0 \]
Is satisfied, then this implies the existence of an $\mathbf{R}$ such that the above is true. To show that this is indeed the case, assume that:
\[\frac{1}{4\gamma^2}\mathbf{I} - \mathbf{W} \succeq 0 \]
Substituting in the definition of $\gamma$, this is just the statement that:
\[\frac{2(2-\beta)}{4g}\mathbf{I} - \mathbf{W} \succeq 0 \]
Setting $\beta = 2\lambda > 0 $, this yields:
\[(1-\lambda)\mathbf{I} \succeq g\mathbf{W} \]
Since $\mathbf{W}$ is symmetric by assumption, we have the eigendecomposition:

\[\frac{1}{4\gamma^2}\mathbf{I} - \mathbf{W}= \mathbf{V}(\frac{1}{4\gamma^2}\mathbf{I}-\mathbf{\Lambda})\mathbf{V}^T \]
where $\mathbf{V}^T\mathbf{V} = \mathbf{I}$ and $\mathbf{\Lambda}$ is a diagonal matrix containing the eigenvalues of $\mathbf{W}$. Denote the symmetric square-root of this expression as $\mathbf{S}$:
\[\mathbf{S} = \mathbf{V}\sqrt{(\frac{1}{4\gamma^2}\mathbf{I}-\mathbf{\Lambda})}\mathbf{V}^T =\mathbf{S}^T \]
Which implies that:
\[\frac{1}{4\gamma^2}\mathbf{I} - \mathbf{W} = \mathbf{S}^T\mathbf{S} \]
We now define $\mathbf{R}$ in terms of $\mathbf{S}$ as follows:
\[\mathbf{R} = \frac{1}{\gamma}\mathbf{S} + \frac{1}{2\gamma^2}\mathbf{I}\]
Which means that:
\[\frac{1}{4\gamma^2}\mathbf{I} - \mathbf{W} = (\gamma\mathbf{R} - \frac{1}{2\gamma}\mathbf{I})(\gamma\mathbf{R} - \frac{1}{2\gamma}\mathbf{I}) \]
Expanding out the right side, we get:
\[\frac{1}{4\gamma^2}\mathbf{I} - \mathbf{W} = \gamma^2 \mathbf{R}\mathbf{R} - \mathbf{R} + \frac{1}{4\gamma^2}\mathbf{I} \]
Subtracting $\frac{1}{4\gamma^2}\mathbf{I}$ from both sides yields:
\[\mathbf{W} =  \mathbf{R}-\gamma^2 \mathbf{R}\mathbf{R}\]
As desired.

\end{proof} 

\subsection{Proof of Theorem \ref{theorem: PQPtheorem}}

\begin{theorem*}
If there exists positive diagonal matrices $\mathbf{P}_1$ and $\mathbf{P}_2$, as well as $\mathbf{Q} = \mathbf{Q} ^T \succ 0$ such that
\[ \mathbf{W} = -\mathbf{P}_1 \mathbf{Q} \mathbf{P}_2 \]
then (\ref{eq:RNN}) is contracting in metric $\mathbf{M} = (\mathbf{P}_1 \mathbf{Q} \mathbf{P}_1)^{-1}$.
\end{theorem*}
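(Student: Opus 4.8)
The plan is to verify the constant-metric contraction condition from Section~\ref{Appendix:Supplementary Math:supp_math} directly. For a constant metric $\mathbf{M} \succ 0$ it suffices to show $\mathbf{M}\mathbf{J} + \mathbf{J}^T\mathbf{M} \prec 0$ uniformly over the state, where $\mathbf{J}$ is the Jacobian of \eqref{eq:RNN}. Differentiating the right-hand side of \eqref{eq:RNN} gives $\mathbf{J} = -\mathbf{I} + \mathbf{W}\mathbf{D}$ (the positive factor $1/\tau$ does not affect definiteness), with $\mathbf{D} = \mathrm{diag}(\phi'(x_1),\dots,\phi'(x_n))$ diagonal and entrywise nonnegative by monotonicity of $\phi$. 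First I would record that the proposed metric is admissible: since $\mathbf{Q} \succ 0$ and $\mathbf{P}_1$ is an invertible diagonal matrix, the congruence $\mathbf{P}_1 \mathbf{Q} \mathbf{P}_1$ is positive definite, hence so is its inverse $\mathbf{M} = (\mathbf{P}_1 \mathbf{Q} \mathbf{P}_1)^{-1}$.

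The crux is a single algebraic cancellation that the metric is engineered to produce. Substituting $\mathbf{W} = -\mathbf{P}_1 \mathbf{Q} \mathbf{P}_2$ and $\mathbf{M} = \mathbf{P}_1^{-1}\mathbf{Q}^{-1}\mathbf{P}_1^{-1}$, I would compute
\[\mathbf{M}\mathbf{W} = \mathbf{P}_1^{-1}\mathbf{Q}^{-1}\mathbf{P}_1^{-1}(-\mathbf{P}_1 \mathbf{Q} \mathbf{P}_2) = -\mathbf{P}_1^{-1}\mathbf{P}_2.\]
The $\mathbf{Q}$ and the inner $\mathbf{P}_1$ cancel, leaving a \emph{diagonal} matrix with strictly negative entries. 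This collapse is precisely why $(\mathbf{P}_1 \mathbf{Q} \mathbf{P}_1)^{-1}$ is the correct metric: it converts the otherwise unwieldy product $\mathbf{M}\mathbf{W}$ into a negative diagonal matrix, which then commutes harmlessly with $\mathbf{D}$.

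To finish I would form the symmetrized Jacobian. Because $\mathbf{M}\mathbf{W}$ and $\mathbf{D}$ are both diagonal, their product is diagonal and symmetric, so $\mathbf{M}\mathbf{W}\mathbf{D} = \mathbf{D}\mathbf{W}^T\mathbf{M}$, and therefore
\[\mathbf{M}\mathbf{J} + \mathbf{J}^T\mathbf{M} = -2\mathbf{M} + \mathbf{M}\mathbf{W}\mathbf{D} + \mathbf{D}\mathbf{W}^T\mathbf{M} = -2\mathbf{M} + 2\mathbf{M}\mathbf{W}\mathbf{D}.\]
The first term is $\prec 0$ since $\mathbf{M} \succ 0$, while $2\mathbf{M}\mathbf{W}\mathbf{D}$ is diagonal with nonpositive entries (negative diagonal of $\mathbf{M}\mathbf{W}$ times nonnegative $\mathbf{D}$), hence negative semidefinite. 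A negative definite matrix plus a negative semidefinite matrix is negative definite, so the contraction inequality holds, and in metric $\mathbf{M}$.

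There is no serious obstacle beyond spotting the cancellation; the one point requiring care is that the bound must hold uniformly over \emph{all} admissible diagonal $\mathbf{D} \succeq 0$, i.e.\ over all states. This is automatic here because the sign of $\mathbf{M}\mathbf{W}$ is fixed independently of $\mathbf{D}$. It is also worth flagging that the argument uses only monotonicity $\phi' \geq 0$ and never the slope bound $g$, which explains why $g$ does not appear in the statement.
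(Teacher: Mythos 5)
Your proof is correct and takes essentially the same approach as the paper's: both hinge on the single cancellation $\mathbf{M}\mathbf{W} = (\mathbf{P}_1\mathbf{Q}\mathbf{P}_1)^{-1}(-\mathbf{P}_1\mathbf{Q}\mathbf{P}_2) = -\mathbf{P}_1^{-1}\mathbf{P}_2$, which turns the cross term into a nonpositive diagonal matrix for every admissible $\mathbf{D} \succeq 0$. The paper packages this as the differential Lyapunov bound $\dot{V} = -2V - \delta\mathbf{x}^T\mathbf{P}_1^{-1}\mathbf{P}_2\mathbf{D}\,\delta\mathbf{x} \leq -2V$, which is equivalent to your direct verification of $\mathbf{M}\mathbf{J} + \mathbf{J}^T\mathbf{M} \prec 0$.
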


\begin{proof}
Consider again a differential Lyapunov function:
\[V = \delta \mathbf{x}^T \mathbf{M} \delta \mathbf{x} \]
the time derivative is equal to:

\[\dot{V} = -2V +\delta \mathbf{x}^T\mathbf{M}\mathbf{W}\mathbf{D}\delta\mathbf{x}    \]
Substituting in the definitions of $\mathbf{W}$ and $\mathbf{M}$, we get:
\[\dot{V} = -2V -\delta \mathbf{x}^T\mathbf{P}^{-1}_1 \mathbf{P}_2\mathbf{D}\delta\mathbf{x} \leq -2V     \]
Therefore $V$ converges exponentially to zero.

\end{proof}

\subsection{Proof of Theorem \ref{theorem: triangularweightstheorem}}
\begin{theorem*}
If $g\mathbf{W}-\mathbf{I}$ is triangular and Hurwitz, then (\ref{eq:RNN}) is contracting in a diagonal metric.  

\end{theorem*}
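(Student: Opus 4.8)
The plan is to exploit the triangular structure to realize \eqref{eq:RNN} as a \emph{cascade} of one-dimensional subsystems and then invoke the hierarchical combination property for contracting systems. Without loss of generality assume $g\mathbf{W}-\mathbf{I}$ is lower triangular (otherwise relabel the neurons in reverse order, a permutation that leaves \eqref{eq:RNN} of the same form). Since $\mathbf{I}$ is diagonal, $\mathbf{W}$ is then lower triangular as well, and the diagonal of $g\mathbf{W}-\mathbf{I}$, which is also its spectrum, consists of the numbers $gW_{ii}-1$. The Hurwitz hypothesis therefore says precisely that $gW_{ii}-1<0$ for every $i$.

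First I would observe that lower-triangularity of $\mathbf{W}$ makes the $i$-th component equation
\[
\tau\dot{x}_i = -x_i + W_{ii}\phi(x_i) + \sum_{j<i} W_{ij}\phi(x_j) + u_i(t),
\]
so that $x_i$ is driven only by itself and by the lower-indexed states $x_1,\dots,x_{i-1}$. Thus $x_1$ evolves autonomously, $x_2$ is driven by $x_1$, and so on: the system is a strict hierarchy, with the coupling term $\sum_{j<i} W_{ij}\phi(x_j)$ playing the role of the feedforward map $\mathbf{h}$ in the hierarchical combination property of Section~\ref{Appendix:Supplementary Math:FB}.

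Next I would check that each scalar subsystem, with the upstream states treated as an exogenous input, is contracting. Its one-dimensional Jacobian is $\tau^{-1}(-1 + W_{ii}\phi'(x_i))$. Using $0\le\phi'\le g$: if $W_{ii}\le 0$ this is bounded above by $-\tau^{-1}<0$, while if $W_{ii}>0$ it is bounded above by $\tau^{-1}(gW_{ii}-1)<0$ by the Hurwitz condition. In either case the scalar Jacobian is uniformly negative, so each one-dimensional subsystem contracts in the trivially diagonal metric $M_{ii}=1$. Moreover each coupling $\sum_{j<i} W_{ij}\phi(x_j)$ has Jacobian entries $W_{ij}\phi'(x_j)$ bounded in magnitude by $g|W_{ij}|$, so the bounded-Jacobian hypothesis of the hierarchical combination property is met.

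Finally I would assemble the pieces by induction on the number of neurons: having shown that $(x_1,\dots,x_{i-1})$ is contracting in a diagonal metric, the addition of $x_i$ is a hierarchical combination of this contracting upstream block with the contracting scalar subsystem for $x_i$, which by the result of \citep{lohmiller1998contraction} is again contracting in a diagonal metric. Iterating to $i=n$ yields global contraction of \eqref{eq:RNN} in a diagonal metric. The step I would be most careful with—and the only real subtlety—is the sign bookkeeping in the scalar contraction check (the case $W_{ii}>0$ versus $W_{ii}\le 0$), together with confirming that the hierarchical combination genuinely preserves \emph{diagonality} of the metric at every recursion step rather than merely preserving contraction; this is what upgrades the conclusion from a general metric to the claimed \emph{diagonal} one.
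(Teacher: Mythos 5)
Your proof is correct, but it takes a different route from the paper's. You realize the triangular system as a cascade of scalar subsystems and then recursively invoke the hierarchical combination property (the paper's Section \ref{Appendix:Supplementary Math:FB}), checking the two hypotheses of that lemma: uniform negativity of each scalar Jacobian $\tau^{-1}(-1+W_{ii}\phi'(x_i))$ under both signs of $W_{ii}$, and boundedness of the coupling Jacobians $W_{ij}\phi'(x_j)$. The paper instead gives a direct, one-shot proof: it forms the generalized Jacobian $\mathbf{F} = -\mathbf{I} + \mathbf{\Gamma}\mathbf{W}\mathbf{\Gamma}^{-1}\mathbf{D}$ with the explicit diagonal scaling $\Gamma_i = \epsilon^i$, and observes that $(\mathbf{\Gamma}\mathbf{W}\mathbf{\Gamma}^{-1})_{ij} = W_{ij}\epsilon^{i-j}$ kills the subdiagonal entries as $\epsilon \to 0$, reducing the contraction condition to $\max_i gW_{ii} - 1 < 0$, which is exactly the Hurwitz hypothesis; the metric is then explicitly $\mathbf{\Gamma}^T\mathbf{\Gamma} = \mathrm{diag}(\epsilon^{2i})$. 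The paper in fact acknowledges your route (``this result follows from the combination properties of contracting systems'') but deliberately avoids it because the explicit construction ``provides a means of explicitly finding a metric,'' which matters downstream: the training algorithm for the interconnections \eqref{eq:feedback_combo} needs concrete metrics $\mathbf{M}_i$ in hand. Your approach buys modularity and conceptual clarity at the cost of leaving the metric implicit inside the recursion; and the subtlety you correctly flagged---that the hierarchical lemma must preserve \emph{diagonality} of the metric at each step---does hold (the combined metric is block-diagonal in scaled copies of the component metrics, and your components are scalars), but note that unwinding that recursion essentially reconstructs the geometric scaling $\mathrm{diag}(\epsilon^{2i})$ that the paper writes down directly, so the two arguments are ultimately two presentations of the same underlying mechanism.
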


Note that in the case of a triangular weight matrix, the system (\ref{eq:RNN}) may be seen as a feedforward (i.e hierarchical) network. Therefore, this result follows from the combination properties of contracting systems. However, our proof provides a means of explicitly finding a metric for this system.

\begin{proof}
Without loss of generality, assume that $\mathbf{W}$ is lower triangular. This implies that $W_{ij} = 0 $ if $i \leq j$. 
Now consider the generalized Jacobian:

\[\mathbf{F} = -\mathbf{I} + \mathbf{\Gamma} \mathbf{W} \mathbf{D}\mathbf{\Gamma}^{-1} \]
with $\mathbf{\Gamma}$ diagonal and $\Gamma_i = \epsilon^{i}$ where $\epsilon > 0$. Because $\mathbf{\Gamma}$ is diagonal, the generalized Jacobian is equal to:

\[\mathbf{F} = -\mathbf{I} + \mathbf{\Gamma} \mathbf{W}\mathbf{\Gamma}^{-1}\mathbf{D} \]
Now note that:
\[(\mathbf{\Gamma} \mathbf{W}\mathbf{\Gamma}^{-1})_{ij} = \epsilon^i W_{ij} \epsilon^{-j} = W_{ij} \epsilon^{i-j} \] 
Where $i \leq j$, we have $W_{ij} = 0$ by assumption. Therefore, the only nonzero entries are where $i \geq j$. This means that by making $\epsilon$ arbitrarily small, we can make $\mathbf{\Gamma} \mathbf{W}\mathbf{\Gamma}^{-1}$ approach a diagonal matrix with $W_{ii}$ along the diagonal. Therefore, if:
\[\max_i gW_{ii}-1 < 0 \]
the nonlinear system is contracting. Since $\mathbf{W}$ is triangular, $W_{ii}$ are the eigenvalues of $\mathbf{W}$, meaning that this condition is equivalent to $g\mathbf{W}-\mathbf{I}$ being Hurwitz.

\end{proof}

\subsection{Proof of Theorem \ref{theorem: singularvaluetheorem}}
\begin{theorem*}
If there exists a positive diagonal matrix $\mathbf{P}$ such that:

\[g^2\mathbf{W}^T\mathbf{P}\mathbf{W} - \mathbf{P} \prec 0 \]
\noindent then (\ref{eq:RNN}) is contracting in metric $\mathbf{P}$.  
\end{theorem*}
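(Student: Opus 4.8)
The plan is to certify contraction directly in the given constant metric $\bP$ using the differential (virtual) dynamics, rather than appealing to a combination result. Setting $\tau=1$ for clarity (a general $\tau>0$ merely rescales the contraction rate), the Jacobian of \eqref{eq:RNN} is $\bJ = -\mathbf{I} + \bW\bD$, where $\bD$ is the diagonal matrix with $D_{ii}=\phi_i'(x_i)$ and $0\le D_{ii}\le g$. I would take the differential Lyapunov function $V=\delta\bx^T\bP\,\delta\bx$ and compute $\dot V = 2\,\delta\bx^T\bP\bJ\,\delta\bx = -2\,\delta\bx^T\bP\,\delta\bx + 2\,\delta\bx^T\bP\bW\bD\,\delta\bx$, so that the whole problem reduces to controlling the single cross term $2\,\delta\bx^T\bP\bW\bD\,\delta\bx$ by the ``leak'' term $2\,\delta\bx^T\bP\,\delta\bx$.

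Next I would split the cross term with a scaled Young's inequality: writing it as $2(\bP^{1/2}\delta\bx)^T(\bP^{1/2}\bW\bD\,\delta\bx)$ and applying $2a^Tb\le \alpha\,a^Ta + \alpha^{-1}b^Tb$ for a free parameter $\alpha>0$ gives
\[ 2\,\delta\bx^T\bP\bW\bD\,\delta\bx \;\le\; \alpha\,\delta\bx^T\bP\,\delta\bx \;+\; \tfrac{1}{\alpha}\,\delta\bx^T\bD\bW^T\bP\bW\bD\,\delta\bx. \]
The role of the hypothesis $g^2\bW^T\bP\bW\prec\bP$ is then to dominate the quadratic $\bD\bW^T\bP\bW\bD$. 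Since the inequality is strict and the matrices are fixed, there is $\rho\in[0,1)$ with $g^2\bW^T\bP\bW\preceq\rho\bP$, whence $\bD\bW^T\bP\bW\bD\preceq \tfrac{\rho}{g^2}\,\bD\bP\bD$.

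The crux --- and the step I expect to be the main obstacle to state cleanly --- is the passage $\bD\bP\bD=\bP\bD^2\preceq g^2\bP$: it works precisely because $\bP$ is \emph{diagonal}, so it commutes with the (state-dependent, possibly singular) slope matrix $\bD$ and one may use $\bD^2\preceq g^2\mathbf{I}$ entrywise; a non-diagonal $\bP$ would break this, which is exactly why the theorem insists on diagonality. Combining the pieces yields $\bD\bW^T\bP\bW\bD\preceq\rho\bP$ and hence $\dot V\le(-2+\alpha+\rho/\alpha)\,V$. Optimizing over the free parameter at $\alpha=\sqrt{\rho}$ gives $\dot V\le -2(1-\sqrt{\rho})\,V$ with $1-\sqrt{\rho}>0$, establishing global exponential contraction in the metric $\bP$ (with rate $(1-\sqrt{\rho})/\tau$ in the general case). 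I would finally remark that, because the Young bound holds for every admissible $\bD$, the certificate is uniform in $\bx$ and $t$, and note the direct parallel with the discrete-time echo-state and Lipschitz conditions cited above.
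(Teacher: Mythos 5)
Your proof is correct, and it reaches the paper's conclusion by a more elementary route than the paper's own argument. The paper works at the level of the generalized Jacobian $\mathbf{F}=\mathbf{P}^{1/2}\mathbf{J}\mathbf{P}^{-1/2}=-\mathbf{I}+\mathbf{P}^{1/2}\mathbf{W}\mathbf{P}^{-1/2}\mathbf{D}$ and invokes matrix-measure machinery: subadditivity of $\mu_2$, the bound $\mu_2(\cdot)\le\|\cdot\|_2$, and norm submultiplicativity yield $\mu_2(\mathbf{F})\le -1+g\|\mathbf{P}^{1/2}\mathbf{W}\mathbf{P}^{-1/2}\|_2$, and the hypothesis $g^2\mathbf{W}^T\mathbf{P}\mathbf{W}\prec\mathbf{P}$ is exactly the statement $g\|\mathbf{P}^{1/2}\mathbf{W}\mathbf{P}^{-1/2}\|_2<1$. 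You instead bound the quadratic form $\delta\mathbf{x}^T(\mathbf{P}\mathbf{J}+\mathbf{J}^T\mathbf{P})\delta\mathbf{x}$ directly, using Young's inequality with a free parameter, the congruence $\mathbf{D}\mathbf{W}^T\mathbf{P}\mathbf{W}\mathbf{D}\preceq(\rho/g^2)\,\mathbf{D}\mathbf{P}\mathbf{D}$, and the diagonal commutation $\mathbf{D}\mathbf{P}\mathbf{D}=\mathbf{P}\mathbf{D}^2\preceq g^2\mathbf{P}$. The two arguments are morally the same estimate: your optimized Young step at $\alpha=\sqrt{\rho}$ is Cauchy--Schwarz, which is precisely the content of $\mu_2(\cdot)\le\|\cdot\|_2$; your $\rho$ equals $g^2\|\mathbf{P}^{1/2}\mathbf{W}\mathbf{P}^{-1/2}\|_2^2$; and both proofs deliver the identical rate $1-\sqrt{\rho}$. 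Both also need diagonality of $\mathbf{P}$ at the same conceptual point --- the paper uses it to commute $\mathbf{D}$ past $\mathbf{P}^{-1/2}$ when forming $\mathbf{F}$, you use it for $\mathbf{D}\mathbf{P}\mathbf{D}\preceq g^2\mathbf{P}$ --- so your emphasis on that step being the crux is well placed, and applies equally to the paper's version. What your proof buys is self-containedness (no logarithmic-norm apparatus, just a Lyapunov function and elementary inequalities) and an explicit rate expressed through the margin $\rho$; what the paper's buys is brevity, since the matrix-measure formalism packages your Lyapunov and optimization steps into two standard facts. One pedantic remark: $\alpha=\sqrt{\rho}$ presumes $\rho>0$; if $\rho=0$ then $\mathbf{P}\succ 0$ forces $\mathbf{W}=\mathbf{0}$, in which case contraction is immediate, so the edge case is harmless but worth a clause.
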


Note that this is equivalent to the discrete-time diagonal stability condition developed in \citep{revay2020contracting}, for a constant metric. Note also that when $\mathbf{M}=\mathbf{I}$, Theorem \ref{theorem: singularvaluetheorem} is identical to checking the maximum singular value of $\mathbf{W}$, a previously established condition for stability of \eqref{eq:RNN}. However a much larger set of weight matrices are found via the condition when $\mathbf{M}=\mathbf{P}$ instead.

\begin{proof}
Consider the generalized Jacobian:

\[\mathbf{F} = \mathbf{P}^{1/2}\mathbf{J} \mathbf{P}^{-1/2} = -\mathbf{I} + \mathbf{P}^{1/2}\mathbf{W}\mathbf{P}^{-1/2}\mathbf{D}  \]

where $\mathbf{D}$ is a diagonal matrix with $\mathbf{D}_{ii} = \frac{d\phi_i}{dx_i} \geq 0$. Using the subadditivity of the matrix measure $\mu_2$ of the generalized Jacobian we get:
\[\mu_2(\mathbf{F}) \leq -1 + \mu_2(\mathbf{P}^{1/2}\mathbf{W}\mathbf{P}^{-1/2}\mathbf{D}) \]

Now using the fact that $\mu_2(\cdot) \leq ||\cdot||_2$ we have:

\[\mu_2(\mathbf{F}) \leq -1 + ||\mathbf{P}^{1/2}\mathbf{W}\mathbf{P}^{-1/2}\mathbf{D})||_2 \leq -1 + g||\mathbf{P}^{1/2}\mathbf{W}\mathbf{P}^{-1/2}||_2 \]

Using the definition of the 2-norm, imposing the condition $\mu_2(\mathbf{F}) \leq 0$ may be written:
\[g^2\mathbf{W}^T\mathbf{P}\mathbf{W} - \mathbf{P} \prec 0 \]
which completes the proof.

\end{proof}

\subsection{Proof of Theorem \ref{theorem: Wdiagstabtheorem}}

\begin{theorem*}
Let $\mathbf{D}$ be a positive, diagonal matrix with $D_{ii} = \frac{d\phi_i}{dx_i}$, and let $\mathbf{P}$ be an arbitrary, positive diagonal matrix. If:

\[ (g\mathbf{W}-\mathbf{I})\mathbf{P} + \mathbf{P}(g\mathbf{W}^T-\mathbf{I}) \preceq -c\mathbf{P} \]
\noindent and 
\[\dot{\mathbf{D}} - cg^{-1}\mathbf{D} \preceq -\beta\mathbf{D} \]

\noindent for $c,\beta > 0$, then (\ref{eq:RNN}) is contracting in metric $\mathbf{D}$ with rate $\beta$.
\end{theorem*}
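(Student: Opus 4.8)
The plan is to certify contraction directly from the differential criterion in Section~\ref{Appendix:Supplementary Math:supp_math}, using the state-dependent diagonal matrix $\bD$ itself as the contraction metric. That is, I would verify
\[ \dot{\bM} + \bM\mathbf{J} + \mathbf{J}^T\bM \preceq -\beta\bM \]
with $\bM = \bD$ and $\mathbf{J} = -\mathbf{I} + \bW\bD$ the Jacobian of \eqref{eq:RNN} (I take $\tau = 1$ without loss of generality, as $\tau$ only rescales time). Choosing $\bD$ as the metric is the whole point: a time-varying metric produces a $\dot{\bM} = \dot{\bD}$ term in the criterion, and the second hypothesis -- a bound on how fast the slopes $\phi_i'$ change -- is exactly the ingredient needed to dominate it. This realizes the paper's stated intuition that ``slopes changing slowly'' upgrades the Matsuoka-type diagonal condition in the first hypothesis to genuine contraction.

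First I would form the differential Lyapunov function $V = \delta\bx^T\bD\,\delta\bx$ and differentiate it along the variational dynamics $\dot{\delta\bx} = \mathbf{J}\,\delta\bx$. Because $\bD$ is diagonal and commutes with itself, collecting terms gives
\[ \dot V = \delta\bx^T\big[\dot{\bD} - 2\bD + \bD(\bW+\bW^T)\bD\big]\delta\bx , \]
so the entire theorem reduces to the matrix inequality $\dot{\bD} - 2\bD + \bD(\bW+\bW^T)\bD \preceq -\beta\bD$. I would then split the bracket into a ``spatial'' part $-2\bD + \bD(\bW+\bW^T)\bD$, to be controlled by the first hypothesis, and a ``temporal'' part $\dot{\bD}$, to be controlled by the second.

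For the spatial part, a constant congruence by $\bP^{-1/2}$ reduces the arbitrary diagonal $\bP$ to the identity -- the diagonal matrices all commute, so $\bD$ is left invariant and only $\bW$ is conjugated -- after which the first hypothesis reads $\bW + \bW^T \preceq \frac{2-c}{g}\mathbf{I}$. Congruence by $\bD$ then yields $\bD(\bW+\bW^T)\bD \preceq \frac{2-c}{g}\bD^2$. The crucial maneuver is to turn this quadratic-in-$\bD$ estimate into one that is linear in $\bD$, so that it can be measured against the metric $\bD$: invoking the slope bound $0 \prec \bD \preceq g\mathbf{I}$ (hence $\bD^2 \preceq g\bD$), an entrywise comparison shows $-2\bD + \frac{2-c}{g}\bD^2 \preceq -cg^{-1}\bD$. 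Adding the second hypothesis $\dot{\bD} - cg^{-1}\bD \preceq -\beta\bD$ then gives $\dot V \leq -\beta V$, establishing contraction at rate $\beta$.

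The main obstacle is precisely this conversion of the indefinite, quadratic cross term $\bD(\bW+\bW^T)\bD$ into an expression homogeneous of degree one in the state-dependent $\bD$, and carefully tracking how the slope bound $g$ threads through $\bD^2 \preceq g\bD$ so that the two hypotheses interlock exactly at the rate $cg^{-1}$ (this bookkeeping is exact at $g=1$, as for $\tanh$ or ReLU, and the constants must be tracked with care otherwise). Secondary points to handle are the standing assumption $\bD \succ 0$, which is why the statement restricts to strictly increasing $\phi$ so that $\bD$ is a bona fide metric and $\dot{\bD}$ is well defined along trajectories, and the remark that for general $\bP$ the congruence means the system contracts in the constant-rescaled metric $\bD\bP^{-1}$, with $\bP$ serving only to relax the first hypothesis via diagonal scaling.
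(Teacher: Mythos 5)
Your proposal is correct where you claim it is (slope bound $g \ge 1$, and exactly at $g=1$), and it follows the paper's route in all but one step: the paper likewise certifies contraction with a differential Lyapunov function whose metric is a diagonal rescaling of $\bD$ (it uses $V=\delta\bx^T\bP\bD\,\delta\bx$ rather than first normalizing $\bP=\mathbf{I}$ by a $\bP^{-1/2}$ congruence as you do --- these are equivalent, and your resulting metric $\bD\bP^{-1}$ matches the paper's $\bP\bD$ up to the immaterial substitution $\bP\mapsto\bP^{-1}$), splits $\dot V$ into the $\dot\bD$ term plus a spatial term, and spends the first hypothesis on the spatial term and the second on $\dot\bD$. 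The step where you genuinely diverge is the crux, and your version is the one that closes. The paper applies the slope bound to the \emph{leak} term, $-2\bP\bD = \bD(-2\bP\bD^{-1})\bD \preceq -2g^{-1}\bP\bD^2$, so its spatial estimate is quadratic in $\bD$, namely $\preceq -cg^{-1}\bP\bD^2$; it then ``substitutes the second assumption,'' but that assumption is linear in $\bD$, and since $\bD^2\preceq g\bD$ runs the wrong way here ($-cg^{-1}\bD^2$ admits no uniform upper bound of the form $-c'\bD$ as $D_{ii}\to 0$), the needed inequality $\dot\bD - cg^{-1}\bD^2 \preceq -\beta\bD$ does not follow from the stated $\dot\bD - cg^{-1}\bD \preceq -\beta\bD$: as written, the paper's final step proves a variant of the theorem whose second hypothesis carries $\bD^2$. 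You instead apply the slope bound to the \emph{interaction} term, $\frac{2-c}{g}\bD^2 \preceq (2-c)\bD$, keeping the spatial estimate linear, $\preceq -c\bD$, which does combine with the hypothesis as stated provided $-c\bD \preceq -cg^{-1}\bD$, i.e. $g\ge 1$ --- exact at $g=1$ (ReLU, tanh), as you flag. Two caveats to make explicit in a full write-up: multiplying $\bD^2 \preceq g\bD$ by $\frac{2-c}{g}$ presumes $c\le 2$ (for $c>2$ the coefficient is negative and the inequality reverses, so one should simply drop the then-nonpositive quadratic term, and the argument closes only if $cg^{-1}\le 2$); and for $g<1$ neither your route nor the paper's closes from the stated hypotheses, so the theorem's constants should be read as calibrated to $g\ge 1$. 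Finally, as you note, a uniform lower bound $\bD \succeq \epsilon\mathbf{I}$ (not just $\phi'>0$) is what makes $\bD$ a legitimate contraction metric; the paper shares this implicit assumption.
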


\begin{proof}
Consider the differential, quadratic Lyapunov function:
\[V = \delta\mathbf{x}^T\mathbf{P}\mathbf{D} \delta\mathbf{x}\]

where $\mathbf{D} \succ 0 $ is as defined above. The time derivative of $V$ is:
\[ \begin{split} \dot{V} = \delta\mathbf{x}^T \mathbf{P}\dot{\mathbf{D}}\delta\mathbf{x} +\delta\mathbf{x}^T (-2\mathbf{P}\mathbf{D} + \mathbf{P}\mathbf{D}\mathbf{W}\mathbf{D} + \mathbf{D}\mathbf{W}^T\mathbf{D}\mathbf{P} )\delta\mathbf{x}  \end{split} \]

The second term on the right can be factored as:

\[\begin{split}\delta\mathbf{x}^T (-2\mathbf{P}\mathbf{D} + \mathbf{P}\mathbf{D}\mathbf{W}\mathbf{D} + \mathbf{D}\mathbf{W}^T\mathbf{D}\mathbf{P} )\delta\mathbf{x} = \\
\delta\mathbf{x}^T\mathbf{D} (-2\mathbf{P}\mathbf{D}^{-1} + \mathbf{P}\mathbf{W} + \mathbf{W}^T\mathbf{P} )\mathbf{D} \delta\mathbf{x} \leq \\
\delta\mathbf{x}^T\mathbf{D} (-2\mathbf{P}g^{-1} + \mathbf{P}\mathbf{W} + \mathbf{W}^T\mathbf{P} )\mathbf{D} \delta\mathbf{x} = \\
\delta\mathbf{x}^T\mathbf{D} [\mathbf{P}(\mathbf{W} - g^{-1}\mathbf{I}) + (\mathbf{W}^T - g^{-1}\mathbf{I})\mathbf{P} ]\mathbf{D} \delta\mathbf{x} \leq \\
-cg^{-1}\delta\mathbf{x}^T\mathbf{P}\mathbf{D}^2 \delta\mathbf{x}
\end{split} \]

where the last inequality was obtained by substituting in the first assumption above. Combining this with the expression for $\dot{V}$, we have:
\[ \begin{split} \dot{V} \leq \delta\mathbf{x}^T \mathbf{P}\dot{\mathbf{D}}\delta\mathbf{x} -cg^{-1}\delta\mathbf{x}^T \mathbf{P}\mathbf{D}^2\delta\mathbf{x}  \end{split} \]

Substituting in the second assumption, we have:
\[ \begin{split} \dot{V} \leq \delta\mathbf{x}^T \mathbf{P}(\dot{\mathbf{D}} -cg^{-1} \mathbf{D}^2)\delta\mathbf{x}  \leq -\beta \delta\mathbf{x}^T\mathbf{P}\mathbf{D}\delta\mathbf{x} = -\beta V \end{split} \]
and thus $V$ converges exponentially to $0$ with rate $\beta$.
\end{proof}

\subsection{Proof of Theorem \ref{theorem: Wdiagstabcounterexampletheorem}}
\begin{theorem*}
Satisfaction of the condition
\begin{align*}
g\mathbf{W}_{sym} - \mathbf{I} \prec 0
\end{align*}
is \textbf{NOT} sufficient to show global contraction of the general nonlinear RNN (\ref{eq:RNN}) in any constant metric. High levels of antisymmetry in $\mathbf{W}$ can make it impossible to find such a metric, which we demonstrate via a $2 \times 2$ counterexample of the form 
\begin{align*}
\mathbf{W} = \begin{bmatrix} 
    0 & -c \\
    c & 0 
  \end{bmatrix}
 \end{align*}
 with $c \geq 2$.
\end{theorem*}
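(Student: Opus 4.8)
The plan is to argue by contradiction: assume a constant positive-definite metric exists and show that the hypothesis $g\mathbf{W}_{sym} - \mathbf{I} \prec 0$ cannot force contraction once $c \ge 2$. First I would record that the hypothesis holds essentially for free: since the proposed $\mathbf{W}$ is antisymmetric, $\mathbf{W}_{sym} = \tfrac{1}{2}(\mathbf{W}+\mathbf{W}^T) = \mathbf{0}$, so $g\mathbf{W}_{sym}-\mathbf{I} = -\mathbf{I} \prec 0$ for any $g$. Taking $g=1$ (ReLU) and absorbing the positive scalar $\tau$ into a time rescaling, the Jacobian of the right-hand side of \eqref{eq:RNN} is $\mathbf{J} = -\mathbf{I} + \mathbf{W}\mathbf{D}$, where $\mathbf{D} = \mathrm{diag}(D_1,D_2)$ collects the activation slopes with $D_i \in [0,1]$. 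Contraction in a constant metric $\mathbf{M} = \begin{bmatrix} a & b \\ b & d\end{bmatrix} \succ 0$ requires $\mathbf{M}\mathbf{J} + \mathbf{J}^T\mathbf{M} = -2\mathbf{M} + \mathbf{M}\mathbf{W}\mathbf{D} + \mathbf{D}\mathbf{W}^T\mathbf{M} \prec 0$ to hold uniformly, i.e. for \emph{every} admissible $\mathbf{D}$, since each such $\mathbf{D}$ is realized on some region of state space (for ReLU, $D_i \in \{0,1\}$ on the interior of each orthant).

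The decisive step is to \emph{not} test the fully linear regime $\mathbf{D} = \mathbf{I}$ (which is harmless, giving only $-2\mathbf{M}\prec 0$), but rather the two ``one-neuron-saturated'' configurations $\mathbf{D} = \mathrm{diag}(1,0)$ and $\mathbf{D} = \mathrm{diag}(0,1)$. Substituting $\mathbf{W} = \begin{bmatrix} 0 & -c \\ c & 0\end{bmatrix}$ and expanding, each choice yields an explicit symmetric $2\times 2$ matrix whose determinant I expect to collapse, after cancellation of the cross terms involving $b$, to $4(ad - b^2) - c^2 d^2$ and $4(ad - b^2) - c^2 a^2$ respectively. Since a necessary condition for a symmetric $2\times 2$ matrix to be negative definite is that its determinant be positive, both $4(ad-b^2) > c^2 d^2$ and $4(ad-b^2) > c^2 a^2$ must hold.

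To close the argument I would use positive definiteness of $\mathbf{M}$, namely $0 < ad - b^2 \le ad$, to weaken these to $c^2 d^2 < 4ad$ and $c^2 a^2 < 4ad$; dividing by $d>0$ and by $a>0$ respectively gives $a > \tfrac{1}{4}c^2 d$ and $d > \tfrac{1}{4}c^2 a$. Multiplying these two strict inequalities (all quantities positive) and cancelling $ad$ yields $1 > c^4/16$, i.e. $c < 2$, contradicting $c \ge 2$; hence no constant metric can exist. I expect the only genuine obstacle to be conceptual rather than computational: recognizing that the obstruction to contraction lives at the saturated corners $\mathbf{D} = \mathrm{diag}(1,0), \mathrm{diag}(0,1)$ — where $\mathbf{W}\mathbf{D}$ becomes purely triangular, so the antisymmetric coupling is no longer balanced by an equal and opposite term — rather than in the linear interior. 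This also clarifies the theorem's message, namely that a metric capable of certifying contraction must be allowed to vary with $\mathbf{D}$ (equivalently, with the state), which motivates Theorem \ref{theorem: Wdiagstabtheorem}.
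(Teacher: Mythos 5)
Your proposal is correct, and its core idea coincides with the paper's: the hypothesis holds trivially because $\mathbf{W}$ is antisymmetric ($\mathbf{W}_{sym}=\mathbf{0}$), and the obstruction is located at the two saturated slope configurations $\mathbf{D}_1=\mathrm{diag}(1,0)$, $\mathbf{D}_2=\mathrm{diag}(0,1)$, realized by ReLU on the interiors of orthants, so any constant contraction metric $\mathbf{M}$ would have to make both symmetrized Jacobians negative definite simultaneously. Where you genuinely diverge is in how the contradiction is closed. The paper fixes $c=2$, writes $\mathbf{M}=\begin{bmatrix} a & m \\ m & b\end{bmatrix}$, and runs a case analysis on the sign of $m$ using entrywise necessary conditions for definiteness (negative diagonal entries and $|x_{11}+x_{22}|>2|x_{12}|$), remarking only in passing that any magnitude $\geq 2$ works. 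You instead invoke the determinant test: your computation that the two determinants collapse to $4(ad-b^2)-c^2d^2$ and $4(ad-b^2)-c^2a^2$ is right (the cross terms in $b$ do cancel), and multiplying the two resulting inequalities $a>\tfrac14 c^2 d$ and $d>\tfrac14 c^2 a$ eliminates the off-diagonal entry entirely and yields $c^4<16$, contradicting $c\geq 2$. This is shorter, avoids all case splitting, and proves the claim uniformly for every $c\geq 2$ rather than for the single instance $c=2$. One thing the paper's longer treatment buys that yours does not: it further shows the counterexample survives when the activation is required to be strictly increasing, by replacing the zero slopes with $\epsilon>0$ (using $c=4$), whereas your argument as written needs slopes that are exactly $0$; for $c>2$ your determinant inequalities have slack and would extend to small $\epsilon$ by continuity, but at the boundary case $c=2$ they are tight, which is consistent with the paper needing a larger $c$ there.
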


Note that $g\mathbf{W}_{sym} - \mathbf{I} = g\frac{\mathbf{W} + \mathbf{W}^{T}}{2} - \mathbf{I} \prec 0$ is equivalent to the condition for contraction of the system with \textit{linear} activation in the identity metric. 

The main intuition behind this counterexample is that high levels of antisymmetry can prevent a constant metric from being found in the nonlinear system. This is because $\mathbf{D}$ is a diagonal matrix with values between 0 and 1, so the primary functionality it can have in the symmetric part of the Jacobian is to downweight the outputs of certain neurons selectively. In the extreme case of all 0 or 1 values, we can think of this as selecting a subnetwork of the original network, and taking each of the remaining neurons to be single unit systems receiving input from the subnetwork. For a given static configuration of $\mathbf{D}$ (think linear gains), this is a hierarchical system that will be stable if the subnetwork is stable. But as $\mathbf{D}$ can evolve over time when a nonlinearity is introduced, we would need to find a constant metric that can serve completely distinct hierarchical structures simultaneously - which is not always possible.

Put in terms of matrix algebra, D can zero out columns of $\mathbf{W}$, but not their corresponding rows. So for a given weight pair $w_{ij}, w_{ji}$, which has entry in $\mathbf{W}_{sym} = \frac{w_{ij} + w_{ji}}{2}$, if $D_{i} = 0$ and $D_{j} = 1$, the $i,j$ entry in $(\mathbf{W}\mathbf{D})_{sym}$ will be guaranteed to have lower magnitude if the signs of $w_{ij}$ and $w_{ji}$ are the same, but guaranteed to have higher magnitude if the signs are different. Thus if the linear system would be stable based on magnitudes alone $\mathbf{D}$ poses no real threat, but if the linear system requires antisymmetry to be stable, $\mathbf{D}$ can make proving contraction quite complicated (if possible at all). 

\begin{proof}
The nonlinear system is globally contracting in a \textit{constant} metric if there exists a symmetric, positive definite $\mathbf{M}$ such that the symmetric part of the Jacobian for the system, $(\mathbf{M}\mathbf{W}\mathbf{D})_{sym} - \mathbf{M}$ is negative definite uniformly. Therefore $(\mathbf{M}\mathbf{W}\mathbf{D})_{sym} - \mathbf{M} \prec 0$ must hold for all possible $\mathbf{D}$ if $\mathbf{M}$ is a constant metric the system \textit{globally} contracts in with any allowed activation function, as some combination of settings to obtain a particular $\mathbf{D}$ can always be found. 

Thus to prove the main claim, we present here a simple 2-neuron system that is contracting in the identity metric with linear activation function, but can be shown to have no $\mathbf{M}$ that simultaneously satisfies the $(\mathbf{M}\mathbf{W}\mathbf{D})_{sym} - \mathbf{M} \prec 0$ condition for two different possible $\mathbf{D}$ matrices. \\

\noindent To begin, take 
\begin{align*}
    \mathbf{W} = \begin{bmatrix} 
    0 & -2 \\
    2 & 0 
  \end{bmatrix}
\end{align*}

  \noindent Note that any off-diagonal magnitude $\geq 2$ would work, as this is the point at which $\frac{1}{2}$ of one of the weights (found in $\mathbf{W}_{sym}$ when the other is zeroed) will have magnitude too large for $(\mathbf{W}\mathbf{D})_{sym} - \mathbf{I}$ to be stable. \\
  
  \noindent Looking at the linear system, we can see it is contracting in the identity because
  \begin{align*}
     \mathbf{W}_{sym} - \mathbf{I} = \begin{bmatrix} 
    -1 & 0 \\
    0 & -1 
  \end{bmatrix} \prec 0 
  \end{align*}

  \noindent Now consider $(\mathbf{M}\mathbf{W}\mathbf{D})_{sym} - \mathbf{M}$ with $\mathbf{D}$ taking two possible values of 
  \begin{align*}
      \mathbf{D}_{1} = \begin{bmatrix} 
    1 & 0 \\
    0 & 0 
  \end{bmatrix}
  \hspace{5mm} and \hspace{5mm}
  \mathbf{D}_{2} = \begin{bmatrix} 
    0 & 0 \\
    0 & 1 
  \end{bmatrix}
  \end{align*}
   
  \noindent We want to find some symmetric, positive definite 
  $\mathbf{M} = \begin{bmatrix} 
    a & m \\
    m & b 
  \end{bmatrix}$
  such that $(\mathbf{M}\mathbf{W}\mathbf{D}_{1})_{sym} - \mathbf{M}$ and $(\mathbf{M}\mathbf{W}\mathbf{D}_{2})_{sym} - \mathbf{M}$ are both negative definite. \\
  
  \noindent Working out the matrix multiplication, we get 
  \begin{align*}
  (\mathbf{M}\mathbf{W}\mathbf{D}_{1})_{sym} - \mathbf{M} = \begin{bmatrix} 
    2m - a & b - m \\
    b - m & -b 
  \end{bmatrix} 
  \end{align*}
  and 
  \begin{align*}
    (\mathbf{M}\mathbf{W}\mathbf{D}_{2})_{sym} - \mathbf{M} = \begin{bmatrix} 
    -a & -(a + m) \\
    -(a + m) & -2m - b 
  \end{bmatrix}
  \end{align*} \\
  
  \noindent We can now check necessary conditions for negative definiteness on these two matrices, as well as for positive definiteness on $\mathbf{M}$, to try to find an $\mathbf{M}$ that will satisfy all these conditions simultaneously. In this process we will reach a contradiction, showing that no such $\mathbf{M}$ can exist.
  
  A necessary condition for positive definiteness in a real, symmetric $n \times n$ matrix $\mathbf{X}$ is $x_{ii} > 0$, and for negative definiteness $x_{ii} < 0$. Another well known necessary condition for definiteness of a real symmetric matrix is $|x_{ii} + x_{jj}| > |x_{ij} + x_{ji}| = 2|x_{ij}| \hspace{.2cm} \forall i \neq j$. See \citep{wolframPD} for more info on these conditions. \\
  
  \noindent Thus we will require $a$ and $b$ to be positive, and can identify the following conditions as necessary for our 3 matrices to all meet the requisite definiteness conditions:
  \begin{align}
  2m < a \label{D1diag}
  \end{align}
  \begin{align}
  -2m < b \label{D2diag}
  \end{align}
  \begin{align}
  |2m - (a + b)| > 2|b - m| \label{D1offdiag}
  \end{align}
  \begin{align}
  |-2m - (a + b)| > 2|a + m| \label{D2offdiag}
  \end{align}
 
  Note that the necessary condition for $\mathbf{M}$ to be PD, $a + b > 2|m|$, is not listed, as it is automatically satisfied if \eqref{D1diag} and \eqref{D2diag} are. \\
  
  \noindent It is easy to see that if $m = 0$, conditions \eqref{D1offdiag} and \eqref{D2offdiag} will result in the contradictory conditions $a > b$ and $b > a$ respectively, so we will require a metric with off-diagonal elements. To make the absolute values easier to deal with, we will check $m > 0$ and $m < 0$ cases independently.
  
  First we take $m > 0$. By condition \eqref{D1diag} we must have $a > 2m$, so between that and knowing the signs of all unknowns are positive, we can reduce many of the absolute values. Condition \eqref{D1offdiag} becomes $a + b - 2m > |2b - 2m|$, and condition \eqref{D2offdiag} becomes $a + b + 2m > 2a + 2m$, which is equivalent to $b > a$. If $b > a$ we must also have $b > m$, so condition \eqref{D1offdiag} further reduces to $a + b - 2m > 2b - 2m$, which is equivalent to $a > b$. Therefore we have again reached contradictory conditions.
  
  A very similar approach can be applied when $m < 0$. Using condition \eqref{D2diag} and the known signs we reduce condition \eqref{D1offdiag} to $2|m| + a + b > 2b + 2|m|$, i.e. $a > b$. Meanwhile condition \eqref{D2offdiag} works out to $a + b - 2|m| > 2a - 2|m|$, i.e. $b > a$. \\
  
  \noindent Therefore it is impossible for a single constant $\mathbf{M}$ to accommodate both $\mathbf{D}_{1}$ and $\mathbf{D}_{2}$, so that no constant metric can exist for $\mathbf{W}$ to be contracting in when a nonlinearity is introduced that can possibly have derivative reaching both of these configurations. One real world example of such a nonlinearity is ReLU. Given a sufficiently high negative input to one of the units and a sufficiently high positive input to the other, $\mathbf{D}$ can reach one of these configurations. The targeted inputs could then flip at any time to reach the other configuration. \\
  
  \noindent An additional condition we could impose on the activation function is to require it to be a strictly increasing function, so that the activation function derivative can never actually reach 0. We will now show that a very similar counterexample applies in this case, by taking 
    \begin{align*}
      \mathbf{D}_{1*} = \begin{bmatrix} 
    1 & 0 \\
    0 & \epsilon 
  \end{bmatrix}
  \hspace{5mm} and \hspace{5mm}
  \mathbf{D}_{2*} = \begin{bmatrix} 
    \epsilon & 0 \\
    0 & 1 
  \end{bmatrix}
  \end{align*}
  
  \noindent Note here that the $\mathbf{W}$ used above
 produced a $(\mathbf{W}\mathbf{D})_{sym} - \mathbf{I}$ that just barely avoided being negative definite with the original $\mathbf{D}_{1}$ and $\mathbf{D}_{2}$, so we will have to increase the values on the off-diagonals a bit for this next example. In fact anything with magnitude larger than 2 will have some $\epsilon > 0$ that will cause a constant metric to be impossible, but for simplicity we will now take 
 \begin{align*}
 \mathbf{W}_{*} = \begin{bmatrix} 
    0 & -4 \\
    4 & 0 
  \end{bmatrix}
  \end{align*}
  
  \noindent Note that with $\mathbf{W}_{*}$, even just halving one of the off-diagonals while keeping the other intact will produce a $(\mathbf{W}\mathbf{D})_{sym} - \mathbf{I}$ that is not negative definite. Anything less than halving however will keep the identity metric valid. Therefore, we expect that taking $\epsilon$ in $\mathbf{D}_{1*}$ and $\mathbf{D}_{2*}$ to be in the range $0.5 \geq \epsilon > 0$ will also cause issues when trying to obtain a constant metric.
  
  We will now actually show via a similar proof to the above that $\mathbf{M}$ is impossible to find for $\mathbf{W}_{*}$ when $\epsilon \leq 0.5$. This result is compelling because it not only shows that $\epsilon$ does not need to be a particularly small value, but it also drives home the point about antisymmetry - the larger in magnitude the antisymmetric weights are, the larger the $\epsilon$ where we will begin to encounter problems. \\
  
  \noindent Working out the matrix multiplication again, we now get 
  \begin{align*}
  (\mathbf{M}\mathbf{W}_{*}\mathbf{D}_{1*})_{sym} - \mathbf{M} = \begin{bmatrix} 
    4m - a & 2b - m - 2a\epsilon \\
    b - m - 2a\epsilon & -4m\epsilon - b 
  \end{bmatrix}
  \end{align*}
  and 
  \begin{align*}
    (\mathbf{M}\mathbf{W}_{*}\mathbf{D}_{2*})_{sym} - \mathbf{M} = \begin{bmatrix} 
    4m\epsilon - a & -(2a + m - 2b\epsilon) \\
    -(2a + m - 2b\epsilon) & -4m - b 
  \end{bmatrix}
  \end{align*}
  
  \noindent Resulting in two new main necessary conditions:
  \begin{align}
  |4m - a - b - 4m\epsilon| > 2|2b - m - 2a\epsilon| \label{epsD1offdiag}
  \end{align}
  \begin{align}
  |4m\epsilon - a - b - 4m| > 2|2a + m - 2b\epsilon| \label{epsD2offdiag}
  \end{align}
  \noindent As well as new conditions on the diagonal elements:
  \begin{align}
  4m - a < 0 \label{epsD1diag}
  \end{align}
  \begin{align}
  -4m - b < 0 \label{epsD2diag}
 \end{align}
 
 \noindent We will now proceed with trying to find $a,b,m$ that can simultaneously meet all conditions, setting $\epsilon = 0.5$ for simplicity. 
 
 Looking at $m=0$, we can see again that $\mathbf{M}$ will require off-diagonal elements, as condition \eqref{epsD1offdiag} is now equivalent to the condition $a + b > |4b - 2a|$ and condition \eqref{epsD2offdiag} is similarly now equivalent to $a + b > |4a - 2b|$. 
 
 Evaluating these conditions in more detail, if we assume $4b > 2a$ and $4a > 2b$, we can remove the absolute value and the conditions work out to the contradicting $3a > 3b$ and $3b > 3a$ respectively. As an aside, if $\epsilon > 0.5$, this would no longer be the case, whereas with $\epsilon < 0.5$, the conditions would be pushed even further in opposite directions. 
 
 If we instead assume $2a > 4b$, this means $4a > 2b$, so the latter condition would still lead to $b > a$, contradicting the original assumption of $2a > 4b$. $2b > 4a$ causes a contradiction analogously. Trying $4b = 2a$ will lead to the other condition becoming $b > 2a$, once again a contradiction. Thus a diagonal $\mathbf{M}$ is impossible \\
 
 \noindent So now we again break down the conditions into $m > 0$ and $m < 0$ cases, first looking at $m > 0$. Using condition \eqref{epsD1diag} and knowing all unknowns have positive sign, condition \eqref{epsD1offdiag} reduces to $a + b - 2m > |4b - 2(a + m)|$ and condition \eqref{epsD2offdiag} reduces to $a + b + 2m > |4a - 2(b - m)|$. This looks remarkably similar to the $m = 0$ case, except now condition \eqref{epsD1offdiag} has $-2m$ added to both sides (inside the absolute value), and condition \eqref{epsD2offdiag} has $2m$ added to both sides in the same manner. If $4b > 2(a + m)$ the $-2m$ term on each side will simply cancel, and similarly if $4a > 2(b - m)$ the $+2m$ terms will cancel, leaving us with the same contradictory conditions as before. 
 
 Therefore we check $2(a + m) > 4b$. This rearranges to $2a > 2(2b - m) > 2(b - m)$, so that from condition \eqref{epsD2offdiag} we get $b > a$. Subbing condition \eqref{epsD1diag} in to $2(a + m) > 4b$ gives $8b < 4a + 4m < 5a$ i.e. $b < \frac{5}{8}a$, a contradiction. The analogous issue arises if trying $2(b - m) > 4a$. Trying $2(a + m) = 4b$ gives $m = 2b - a$, which in condition \eqref{epsD2offdiag} results in $5b - a > |6a - 6b|$, while in condition \eqref{epsD1diag} leads to $5a > 8b$, so \eqref{epsD2offdiag} can further reduce to $5b - a > 6a - 6b$ i.e. $11b > 7a$. But $b > \frac{7}{11}a$ and $b < \frac{5}{8}a$ is a contradiction. Thus there is no way for $m > 0$ to work. \\
 
 \noindent Finally, trying $m < 0$, we now use condition \eqref{epsD2diag} and the signs of the unknowns to reduce condition \eqref{epsD1offdiag} to $a + b + 2|m| > |4b - 2(a - |m|)|$ and condition \eqref{epsD2offdiag} to $a + b - 2|m| > |4a - 2(b + |m|)|$. These two conditions are clearly directly analogous to in the $m > 0$ case, where $b$ now acts as $a$ with condition \eqref{epsD2diag} being $b > 4|m|$. Therefore the proof is complete.
\end{proof}

\section{Sparse Combo Net Details}\label{Appendix:Experiments}
Here we provide comprehensive information on the methodology and results for Sparse Combo Net, including some supplementary experimental results. See the Appendix table of contents for a guide to this section. 

\subsection{Extended Methods}
\subsubsection{Initialization and Training}\label{Appendix:training-details}
As described in the main text, the nonlinear RNN weights for Sparse Combo Net were randomly generated based on given sparsity and entry magnitude settings, and then confirmed to meet the Theorem \ref{theorem: absolutevaluetheorem} condition (or discarded if not). For a sparsity level of $x$ and a magnitude limit of $y$, each subnetwork $\mathbf{W}$ was generated by drawing uniformly from between $-y$ and $y$ with $x\%$ density using scipy.sparse.random, and then zeroing out the diagonal entries. For various potential $x$ and $y$ settings, we quantified both the likelihood that a generated $\mathbf{W}$ would satisfy Theorem \ref{theorem: absolutevaluetheorem}, and the resulting network performance. Of course this is also dependent on subnetwork size, as larger subnetworks enable greater sparsity. The information we have obtained so far is documented in Section \ref{Appendix:extended-results}, in particular \ref{Appendix:extra-sparsity-results}.

In training the linear connections between the described nonlinear RNN subnetworks, we constrained the matrix $\mathbf{B}$ in \eqref{eq:feedback_combo} to reflect underlying modularity assumptions. In particular, we only train the off-diagonal blocks of $\mathbf{B}$ and mask the diagonal blocks. We do this to maintain the interpretation of $\mathbf{L}$ as the matrix containing the connection weights \textit{between} different modules, as diagonal blocks would correspond to self-connections. Furthermore, we only train the lower-triangular blocks of $\mathbf{B}$ while masking the others, to increase training speed. 

To obtain the subnetwork RNN metrics necessary for training these linear connections, scipy.integrate.quad was used with default settings to solve for $\mathbf{M}$ in the equation $-\mathbf{I} = \mathbf{M}\mathbf{W} + \mathbf{W}^{T}\mathbf{M}$, as described in the main text. This was done by integrating $e^{\mathbf{W}^{T} t} \mathbf{Q} e^{\mathbf{W} t} dt$ from 0 to $\infty$. For efficiency reasons, and due to the guaranteed existence of a diagonal metric in the case of Theorem \ref{theorem: absolutevaluetheorem}, integration was only performed to solve for the diagonal elements of $\mathbf{M}$. Therefore a check was added prior to training to confirm that the initialized network indeed satisfied Theorem \ref{theorem: absolutevaluetheorem} with metric $\mathbf{M}$. However, it was never triggered by our initialization method. 

Initial training hyperparameter tuning was done primarily with $10 \times 16$ combination networks on the permuted seqMNIST task, starting with settings based on existing literature on this task, and verifying promising settings using a $15 \times 16$ network (Table \ref{table:hyperparams}). Initialization settings were held the same throughout, as was later done for the size comparison trials (described in Section \ref{Appendix:extra-size-results}).

Once hyperparameters were decided upon, the trials reported on in the main text began. Most of these experiments were also done on permuted seqMNIST, where we characterized performance of networks with different sizes and sparsity levels/entry magnitudes. When we moved to the sequential CIFAR10 task, we began by simply training with the same best settings that were found from these experiments. The results of all attempted trials are reported in Section \ref{Appendix:all-tables}.  

Unless specified otherwise, all networks reported on in the main text were trained for 150 epochs, using an Adam optimizer with initial learning rate 1e-3 and weight decay 1e-5. The learning rate was cut to 1e-4 after 90 epochs and to 1e-5 after 140. After identifying the most promising settings, we ran repetitions trials on the best networks for 200 epochs with learning rate cuts after epochs 140 and 190. 

\subsubsection{Code and Datasets}\label{Appendix:code-data-details}
All Sparse Combo Nets described in the main text were trained using a single GPU on Google Colab. Code to replicate all experiments can be found here: \url{https://colab.research.google.com/drive/1JCT5OMgaMVK_Xh8BDFNRrEsyF0Ojvg10?usp=sharing}

Runtime for the best performing architecture settings on the sequential CIFAR10 task was $\sim 24$ hours. A Colab Pro+ account was used to limit timeouts and prioritize GPU access.

The datasets we used for our tasks were MNIST and CIFAR10, downloaded via PyTorch. MNIST is a handwritten digits classification task, consisting of 60,000 training images and 10,000 testing images (each 28x28 and grayscale). It is made available under the terms of the Creative Commons Attribution-Share Alike 3.0 license. CIFAR10 is a dataset of 32x32 color images, split evenly among the following 10 classes: airplane, automobile, bird, cat, deer, dog, frog, horse, ship, and truck. It also contains 60,000/10,000 training/test images, and is distributed under the MIT License.

As mentioned in the main text, we presented these images to our networks pixel by pixel. In the case of MNIST, we used an additional modification to the dataset by permuting the pixels in a randomly determined (but fixed across the dataset) way. The use of these datasets is included in the above link to our code. \\

Extended results information begins on the next page.

\newpage

\subsection{Extended Results}\label{Appendix:extended-results}
In this subsection, we describe additional results we did not get to in the main text, related to scalability, modularity, sparsity, and repeatability. We also add some further discussion of these results, along with more detailed information on the respective experimental set ups. 

\subsubsection{Network Size and Modularity Comparison}\label{Appendix:extra-size-results}
For the Sparse Combo Net specifically we had additional experiments on architecture size and unit distribution besides what was depicted in the main text. The results of these supplemental experiments both replicated the observed effect in Figure \ref{figure:test-sizes} of network size instead using subnetworks with 16 units each as well as higher density (Figure \ref{figure:test-sizes-sup}A), and evaluated how task performance varies with modularity of a network fixed to have 352 total units (Figure \ref{figure:test-sizes-sup}B). In the modularity experiment we observed an inverse U shape, with poor performance of a $1 \times 352$ net and an $88 \times 4$ net, and best performance from a $44 \times 8$ net. Note that this experiment compared similar sparsity levels across the different subnetwork sizes. In practice we can achieve better performance with larger subnetworks by leveraging sparsity in a way not possible in smaller subnetworks. These additional experiments are now described in more detail below.

For the initial round of size comparison trials using subnetworks of 16 units each (Figure \ref{figure:test-sizes-sup}A), the nonlinear RNN weights were set by drawing uniformly from between $-0.4$ and $0.4$ with $40\%$ density using scipy.sparse.random, and then zeroing out the diagonal entries. These settings were chosen because they resulted in $\sim1\%$ of $16$ by $16$ weight matrices meeting the Theorem \ref{theorem: absolutevaluetheorem} condition. During initialization only the matrices meeting this condition were kept, finishing when the desired number of component RNNs had been set - producing a block diagonal $\mathbf{W}$ like pictured in Figure \ref{figure:example-training}A. This same initialization process was used throughout our experiments. In later experiments we vary the density and magnitude settings.

For the size experiments, we held static the number of units and initialization settings for each component RNN, and tested the effect of changing the number of components in the combination network. 1, 3, 5, 10, 15, 20, 22, 25, and 30 components were tested in this experiment (Figure \ref{figure:test-sizes-sup}A). Increasing the number of components initially lead to great improvements in test accuracy, but had diminishing returns - test accuracy consistently hit $\sim93\%$ with a large enough number of subnetworks, but neither loss nor accuracy showed meaningful improvement past the $22 \times 16$ network. Interestingly, early training loss and accuracy became substantially worse once the number of components increased past a certain point, falling from $70\%$ to $43\%$ epoch 1 test accuracy between the $22 \times 16$ and $30 \times 16$ networks. The complete set of results can be found in Table \ref{table:sizes-compare}. 

Note that the size experiment described in the main text (Figure \ref{figure:test-sizes}A) was a repetition of this original experiment, but now using 32 unit subnetworks with the best performing sparsity settings. The results for the repetition can be found in Table \ref{table:compare-sizes-again}.

To better understand how the modularity of the combination networks affects performance, the next experiment held the number of total units constant at 352, selected due to the prior success of the $22 \times 16$ network, and tested different allocations of these units amongst component RNNs. Thus $1 \times 352$, $11 \times 32$, $44 \times 8$, and $88 \times 4$ networks were trained to compare against the $22 \times 16$ (Figure \ref{figure:test-sizes-sup}B). Increasing the modularity improved performance to a point, with the $44 \times 8$ network resulting in final test accuracy of $94.44\%$, while conversely the $11 \times 32$ resulted in decreased test accuracy. However, the $88 \times 4$ network was unable to learn, and a $352 \times 1$ network would theoretically just be a scaled linear anti-symmetric network. 

Because larger networks require different sparsity settings to meet the Theorem \ref{theorem: absolutevaluetheorem} condition, these were not held constant between trials in the modularity comparison experiment (Figure \ref{figure:test-sizes-sup}B), but rather selected in the same way between trials - looking for settings that keep density and scalar balanced and result in $\sim1\%$ of the matrices meeting the condition. The scalar was applied after sampling non-zero entries from a uniform distribution between -1 and 1. The resulting settings were 7.5\% density and 0.077 scalar for 352 unit component RNN, 26.5\% density and 0.27 scalar for 32 unit component RNN, 60\% density and 0.7 scalar for 8 unit component RNN, and 100\% density and 1.0 scalar for 4 unit component RNN. The complete set of results for the modularity experiment can be found in Table \ref{table:modules-compare}.

\begin{figure}[h]
\centering
\includegraphics[width=0.7\textwidth,keepaspectratio]{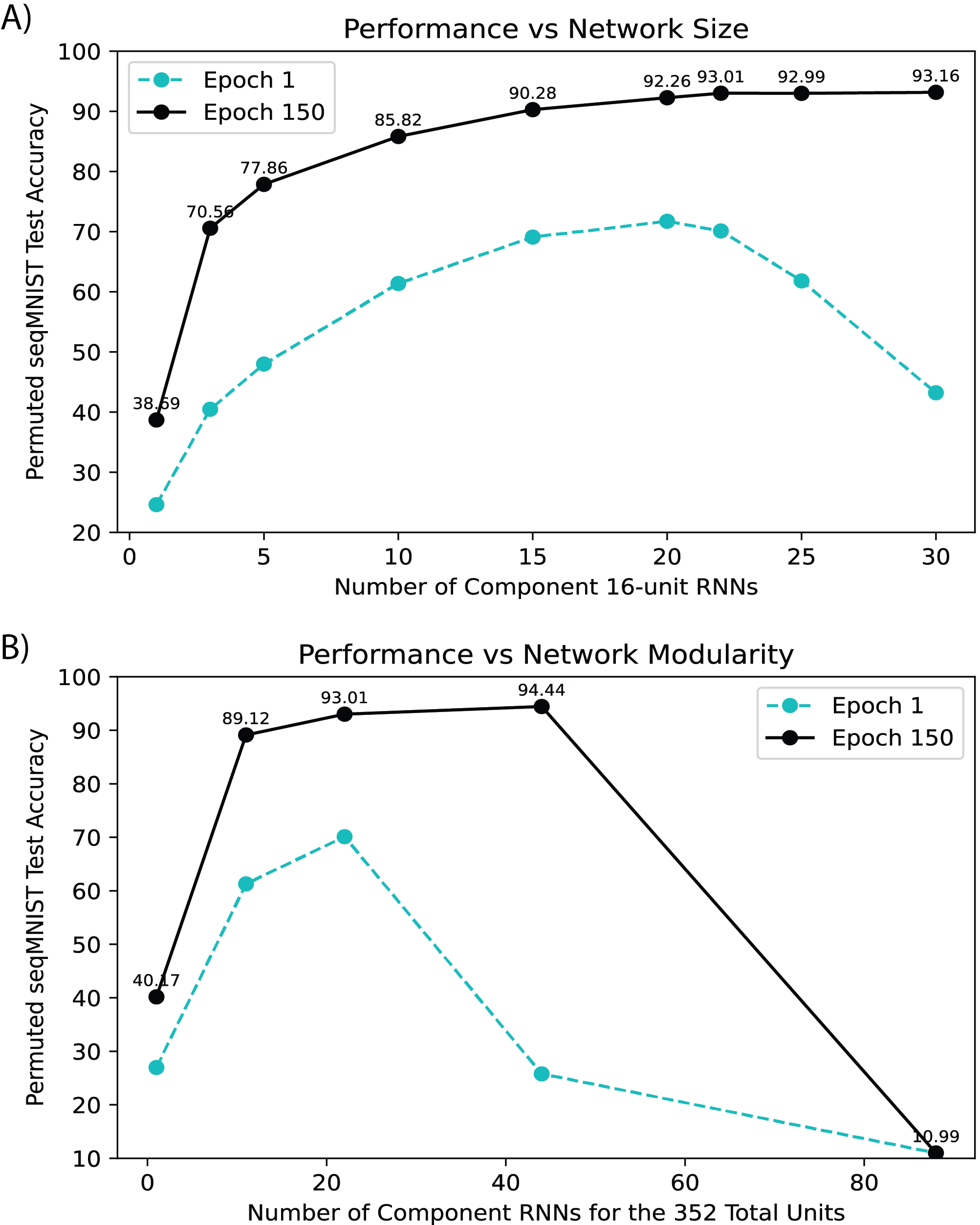}
\caption{Performance of Sparse Combo Nets on the Permuted seqMNIST task by combination network size. We test the effects on final and first epoch test accuracy of both total network size and network modularity. The former is assessed by varying the number of subnetworks while each subnetwork is fixed at 16 units (A), and the latter by varying the distribution of units across different numbers of subnetworks with the total sum of units in the network fixed at 352 (B). Note that these experiments were run prior to optimizing the sparsity initialization settings. Experiments on total network size were later repeated with the final sparsity settings (Figure \ref{figure:test-sizes}A). The results of both the size experiments are consistent.}
\label{figure:test-sizes-sup}
\end{figure}

\subsubsection{Sparsity Settings Comparison}\label{Appendix:extra-sparsity-results}
Density and scalar settings for the component nonlinear RNNs were initially chosen for each network size using the percentage of random networks that met the Theorem \ref{theorem: absolutevaluetheorem} condition. For scalar $s$, a component network would have non-zero entries sampled uniformly between $-s$ and $s$. 

When we began experimenting with sparsity in the initialization (after seeing the large performance difference in Figure \ref{figure:test-sparsity}), we split the previously described scalar setting into two different scalars - one applied before a random matrix was checked against the Theorem \ref{theorem: absolutevaluetheorem} condition, and one applied after a matrix was selected. Of course the latter must be $\leq 1$ to guarantee stability is preserved. The scalar was separated out after we noticed that at $5\%$ density, random $32$ by $32$ weight matrices met the condition roughly $1\%$ of the time whether the scalar was $10$ or $100000$ - $\sim85\%$ of sampled matrices using scalar 10 would continue to meet the condition even if multiplied by a factor of $10000$. Therefore we wanted a mechanism that could bias selection towards matrices that are stable due to their sparsity and not due to magnitude constraints, while still keeping the elements to a reasonable size for training purposes.

Ultimately, both sparsity and magnitude had a clear effect on performance (Figure \ref{figure:test-sparsity-sup}). Increases in both had a positive correlation with accuracy and loss through most parameters tested. Best test accuracy overall was $96.79\%$, which was obtained by both a $16 \times 32$ network with $5\%$ density and entries between -5 and 5, and a $16 \times 32$ network with $3.3\%$ density and entries between -6 and 6. The latter also achieved the best epoch 1 test accuracy observed of $86.79\%$. Thus we chose to go with these settings for our extended training repetitions on permuted seqMNIST. 

It is also worth noting that upon investigation of the subnetwork weight matrices across these trials, the sparser networks had substantially lower maximum eigenvalue of $\mathbf{|W|}$, suggesting that stronger stability can actually correlate with \textit{improved} performance on sequential tasks. This could be due to a mechanism such as that described in \citep{uhler2020assoc}.

Results from the initial trials of different sparsity levels across different network sizes can be found in Table \ref{table:sparsity-compare}. Results from the more thorough testing of different sparsity levels and magnitudes in a $16 \times 32$ network can be found in Table \ref{table:best-results}.

\begin{figure}[h]
\centering
\includegraphics[width=\textwidth,keepaspectratio]{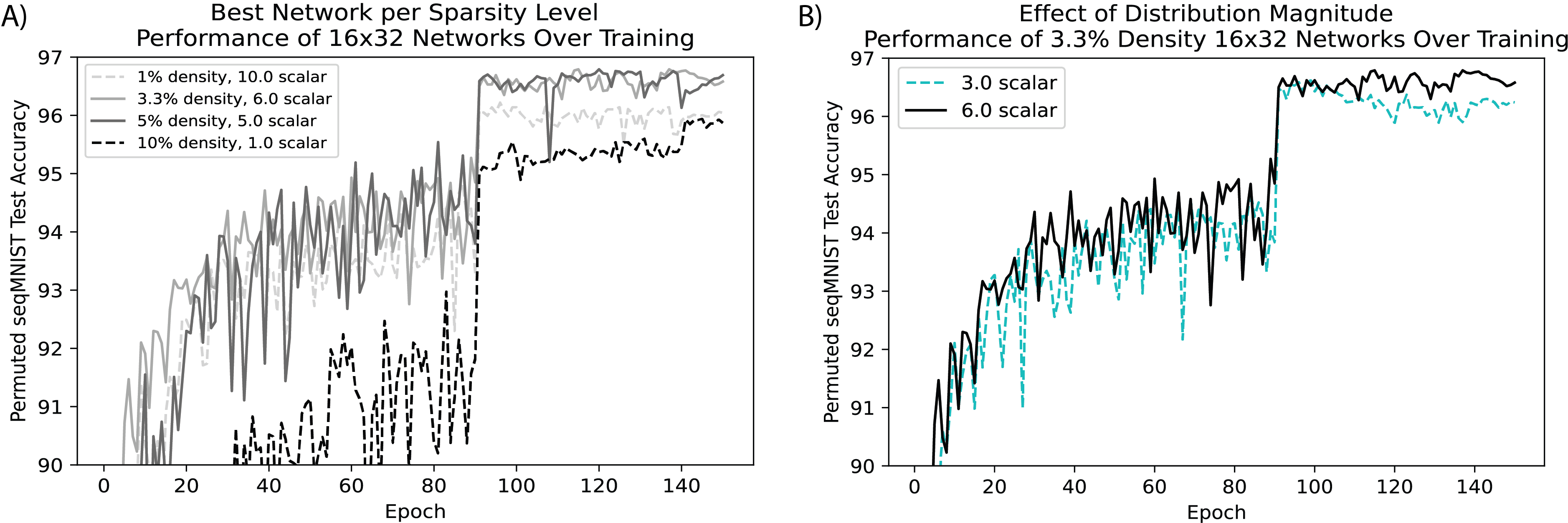}
\caption{Permuted seqMNIST performance by component RNN initialization settings. Test accuracy is plotted over the course of training for four $16 \times 32$ networks with different density levels and entry magnitudes (A), highlighting the role of sparsity in network performance. Test accuracy is then plotted over the course of training for two 3.3\% density $16 \times 32$ networks with different entry magnitudes (B), to demonstrate the role of the scalar. When the magnitude becomes too high however, performance is out of view of the current axis limits.}
\label{figure:test-sparsity-sup}
\end{figure}

\subsubsection{Repeatability Tests}\label{Appendix:extra-repeats}
Here we give additional details on the repeatability tests described in the main text, as well as introducing some additional results using other hyperparameters (done after the initial SOTA-setting experiments) on sequential CIFAR10.

To further improve performance once network settings were explored on permuted seqMNIST, an extended training run was tested on the best performing option. Settings were kept the same as above using a $3.3\%$ density $16 \times 32$ network, except training now ran for over 200 epochs, with just a single learning rate cut occurring after epoch 200 (exact number of epochs varied based on runtime limit). This experiment was repeated four times and resulted in $96.94\%$ best test accuracy, as described in the main text (Figure \ref{figure:test-reproduce}). Table \ref{table:psmnist-repeats} reports additional details on these trials.

\begin{SCfigure}[0.5][h]
\centering
\caption{Permuted seqMNIST performance on repeated trials. Four different $16 \times 32$ networks with 3.3\% density and entries between -6 and 6 were trained for 24 hours, with a single learning rate cut after epoch 200. (A) depicts test accuracy for each of the networks over the course of training. (B) depicts the training loss for the same networks. Exact numbers are reported in Table \ref{table:psmnist-repeats}.}
\includegraphics[width=0.6\textwidth,keepaspectratio]{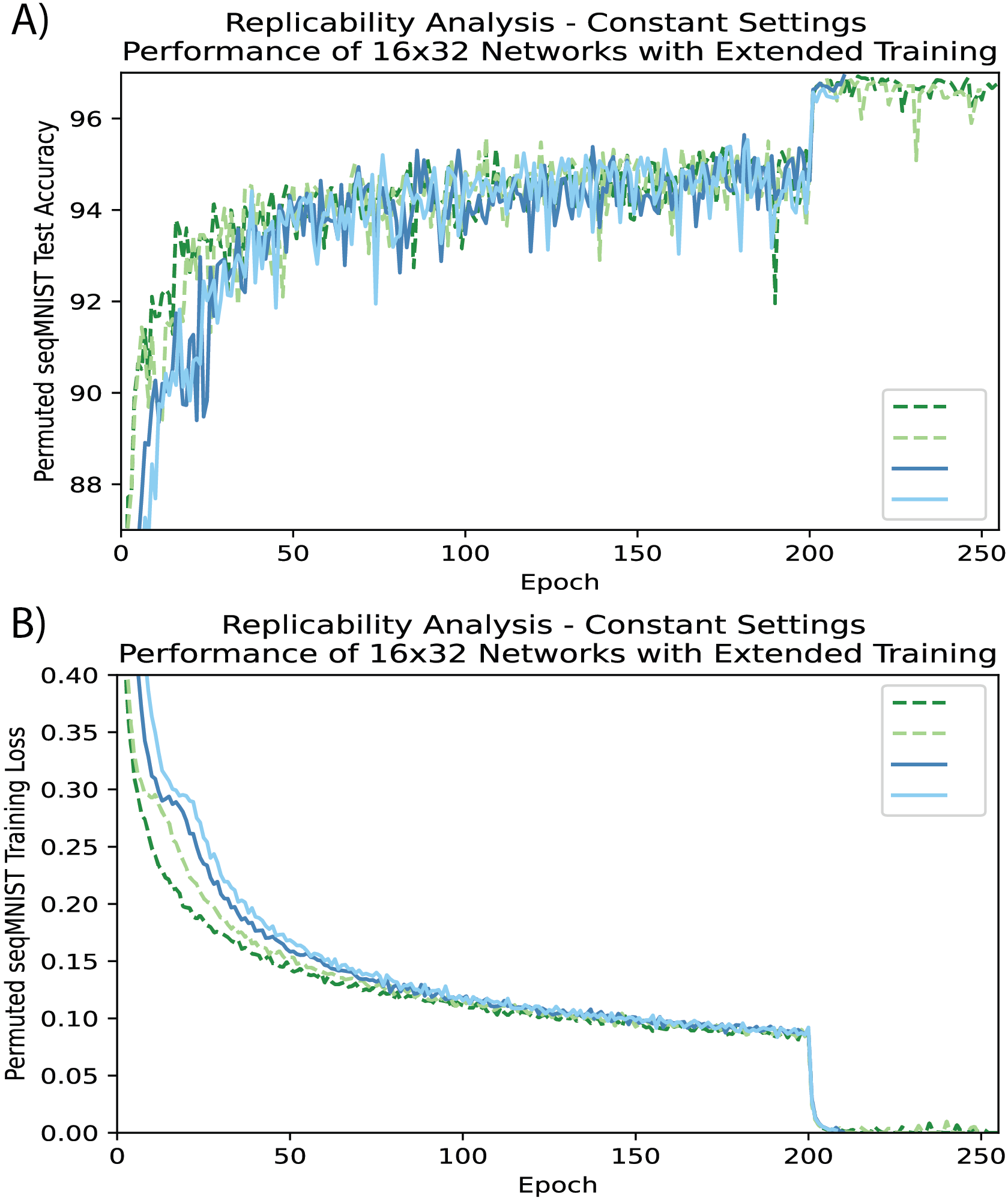}
\label{figure:test-reproduce}
\end{SCfigure}

As mentioned, we also characterized Sparse Combo Net performance on the more challenging CIFAR10 sequential image classification task, across a greater number of trials. Ten trials were run for 200 epochs with learning rate scaled after epochs 140 and 190. All other settings were the same as for the permuted seqMNIST repeatability trials. As reported in the main text, the mean test accuracy observed was 64.72\%, with variance 0.406, and range 63.73\%-65.72\%. Training loss and test accuracy over the course of learning are plotted for all ten trials in Figure \ref{figure:cifar-reps}, and exact numbers for each trial are provided in Table \ref{table:cifar-repeats}. 

As we encountered difficulties with Colab GPU assignment when we started working on these repetitions, we also trained nine networks over a smaller number of epochs (Figure \ref{figure:cifar-reps-shorter}). These networks were trained using the 150 epoch paradigm previously described, although only four of the nine completed training within the 24 hour runtime limit. Complete results for these trials can be found in Table \ref{table:cifar-repeats-short}. Mean performance among the shorter training trials was 62.82\% test accuracy with variance 0.95.

Prior to beginning the repeatability experiments on seqCIFAR10, we explored alternative hyperparameters on this task (Table \ref{table:cifar}). While we ultimately ended up using the same hyperparameters as for permuted seqMNIST, these results further support the robustness of our architecture.

\begin{figure}[h]
\centering
\includegraphics[width=\textwidth,keepaspectratio]{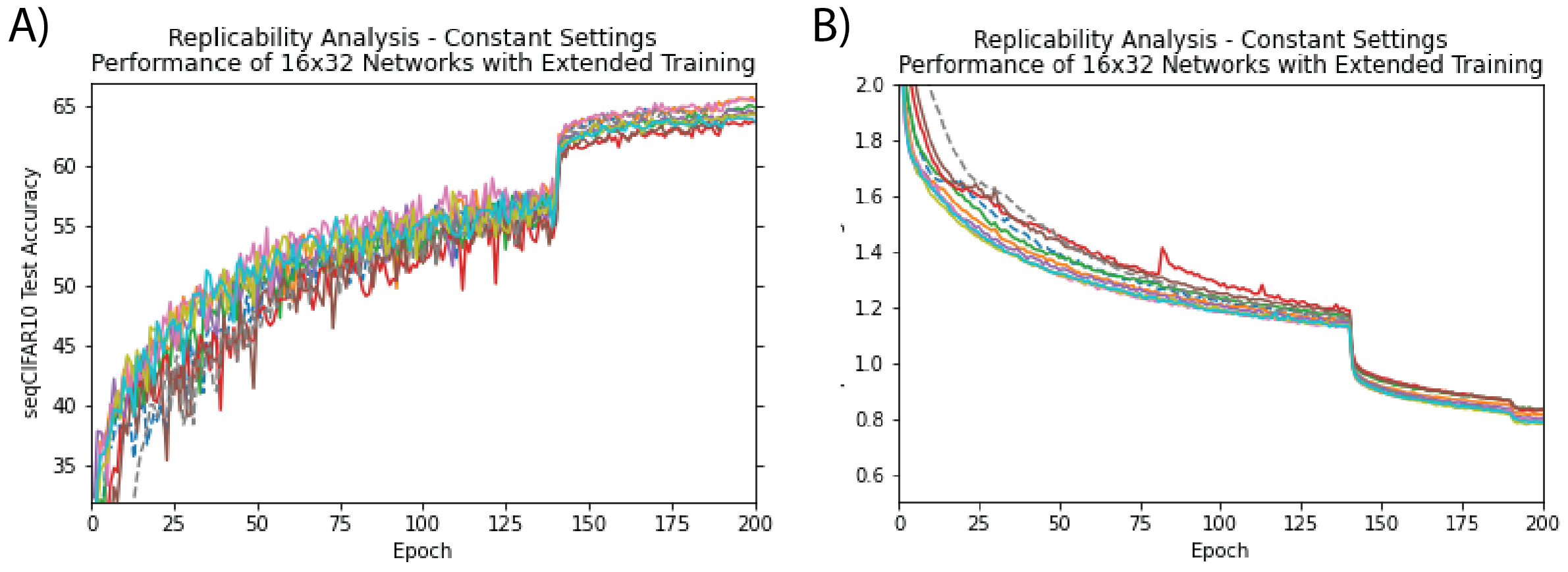}
\caption{seqCIFAR10 performance on repeated trials. Ten different $16 \times 32$ networks with 3.3\% density and entries between -6 and 6 were trained for 200 epochs, with learning rate divided by 10 after epochs 140 and 190. (A) depicts test accuracy for each of the networks over the course of training. (B) depicts the training loss for the same networks. Exact numbers are reported in Table \ref{table:cifar-repeats}.}
\label{figure:cifar-reps}
\end{figure}

\begin{SCfigure}[0.7][h]
\centering
\includegraphics[width=0.6\textwidth,keepaspectratio]{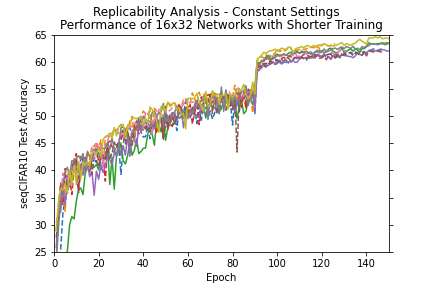}
\caption{seqCIFAR10 performance on repeated trials with shorter training (done to complete more trials). Nine different $16 \times 32$ networks with 3.3\% density and entries between -6 and 6 were set up to train for 150 epochs, with learning rate divided by 10 after epochs 90 and 140. Most of these networks hit runtime limit before completing, however they all got through at least 100 epochs and all had test accuracy exceed 61\%. This figure depicts test accuracy for each of the networks over the course of training. Networks that completed training are plotted as solid lines, while those that were cut short are dashed.}
\label{figure:cifar-reps-shorter}
\end{SCfigure}

To complete our benchmarking table, we also ran a single 150 epoch trial of our best network settings on the sequential MNIST task. Test accuracy over the course of training for this trial is depicted in Figure \ref{figure:seqmnist}.

\begin{SCfigure}[0.3][h]
\centering
\includegraphics[width=0.6\textwidth,keepaspectratio]{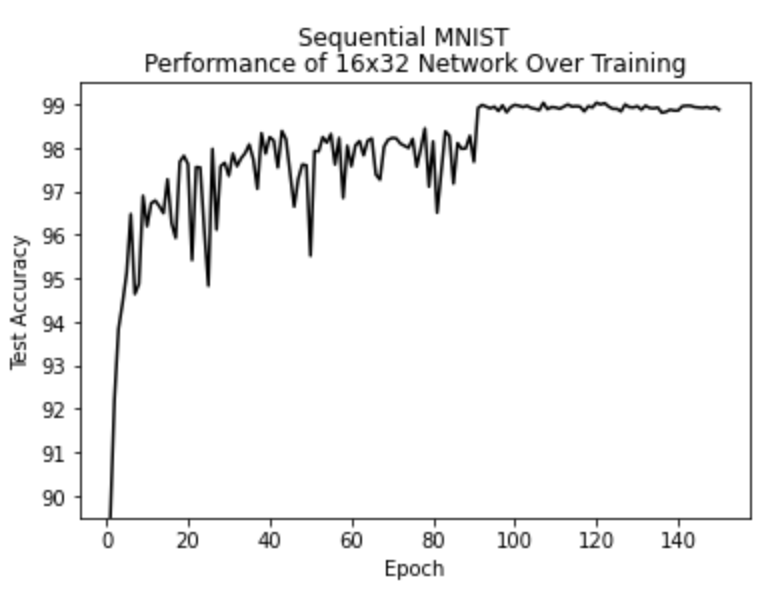}
\caption{Performance over training on the seqMNIST task for a $16 \times 32$ network with best settings (using 150 epoch training protocol). Final test accuracy exceeded 99\%.}
\label{figure:seqmnist}
\end{SCfigure}

\subsubsection{Scalability Pilot: Feedback Sparsity Tests}\label{Appendix:scalability-discuss}
Not only did the $16 \times 32$ Sparse Combo Net with $3.3\%$ density achieve test accuracy sequential CIFAR10 that is the highest to date for a provably stable RNN, it was higher than the 1 million parameter CKConv network, which set a recent SOTA for permuted seqMNIST accuracy (Table \ref{table:sota}). Our network has $130,000$ trainable parameters by comparison. Thus we achieve very impressive results given the characteristics of our architecture.

However, the results of the network size experiments do raise some concern about the scalability of the approach. Here we provide pilot results suggesting that this issue can likely be addressed via the introduction of sparsity in the linear inter-subnetwork connectivity matrix $\mathbf{L}$. While all results in the main text use all-to-all negative feedback, we have early results suggesting that in larger networks, fewer negative feedback connections may perform better, thus preventing performance saturation with scaling. 

Below are the results of testing of this idea in a $24 \times 32$ Sparse Combo Net on the sequential CIFAR10 task. We varied the number of feedback connections that were fixed at 0 while all other settings remained static (Table \ref{table:scalability}). The resulting performance took an inverse U shape, where the network with only 50\% of possible feedback connections non-zero had the best test accuracy of 65.14\%, achieved in just 124 epochs of training.

\begin{table}[!htbp]
\centering
\begin{tabular}{ | m{1cm} | m{1.5cm} || m{1cm} | m{2cm} | m{2cm} | }
\hline
Size & Feedback Density & Epochs & Best Overall Test Acc. & Best Test Acc. Through 85 Epochs \\
\hline\hline
$24 \times 32$ & 100\% & 86 & 52.7\% & 52.7\%  \\  
\hline
$24 \times 32$ & 75\% & 88 & 56.49\% & 56.48\%  \\  
\hline
$24 \times 32$ & 66.6\% & 89 & 58.84\% & 58.84\%  \\  
\hline
\rowcolor{LightCyan}
$24 \times 32$ & 50\% & 124 & 65.14\% & 58.01\%  \\   
\hline
$24 \times 32$ & 33.3\% & 129 & 61.86\% & 56.05\%  \\ 
\hline
$24 \times 32$ & 25\% & 92 & 54.26\% & 50.54\%  \\ 
\hline
$24 \times 32$ & 0\% & 130 & 39.8\% & 38.38\%  \\ 
\hline\hline
\rowcolor{Gray}
$16 \times 32$ & 100\% & 150 & 64.63\% & 55.82\%  \\ 
\hline
$16 \times 32$ & 75\% & 150 & 64.12\% & 57.23\%  \\ 
\hline
$16 \times 32$ & 50\% & 127 & 59.87\% & 54.26\%  \\ 
\hline
\end{tabular}
\caption{Results from pilot testing on the sparsity of negative feedback connections in a $24 \times 32$ Sparse Combo Net and a $16 \times 32$ Sparse Combo Net. Feedback Density refers to the percentage of possible subnetwork pairings that were trained in negative feedback, while the remaining inter-network connections were held at 0. All networks were trained with the same 150 epoch training paradigm as mentioned in the main text, but were stopped after hitting a 24 hour runtime limit. Decreasing Feedback Density is a promising path towards further improving performance as the size of Sparse Combo Nets is scaled. The ideal amount of feedback density will likely vary with the size of the combination network.}
\label{table:scalability}
\end{table}

\subsection{Architecture Interpretability}
In this subsection, we give additional information on trainable parameters and properties of the network weights, to assist in interpreting the Sparse Combo Net architecture and its training process. 

\subsubsection{Number of Parameters}\label{Appendix:num-params}
To report on the number of trainable parameters, we used the following formula:

$\frac{n^{2} - M*C^{2}}{2} + i * n + n * o + n + o$

Where $n$ is the total number of units in the $M \times C$ combination network, $o$ is the total number of output nodes for the task, and $i$ is the total number of input nodes for the task. Thus for the $16 \times 32$ networks highlighted here, we have 129034 trainable parameters for the MNIST tasks, and 130058 trainable parameters for sequential CIFAR10.

Note that the naive estimate for the number of trainable parameters would be $n^{2} + i * n + n * o + n + o$, corresponding to the number of weights in $\mathbf{L}$, the number of weights in the feedforward linear input layer, the number of weights in the feedforward linear output layer, and the bias terms for the input and output layers, respectively. However, because of the combination property constraints on $\mathbf{L}$, only the lower triangular portion of a block off-diagonal matrix is actually trained, and $\mathbf{L}$ is then defined in terms of this matrix and the metric $\mathbf{M}$. Thus we subtract $M*C^{2}$ to remove the block diagonal portions corresponding to nonlinear RNN components, and then divide by 2 to obtain only the lower half. 

\subsubsection{Inspecting Trained Network Weights}\label{Appendix:example-weights}
After training was completed, we inspected the state of all networks described in the main text, pulling both the nonlinear ($\mathbf{W}$) and linear ($\mathbf{L}$) weight matrices from both initialization time and the final model. For $\mathbf{W}$, we confirmed it did not change over training, and inspected the max real part of the eigenvalues of $|\mathbf{W}|$ in accordance with Theorem 1. The densest tested matrices tended to have $\lambda_{max}(|\mathbf{W}|) > 0.9$, while the sparsest ones tended to have $\lambda_{max}(|\mathbf{W}|) < 0.1$. For $\mathbf{L}$, we checked the maximum element and the maximum singular value before and after training. In general, both went up over the course of training, but by a modest amount. 

\begin{SCfigure}[0.5][h]
\centering
\caption{Weight matrices for each of the 32 unit nonlinear component RNNs that were used in the best performing $16 \times 32$ network on permuted sequential MNIST.}
\includegraphics[width=0.6\textwidth,keepaspectratio]{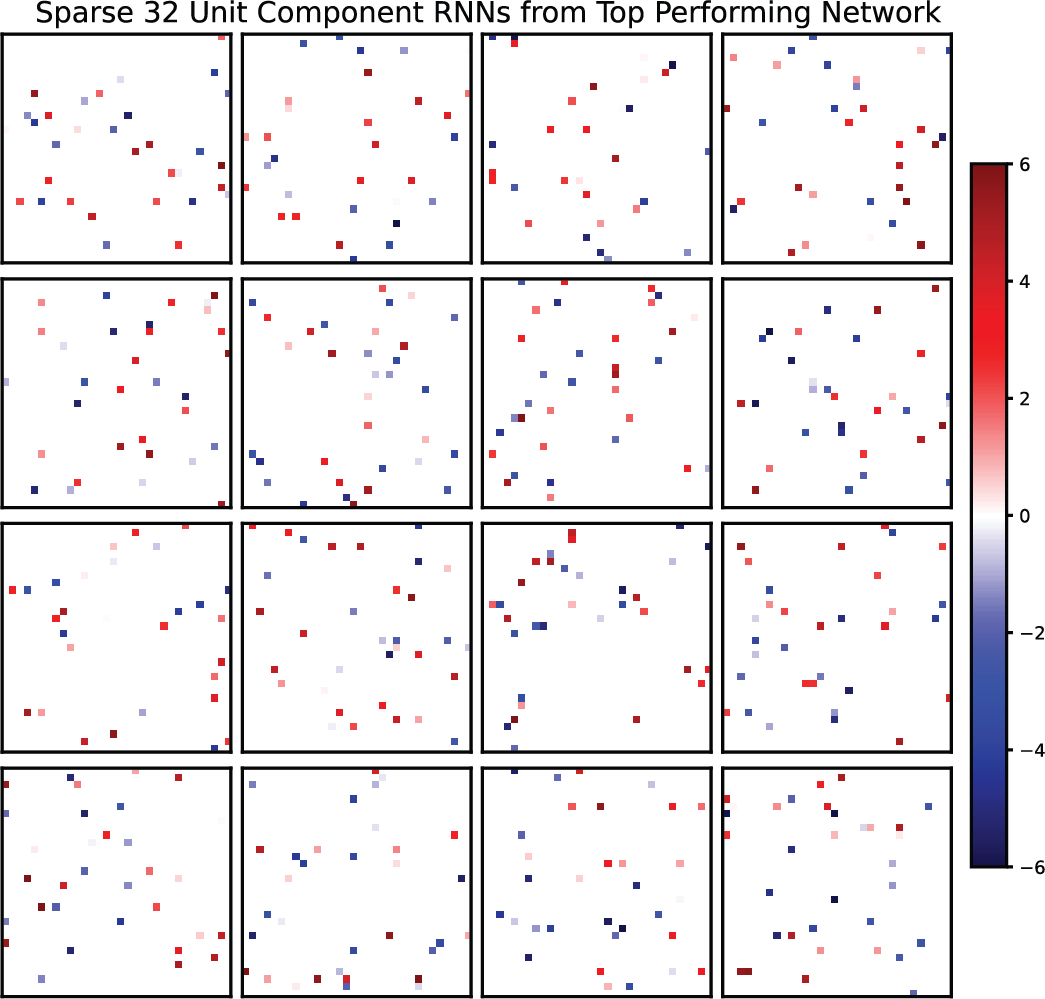}
\label{figure:example-best-net}
\end{SCfigure}

\subsection{Tables of Results by Trial (Supplementary Documentation)}\label{Appendix:all-tables}
Table \ref{table:hyperparams} shows all trials run on permuted sequential MNIST before beginning the more systematic experiments reported on in the main text. Notably, our networks did not require an extensive hyperparameter tuning process. 

Tables \ref{table:sizes-compare} and \ref{table:modules-compare} report additional details on the initial size and modularity experiments (Figure \ref{figure:test-sizes-sup}). Table \ref{table:compare-sizes-again} reports the results of the repeated experiment on Sparse Combo Net size, this time using 32 unit component subnetworks with best sparsity settings (Figure \ref{figure:test-sizes}A).

Tables \ref{table:sparsity-compare} and \ref{table:best-results} report results from all trials related to our sparsity experiments (Figures \ref{figure:test-sparsity} and \ref{figure:test-sparsity-sup}). 

Table \ref{table:psmnist-repeats} provides further information on the four trials in the permuted seqMNIST repeatability experiments (Figure \ref{figure:test-reproduce}). 

Table \ref{table:cifar} reports the results of all trials of different hyperparameters on the sequential CIFAR10 task, in chronological order. Ultimately the same settings as those used for permuted seqMNIST were chosen.

Table \ref{table:cifar-repeats} shows the results of all ten trials in the seqCIFAR10 repeatability experiments (Figure \ref{figure:cifar-reps}). Table \ref{table:cifar-repeats-short} shows results from nine additional seqCIFAR10 trials of shorter training duration (Figure \ref{figure:cifar-reps-shorter}), run to increase sample size while unable to access the higher quality Colab GPUs.  

Finally, Table \ref{table:scalability} reports the results of our pilot trial on introducing sparsity into the linear feedback connection matrix - as this table was presented in Section \ref{Appendix:scalability-discuss} however, we do not reproduce it here.

\newpage

\subsubsection{Permuted seqMNIST Trials}

\begin{table}[!htbp]
\centering
\begin{tabular}{ | m{1cm} | m{1cm} | m{1cm} | m{1cm} | m{4cm} | m{1cm} | }
\hline
 Size & Epochs & Adam WD & Initial LR & LR Schedule & Final Test Acc. \\
\hline\hline
$10 \times 16$ & 150 & 5e-5 & 5e-3 & 0.1 after 91 & 84\% \\
\hline
$10 \times 16$ & 150 & 1e-5 & 1e-2 & 0.1 after 50,100 & 85\% \\
\hline
$15 \times 16$ & 150 & 2e-4 & 5e-3 & 0.1 after 50,100 & 84\% \\ 
\hline
$10 \times 16$ & 150 & 2e-4 & 1e-2 & 0.5 every 10 & 81\% \\ 
\hline
$10 \times 16$ & 200 & 2e-4 & 1e-2 & 0.5 after 10 then every 30 & 81\% \\ 
\hline
$10 \times 16$ & 171* & 5e-5 & 1e-2 & 0.75 after 10,20,60,100 then every 15 & 84\% \\ 
\hline
\rowcolor{LightCyan}
$15 \times 16$ & 179* & 1e-5 & 1e-3 & 0.1 after 100,150 & 90\% \\ 
\hline
\end{tabular}
\caption{Training hyperparameter tuning trials, presented in chronological order. * indicates that training was cut short by the 24 hour Colab runtime limit. LR Schedule describes the scalar the learning rate was multiplied by, and at what epochs. The best performing network is highlighted, and represents the training settings we used throughout most of the main text.}
\label{table:hyperparams}
\end{table}

\begin{table}[!htbp]
\parbox{.45\linewidth}{
\centering
\begin{tabular}{ | m{1cm} || m{1cm} | m{1cm} | m{1cm} | }
\hline
 Size & Final Test Acc. & Epoch 1 Test Acc. & Final Train Loss \\
\hline\hline
$1 \times 16$ & 38.69\% & 24.61\% & 1.7005 \\  
\hline
$3 \times 16$ & 70.56\% & 40.47\% & 0.9033 \\
\hline
$5 \times 16$ & 77.86\% & 47.99\% & 0.7104 \\  
\hline
$10 \times 16$ & 85.82\% & 61.38\% & 0.4736 \\
\hline
$15 \times 16$ & 90.28\% & 69.09\% & 0.3156 \\  
\hline
$20 \times 16$ & 92.26\% & 71.72\% & 0.2392 \\
\hline
\rowcolor{LightCyan}
$22 \times 16$ & 93.01\% & 70.11\% & 0.2073 \\
\hline
$25 \times 16$ & 92.99\% & 61.81\% & 0.2017 \\  
\hline
$30 \times 16$ & 93.16\% & 43.21\% & 0.1991 \\
\hline
\end{tabular}
\caption{Results for combination networks containing different numbers of component 16-unit RNNs. Training hyperparameters and network initialization settings were kept the same across all trials, and all trials completed the full 150 epochs.}
\label{table:sizes-compare}}
\hfill
\parbox{.45\linewidth}{
\centering
\begin{tabular}{ | m{1cm} || m{1cm} | m{1cm} | m{1cm} | }
\hline
 Size & Final Test Acc. & Epoch 1 Test Acc. & Final Train Loss \\
\hline\hline
 $1 \times 352$ & 40.17\% & 26.97\% & 1.662 \\  
\hline
 $11 \times 32$ & 89.12\% & 61.29\% & 0.3781 \\
\hline
\rowcolor{Gray}
 $22 \times 16$ & 93.01\% & 70.11\% & 0.2073 \\ 
\hline
\rowcolor{LightCyan}
$44 \times 8$ & 94.44\% & 25.78\% & 0.1500 \\
\hline
$88 \times 4$ & 10.99\% & 10.99\% & 2E+35 \\  
\hline
\end{tabular}
\caption{Results for different distributions of 352 total units across a combination network. This number was chosen based on prior $22 \times 16$ network performance. For each component RNN size tested, the same procedure was used to select appropriate density and scalar settings. All networks otherwise used the same settings, as in the size experiments.}
\label{table:modules-compare}}
\end{table}

\begin{table}[!htbp]
\centering
\begin{tabular}{ | m{3cm} | m{1cm} || m{1cm} | }
\hline
Name & Size & Final Test Acc. \\
\hline\hline
Sparse Combo Net & $1 \times 32$ & 37.1\% \\  
\hline
Sparse Combo Net & $4 \times 32$ & 89.1\% \\ 
\hline
Sparse Combo Net & $8 \times 32$ & 93.6\% \\ 
\hline
Sparse Combo Net & $12 \times 32$ & 94.4\% \\ 
\hline
Sparse Combo Net & $16 \times 32$ & 96\% \\ 
\hline
\rowcolor{LightCyan}
Sparse Combo Net & $22 \times 32$ & 96.8\% \\ 
\hline
Sparse Combo Net & $24 \times 32$ & 96.7\% \\ 
\hline\hline
Control & $24 \times 32$ & 47\% \\ 
\hline
\end{tabular}
\caption{Results for Sparse Combo Nets containing different numbers of component 32-unit RNNs with best found initialization settings, using the standard 150 epoch training paradigm. This experiment was run to demonstrate repeatability of the size results seen in Table \ref{table:sizes-compare}. All trials were run to completion. A control trial was also run with the largest tested network size - the connections between subnetworks were no longer constrained, and thus this control combination network is not certifiably stable.}
\label{table:compare-sizes-again}
\end{table}

\begin{table}[!htbp]
\centering
\begin{tabular}{ | m{1cm} | m{1cm} | m{1cm} || m{1cm} | m{1cm} | m{1cm} | }
\hline
 Size & Density & Scalar & Final Test Acc. & Epoch 1 Test Acc. & Final Train Loss \\
\hline\hline
\rowcolor{Gray}
$11 \times 32$ & 26.5\% & 0.27 & 89.12\% & 61.29\% & 0.3781 \\  
\hline
$11 \times 32$ & 10\% & 1.0 & 94.86\% & 70.67\% & 0.1278 \\
\hline\hline
\rowcolor{Gray}
$22 \times 16$ & 40\% & 0.4 & 93.01\% & 70.11\% & 0.2073 \\  
\hline
\rowcolor{LightCyan}
$22 \times 16$ & 20\% & 1.0 & 95.27\% & 76.58\% & 0.0924 \\  
\hline
$22 \times 16$ & 10\% & 1.0 & 94.26\% & 71.53\% & 0.1425 \\  
\hline\hline
\rowcolor{Gray}
$44 \times 8$ & 60\% & 0.7 & 94.44\% & 25.78\% & 0.1500 \\
\hline
$44 \times 8$ & 50\% & 1.0 & 95.05\% & 30.52\% & 0.1180 \\  
\hline
\end{tabular}
\caption{Results for different initialization settings - varying sparsity and magnitude of the component RNNs for different network sizes. All other settings remained constant across trials, using our selected 150 epoch training paradigm.}
\label{table:sparsity-compare}
\vspace{0.5cm}
\begin{tabular}{ | m{1cm} | m{1cm} | m{1cm} || m{1cm} | m{1cm} | m{1cm} | }
\hline
 Density & Pre-select Scalar & Post-select Scalar & Final Test Acc. & Epoch 1 Test Acc. & Final Train Loss \\
\hline\hline
10\% & 1.0 & 1.0 & 95.87\% & 73.67\% & 0.074 \\  
\hline\hline
5\% & 10.0 & 0.1 & 95.11\% & 73.10\% & 0.1311 \\
\hline
5\% & 10.0 & 0.2 & 96.15\% & 82.50\% & 0.0051 \\
\hline
\rowcolor{LightCyan}
5\% & 10.0 & 0.5 & 96.69\% & 75.76\% & 0.0001 \\
\hline
5\% &  6.0 & 1.0 & 96.41\% & 21.55\% & 3.3E-5 \\
\hline
5\% & 7.5 & 1.0 & 16.75\% & 11.39\% & 3068967 \\
\hline\hline
3.3\% & 30.0 & 0.1 & 96.24\% & 83.89\% & 0.0005 \\
\hline
\rowcolor{LightCyan}
3.3\% & 30.0 & 0.2 & 96.54\% & 86.79\% & 4E-5 \\
\hline\hline
1\% & 10.0 & 1.0 & 96.04\% & 81.2\% & 0.0001 \\
\hline
\end{tabular}
\caption{Further optimizing the sparsity settings for high performance using a $16 \times 32$ network. The final scalar is the product of the pre-selection and post-selection scalars. Note that the 5\% density and 7.5 scalar network was killed after 18 epochs due to exploding gradient. All other trials ran for a full 150 epochs.}
\label{table:best-results}
\end{table}

\begin{table}[!htbp]
\centering
\begin{tabular}{ | m{1cm} | m{1.5cm} | m{1.5cm} | m{1.5cm} | }
\hline
Epochs & Best Test Acc. & Epoch 1 Test Acc. & Best Train Loss \\
\hline\hline
208 & 96.65\% & 61.15\% & 0.00224 \\  
\hline
255 & 96.94\% & 84.19\% & 5e-5 \\  
\hline
250 & 96.88\% & 81.08\% & 5e-5 \\  
\hline
210 & 96.93\% & 67.19\% & 0.00069 \\  
\hline
\end{tabular}
\caption{Repeatability of the best network settings on permuted seqMNIST. Four trials of $16 \times 32$ networks with 3.3\% density and entries between -6 and 6, trained for a 24 hour period with a single learning rate cut (0.1 scalar) after epoch 200. All other training settings remained the same as our selected hyperparameters. Trials are presented in chronological order. The mean test accuracy achieved was 96.85\% with Variance 0.019.}
\label{table:psmnist-repeats}
\end{table}

\newpage

\subsubsection{seqCIFAR10 Trials}

\begin{table}[!htbp]
\centering
\begin{tabular}{ | m{1cm} | m{1cm} | m{1cm} | m{1cm} | m{1cm} | m{1cm} | m{2.5cm} | m{1cm} | }
\hline
Density & Pre-select Scalar & Post-select Scalar & Epochs & Adam WD & Initial LR & LR Schedule & Best Test Acc. \\
\hline\hline
3.3\% & 30 & 0.2 & 150 & 1e-5 & 1e-3 & 0.1 after 90,140 & 64.63\% \\  
\hline
3.3\% & 30 & 0.2 & 34* & 1e-5 & 5e-3 & 0.1 after 90,140 & 35.42\% \\
\hline
5\% & 6 & 1 & 150 & 1e-5 & 1e-3 & 0.1 after 90,140 & 60.9\% \\ 
\hline
5\% & 10 & 0.5 & 150 & 1e-5 & 1e-4 & 0.1 after 90,140 & 54.86\% \\ 
\hline
3.3\% & 30 & 0.2 & 150 & 1e-5 & 5e-4 & 0.1 after 90,140 & 61.83\% \\ 
\hline
3.3\% & 30 & 0.2 & 200 & 1e-6 & 2e-3 & 0.1 after 140,190 & 62.31\% \\ 
\hline
\rowcolor{LightCyan}
3.3\% & 30 & 0.2 & 186* & 1e-5 & 1e-3 & 0.1 after 140,190 & 64.75\% \\ 
\hline
3.3\% & 30 & 0.2 & 132* & 1e-6 & 1e-3 & 0.1 after 140,190 & 62.31\% \\ 
\hline
5\% & 10 & 0.5 & 195* & 1e-5 & 1e-3 & 0.1 after 140,190 & 64.68\% \\ 
\hline
\end{tabular}
\caption{Additional hyperparameter tuning for the seqCIFAR10 task, presented in chronological order. * indicates that training was cut short by the 24 hour Colab runtime limit, or in the case of high learning rate was killed intentionally due to exploding gradient. LR Schedule describes the scalar the learning rate was multiplied by, and at what epochs. The best performing network is highlighted. Ultimately we decided on the same network settings and training hyperparameters for further testing, just extending the training period to 200 epochs with the learning rate cuts occurring after epochs 90 and 140.}
\label{table:cifar}
\end{table}

\begin{table}[!htbp]
\parbox{.475\linewidth}{
\centering
\begin{tabular}{ | m{1cm} | m{1cm} | m{1cm} | m{1cm} | m{1cm} | }
\hline
Epochs & Best Test Acc. & Epoch 1 Test Acc. & Best Train Loss \\
\hline\hline
186 & 64.75\% & 10.53\% & 0.83 \\  
\hline
200 & 64.04\% & 26.35\% & 0.787 \\ 
\hline
200 & 64.32\% & 26.76\% & 0.778 \\ 
\hline
170 & 64.88\% & 20.65\% & 0.857 \\ 
\hline
200 & 65.72\% & 27.28\% & 0.813 \\ 
\hline
200 & 63.73\% & 10.47\% & 0.826 \\ 
\hline
200 & 65.03\% & 18.75\% & 0.83 \\ 
\hline
200 & 64.71\% & 32.36\% & 0.799 \\ 
\hline
200 & 64.4\% & 10.09\% & 0.83 \\ 
\hline
200 & 65.63\% & 30.25\% & 0.792 \\ 
\hline
\end{tabular}
\caption{Repeatability of the best network settings on seqCIFAR10. Ten trials of $16 \times 32$ networks with 3.3\% density and entries between -6 and 6, trained for 200 epochs with learning rate scaled by 0.1 after epochs 140 and 190. All other training settings remained the same as before. Trials are presented in chronological order. The mean test accuracy achieved was 64.72\% with Variance 0.406. Most trials completed all 200 epochs, but two were cut short due to runtime limits.}
\label{table:cifar-repeats}}
\hfill
\parbox{.475\linewidth}{
\centering
\begin{tabular}{ | m{1cm} | m{1cm} | m{1cm} | m{1cm} | m{1cm} | }
\hline
Epochs & Best Test Acc. & Epoch 1 Test Acc. & Best Train Loss \\
\hline\hline
150 & 64.63\% & 28.91\% & 0.837 \\  
\hline
150 & 63.51\% & 9.7\% & 0.9 \\ 
\hline
148 & 62.35\% & 17.51\% & 0.89 \\ 
\hline
126 & 63.33\% & 23.61\% & 0.903 \\ 
\hline
129 & 61.45\% & 28.93\% & 0.937 \\ 
\hline
150 & 62.21\% & 25.82\% & 0.9 \\ 
\hline
150 & 63.42\% & 23.51\% & 0.86 \\ 
\hline
147 & 62.05\% & 27.67\% & 0.882 \\ 
\hline
131 & 62.41\% & 30.33\% & 0.912 \\ 
\hline
\end{tabular}
\caption{An additional nine trials investigating the repeatability of our results on the seqCIFAR10 task. For these trials we used the same 150 epoch training paradigm as previously, although only four of the networks were able to fully complete training. These trials were done to expand our sample size while access was limited to only slower GPUs. The mean observed test accuracy among the shorter trials was 62.82\%, with variance of 0.95.}
\label{table:cifar-repeats-short}}
\end{table}

\newpage

\section{SVD Combo Net Details}\label{Appendix:SVDNet}
\subsection{Parameterization Information}

For the SVD Combo Net, we ensured contraction by directly parameterizing each of the $\mathbf{W}_i$ ($i = 1,2 \dots p$) as:

\begin{equation}\label{eq: W_param}
\begin{split}
\mathbf{W}_i =  \mathbf{\Phi}_i^{-1} \mathbf{U}_i \mathbf{\Sigma}_i \mathbf{V}_i^T \mathbf{\Phi}_i
\end{split}
\end{equation}

where $\mathbf{\Phi}_i$ is diagonal and nonsingular, $\mathbf{U}_i$ and $\mathbf{V}_i$ are orthogonal, and $\mathbf{\Sigma}_i$ is diagonal with $\Sigma_{ii} \in [0,g^{-1})$. We ensure orthogonality of $\mathbf{U}_i$ and $\mathbf{V}_i$ during training by exploiting the fact that the matrix exponential of a skew-symmetric matrix is orthogonal, as was done in \citep{lezcano2019cheap}. The network constructed from these subnetworks using \eqref{eq:combo_RNN} is contracting in metric $\Tilde{\mathbf{M}} = \text{BlockDiag}(\mathbf{\Phi}^2_1, \dots, \mathbf{\Phi}^2_p)$.

\subsection{Control Experiment}

To do the SVD Combo Network stability control experiment, we still constrained the weights of the subnetwork modules using the above parameterization, but we removed any constraint on the connections between modules. Thus each individual module was still guaranteed to be contracting, but the overall system had no such guarantee in this control trial. The experiment was run using a $24 \times 32$ combination network and the permuted sequential MNIST task.

In contrast to the primary control experiment run on Sparse Combo Net, for the SVD Combo Net we saw only a very slight performance decrease when the between-module connections were no longer constrained for the overall stability certificate. Test accuracy in the control run was $94.56\%$, as compared to the $94.9\%$ observed with the standard SVD Combo Net. 

This disparity in performance decrease makes sense when considering that the hidden-to-hidden weights of SVD Combo Network are trainable, while those of the Sparse Combo Network are not. Whatever instabilities are introduced by the lack of constraint on the inter-subnetwork connections likely cannot be adequately compensated for in the Sparse Combo Network. 

\subsection{Model Code}
\label{Appendix:Model}

\begin{figure}[h]\label{fig: SVD_Combo_Net_cell}
\centering
\includegraphics[width =\textwidth,keepaspectratio]{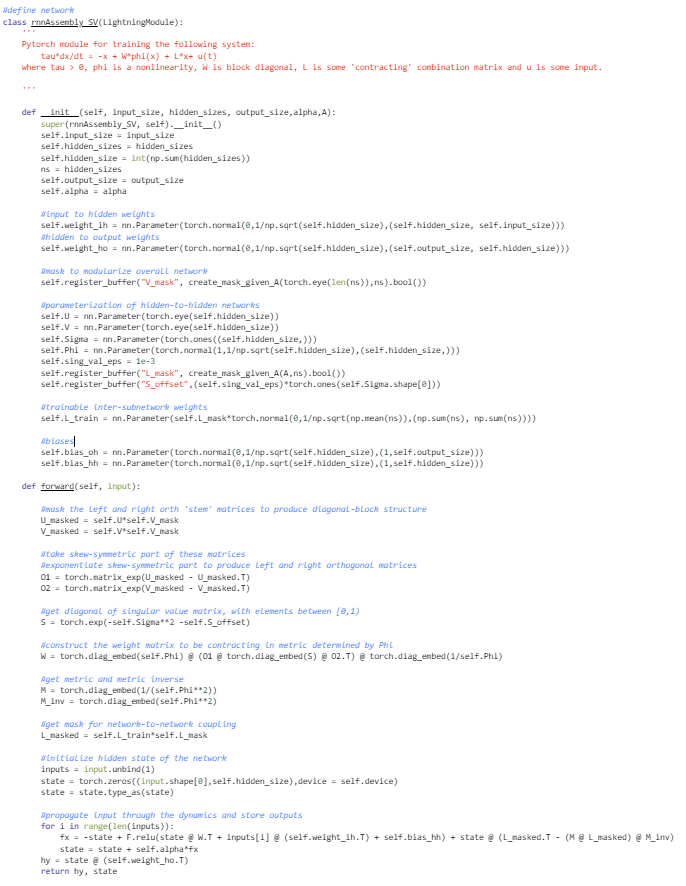}
\caption{Pytorch Lightning code for SVD Combo Net cell.} 
\end{figure}

\end{document}